\theoremstyle{plain}
\newtheorem{theorem}{Theorem}[section]
\newtheorem{lemma}[theorem]{Lemma}
\newtheorem{corollary}[theorem]{Corollary}
\theoremstyle{definition}
\newtheorem{assumption}[theorem]{Assumption}
\theoremstyle{remark}
\DeclareMathOperator{\Tr}{Tr}
\DeclareMathOperator*{\argmax}{arg\,max}
\DeclareMathOperator*{\argmin}{arg\,min}
\icmltitlerunning{Investigating Why Contrastive Learning Benefits Robustness against Label Noise}
\begin{document}

\twocolumn[
\icmltitle{
Investigating Why Contrastive Learning Benefits Robustness \\against Label Noise}




\begin{icmlauthorlist}
\icmlauthor{Yihao Xue}{ucla}
\icmlauthor{Kyle Whitecross}{ucla}
\icmlauthor{Baharan Mirzasoleiman}{ucla}
\end{icmlauthorlist}

\icmlaffiliation{ucla}{Department of Computer Science, University of California, Los Angeles, CA 90095, USA}

\icmlcorrespondingauthor{Yihao Xue}{yihaoxue@g.ucla.edu}
\icmlcorrespondingauthor{Kyle Whitecross}{kswhitecross@g.ucla.edu}
\icmlcorrespondingauthor{Baharan Mirzasoleiman}{baharan@cs.ucla.edu}

\icmlkeywords{Machine Learning, ICML}

\vskip 0.3in
]



\printAffiliationsAndNotice{}

\begin{abstract}
Self-supervised Contrastive Learning (CL) has been recently shown to be very effective in preventing deep networks from overfitting noisy labels. Despite its empirical success, the theoretical understanding of the effect of contrastive learning on boosting robustness is very limited. In this work, we rigorously prove that the representation matrix learned by contrastive learning boosts robustness, by having: (i) one prominent singular value corresponding to each sub-class in the data, and significantly smaller remaining singular values; and (ii) {a large alignment between the prominent singular vectors and the clean labels of each sub-class. The above properties enable a linear layer trained on such representations to effectively learn the clean labels without overfitting the noise.} We further show that the low-rank structure of the Jacobian of deep networks pre-trained with contrastive learning allows them to achieve a superior performance initially, when fine-tuned on noisy labels. Finally, we demonstrate that the initial robustness provided by contrastive learning enables robust training methods to achieve state-of-the-art performance under extreme noise levels, e.g., an average of 27.18\% and 15.58\% increase in accuracy on CIFAR-10 and CIFAR-100 with 80\% symmetric noisy labels, and 4.11\% increase in accuracy on WebVision.     
\end{abstract}
\section{Introduction}
Large datasets have enabled deep neural networks to achieve a remarkable success in various domains, such as vision and natural language processing \cite{imagenet_cvpr09,floridi2020gpt}. However, this success is highly dependent on the quality of the training labels.
As datasets grow, manual labeling of data becomes prohibitive and the commonly used web-crawling, crowd-sourcing, and automated data labeling techniques result in noisy labels being ubiquitous in large real-world datasets \cite{krishna2016embracing}.
Over-parameterized networks trained with first-order gradient methods can fit any (even random) labeling of the training data \cite{zhang2016understanding}. 
Hence, noisy labels drastically degrade the generalization performance of deep models. To address this, techniques than can robustly learn from noisy labeled data has attracted a lot of attention in recent years \cite{C2D,li2020dividemix,zhang2020distilling,cao2020heteroskedastic,mirzasoleiman2020coresets}.


Classical work on robust learning from noisy labels is mainly focused on estimating the noise transition matrix \cite{goldberger2016training, patrini2017making}, designing robust loss functions \cite{ghosh2017robust, van2015learning, wang2019imae, zhang2018generalized}, correcting noisy labels \cite{ma2018dimensionality, reed2014training, tanaka2018joint,li2020dividemix}, using explicit regularization techniques \cite{cao2020heteroskedastic,zhang2020distilling, zhang2017mixup,liu2020early}, and selecting or reweighting training examples \cite{chen2019understanding, han2018co, jiang2018mentornet, malach2017decoupling, ren2018learning, wang2019imae,mirzasoleiman2020coresets}.
However as the level of noise increases, these techniques become highly ineffective.

Very recently, self-supervised contrastive learning 
has shown a lot of promise in boosting robustness of deep networks against noisy labels. 
Contrastive learning discards all the labels, and learns representations by maximizing agreement between differently augmented views of the same data point via a contrastive loss in the latent space \cite{paper_simclr}. Then a linear layer is trained on the representations with the (potentially noisy) labels in a supervised manner. Empirically, networks trained in this way enjoy a superior degree of robustness against noisy labels \cite{C2D,hendrycks2019usingssl, ghosh2021contrastive}.

Despite its empirical success, the theoretical understanding of the effect of contrastive learning on 
improving robustness of deep networks against noisy labels
is very limited. 
To the best of our knowledge, the only existing theoretical result is on training a binary classifier on pre-trained embeddings obeying a Gaussian distribution \cite{demystifying}. The corresponding theory is, however, derived under very limited assumptions, 
and does not use any properties of self-supervised or contrastive learning.

{In this work, we address the above limitations by theoretically characterizing the beneficial properties of representations obtained by contrastive learning {for enhancing robustness against noisy labels}.
We prove that contrastive learning produces
a representation matrix that has: (i) 
a prominent singular value corresponding to each sub-class in the data, 
and {significantly smaller} remaining singular values; 
and (ii) a large alignment between the prominent singular vectors and the ground-truth labels.
Then we analyze the case where a linear model is trained on the obtained representations with labels that are either perturbed with 
Gaussian noise, or flipped at random to other classes. We show that noise has minimal effect on learning the clean labels and the model can hardly memorize the wrong labels.}


We further show that 
deep networks pre-trained with contrastive learning and fine-tuned on noisy labels can achieve a superior performance initially, before overfitting the noise. 
{
This is attributed to the initial low-rank structure of the Jacobian. Contrastive pre-training produces a Jacobian matrix with a larger gap between the prominent singular values and the remaining smaller ones, compared to a randomly initialized network.  
This gap effectively slows down overfitting at the early phase of training. }
Finally, we demonstrate that the initial robustness provided by contrastive learning can be further leveraged by robust methods to achieve state-of-the-art performance under extreme levels of noise. 
Such methods do not let the low-rank Jacobian matrix to overfit the noise, even after a long number of training iterations.


We conduct extensive experiments on noisy CIFAR-10 and CIFAR-100 \cite{krizhevsky2009learning}, where noisy labels are generated by random flipping the original ones, and the mini Webvision datasets \cite{li2017webvision} which is a benchmark consisting of images crawled from websites, containing real-world noisy labels. We show that contrastive learning enables robust training methods to 
achieve state-of-the-art performance, e.g., 
an average of 27.18\% and 15.58\% increase in accuracy
on CIFAR-10 and CIFAR-100 with 80\% symmetric noisy labels, and 4.11\% increase in accuracy on WebVision.\looseness=-1
\section{Additional Related Work}

\paragraph{Contrastive learning and robustness against noise.}
Recent 
empirical results demonstrated the effectiveness of self-supervised learning in improving robustness of deep models against 
adversarial examples, label corruption and input corruption \cite{hendrycks2019usingssl}.
Contrastive learning has been also shown to 
boost robustness of existing supervised methods \cite{ghosh2021contrastive, C2D} to learn with noisy labels. Notably, \citet{C2D} found a large improvement by combining contrastive learning with two state-of-the-art methods, namely ELR \cite{liu2020early} and DivideMix \cite{li2020dividemix}. 


Despise the recent success of contrastive learning in improving robustness of deep networks, a theoretical explanation is yet to be found. 
Very recently, \citet{demystifying} analyzed the performance of a linear binary classifier 
trained on the embeddings obtained by self-supervised learning. However, their results are based on the assumption that the embeddings follow a Gaussian distribution. 
Nevertheless, the validity of such assumption and its relation to self-supervised learning is not justified.
In contrast, we 
rigorously prove that contrastive learning extracts the underlying sub-class structure from the augmented data distribution and encodes it into the embeddings. This guarantees the robustness of the downstream supervised learning task.

\paragraph{Theoretical works on self-supervised learning.}
A recent line of theoretical works have studied self-supervised learning
\cite{arora2019theoretical,tosh2021contrastive, paper_ssl_contra_loss}. In particular, it is shown that under conditional independence between positive pairs given the label and/or additional latent variables, representations learned by reconstruction-based self-supervised learning algorithms can achieve small errors in the downstream linear classification task \cite{arora2019theoretical,tosh2021contrastive}. More closely related to our work is the recent result of \citet{paper_ssl_contra_loss} that analyzed contrastive learning without assuming conditional independence of positive pairs. Based on the concept of augmentation graph, they showed that spectral decomposition on the augmented distribution leads to embeddings with provable accuracy guarantees under linear probe evaluation. Here, we further leverage the properties of the augmentation graph and provide rigorous robustness guarantees for the performance of linear models trained with representations learned by self-supervised contrastive learning on noisy labels.
\section{Problem Formulation and Background}\label{sec:formulation}
Suppose we have a dataset $\mathcal{D} = \{(\pmb{x}_i, \pmb{y}_i)\}_{i=1}^{n}$, 
where $(\pmb{x}_i, \pmb{y}_i)$ denotes the $i$-th sample with input $\pmb{x}_i \in \mathbb{R}^d$ and its clean one-hot encoded label $\pmb{y}_i \in \mathbb{R}^K$ corresponding to one of the $K$ classes. For example, for a data point $\pmb{x}_i$ from class $j\in[K]$, we have $\pmb{y}_i=\pmb{e}_j$ where $\textbf{e}_j$ denotes the vector with a 1 in the $j$th coordinate and 0’s elsewhere.
We further assume that there are $\bar{K}\geq K$ sub-classes in the data. 
Sub-classes of a class share the same label, but are distinguishable from each other. 
For example, apple and orange could be two sub-classes of the class fruit.

We assume that for every data point $\pmb{x}_i$, we only observe a noisy version of its label $\hat{\pmb{y}}_i$. The noise $\Delta \pmb{y}_i$ can be either generated from a Gaussian distribution $\Delta \pmb{y}_i=\mathcal{N}(0,\sigma^2 \pmb{I}_n/K)$, or by randomly flipping the label to one of the other classes. For example for a data point $\pmb{x}_i$ whose label is flipped from class $j$ to $k$, we have $\Delta \pmb{y}_i=\pmb{e}_k-\pmb{e}_j$.
We denote by $\pmb{Y},\hat{\pmb{Y}}\in\mathbb{R}^{n\times K}$ the matrices of all the one-hot encoded clean and noisy labels of the training data points. 

We consider the case where the representations are learned with self-supervised contrastive learning, and then a linear layer is trained with the representations on the noisy labels.

\subsection{Self-supervised Contrastive Learning}\label{sec:contrastive}
Self-supervised contrastive learning learns representations of different data points by maximizing agreement between differently augmented views of the same example {and minimizing agreement between differently augmented views of different examples}. This is achieved via a contrastive loss in the latent space, as we discuss below.

\vspace{-3mm}\paragraph{Augmentation graph.}
The augmentations of different data points can be used to construct the \textit{population augmentation graph} \cite{paper_ssl_contra_loss}, 
whose vertices are all the augmented data points in the population distribution, and two vertices are connected with an edge if they are augmentations of the same natural (original) example.
Hence, ground-truth classes naturally form connected sub-graphs. Formally, let $P$ be the distribution of all natural data points (raw inputs without augmentation). For a natural data point $\pmb{x}^*\sim P$, let $\mathcal{A}(\cdot|\pmb{x}^*)$ be the distribution of $\pmb{x}^*$'s augmentations.
For instance, when $\pmb{x}^*$ represents an image, $A(.|\pmb{x}^*)$ can be the distribution of common augmentations \cite{paper_simclr} including Gaussian blur,
color distortion and random cropping.
Then, for an augmented data point $\pmb{x}$,  $\mathcal{A}(\pmb{x}|\pmb{x}^*)$ is the probability of generating $\pmb{x}$ from $\pmb{x}^*$. The edge weights $w_{\pmb{x}_i\pmb{x}_j}=\mathbb{E}_{\pmb{x}^*\sim P}[\mathcal{A}(\pmb{x}_i|\pmb{x}^*)\mathcal{A}(\pmb{x}_j|\pmb{x}^*)]$ can be interpreted as the marginal probability of generating $\pmb{x}_i$ and $\pmb{x}_j$ from a random natural data point.

\vspace{-3mm}\paragraph{Contrastive loss.}
The embeddings produced by contrastive learning can be viewed as a low-rank approximation of the normalized augmentation graph.
Effectively, minimizing a loss that performs spectral decomposition on the population augmentation graph can be succinctly written as a contrastive learning objective $\mathfrak{C}(f)$ on neural network representations \cite{paper_ssl_contra_loss}: 
\begin{align}\label{eq:contrastive}
    \mathfrak{C}(f) \!=\! -2\mathbb{E}_{\pmb{x},\pmb{x}^+}[f(\pmb{x})^{\top}\!\!\!, f(\pmb{x}^+)]+\mathbb{E}_{\pmb{x},\pmb{x}^-}\![\big((f(\pmb{x})^{\top}\!\!\!,f(\pmb{x}^-)\big)^2],
\end{align}
where $f(\pmb{x})\in\mathbb{R}^p$ is the 
neural network representation for an input $\pmb{x}$, and $\pmb{x},\pmb{x}^+$ are drawn from the augmentations of the same natural data point, and $\pmb{x},\pmb{x}^-$ are two augmentations generated independently either from the same data point or two different data points. 
The above loss function is similar to many standard contrastive loss functions \cite{oord2018representation,sohn2016improved,wu2018unsupervised}, including SimCLR \cite{paper_simclr} that we will use in our experiments.
Minimizing this objective leads to representations with provable accuracy guarantees under linear probe evaluation. 
We use $f_{\min}$ to denote the minimizer, i.e., $f_{\min} = \argmin_f \mathfrak{C}(f)$.

\subsection{Training the Linear Head with Label Noise} 
Here, we introduce the notations for training a linear classifier on the representations learned by contrastive learning, based on which we perform theoretical analysis. 
In Section \ref{sec: training_the_whole}, we discuss how our idea can be extended to understand the performance of fine-tuning all the layers of the neural network.\looseness=-1

{We assume the representations are given by $f_{\min}$, the global minimizer of the contrastive loss. In practice, this is easier to be achieved by larger networks trained for longer \cite{paper_simclr}. In Section \ref{sec:exp_cifar}, we confirm superior robustness of representations learned by larger networks against noisy labels by our experiments.
Given 
a matrix $\pmb{F}\in \mathbb{R}^{n\times p}$ where each row $\pmb{F}_i=f_{\min}(\pmb{x}_i)^{\top}$ is the learned representation of a data point $\pmb{x}_i$, we consider the downstream task of training a linear model, parameterized by $\pmb{W}\in \mathbb{R}^{p\times K}$, to minimize the MSE loss with $l_2$ regularization with parameter $\beta$ 
\begin{equation}
\label{eq:objective}
    \min_{\pmb{W}\in \mathbb{R}^{p\times K}} \|\hat{\pmb{Y}}-\pmb{F}\pmb{W}\|_F^2 + \beta\|\pmb{W}\|.
\end{equation}
Let $\hat{\pmb{W}}^*$ denote the solution that has the following closed-form expression
\begin{align}
    \label{eq:closedform_w}
    \hat{\pmb{W}}^* = (\pmb{F}^{\top} \pmb{F}+\beta\pmb{I})^{-1}\pmb{F}^{\top}\hat{\pmb{Y}}.
\end{align}
While we use MSE in our analysis, we empirically show that our results hold for other losses, such as cross-entropy.} \looseness=-1
\section{Contrastive learning Boosts Robustness}
In this section we first show that training a linear head on representations learned by contrastive learning is provably robust to label noise. Then we {look into the phenomenon} that fine-tuning the deep network pre-trained by contrastive learning achieves a superior performance at early phase of training.
{
Finally, we discuss how the initial robustness provided by
contrastive learning boosts robust training methods, and corroborate this with extensive experiments in Section \ref{sec: experiments}.} \looseness=-1

\begin{figure*}[t]
    \centering
    \includegraphics[width=0.268\textwidth]{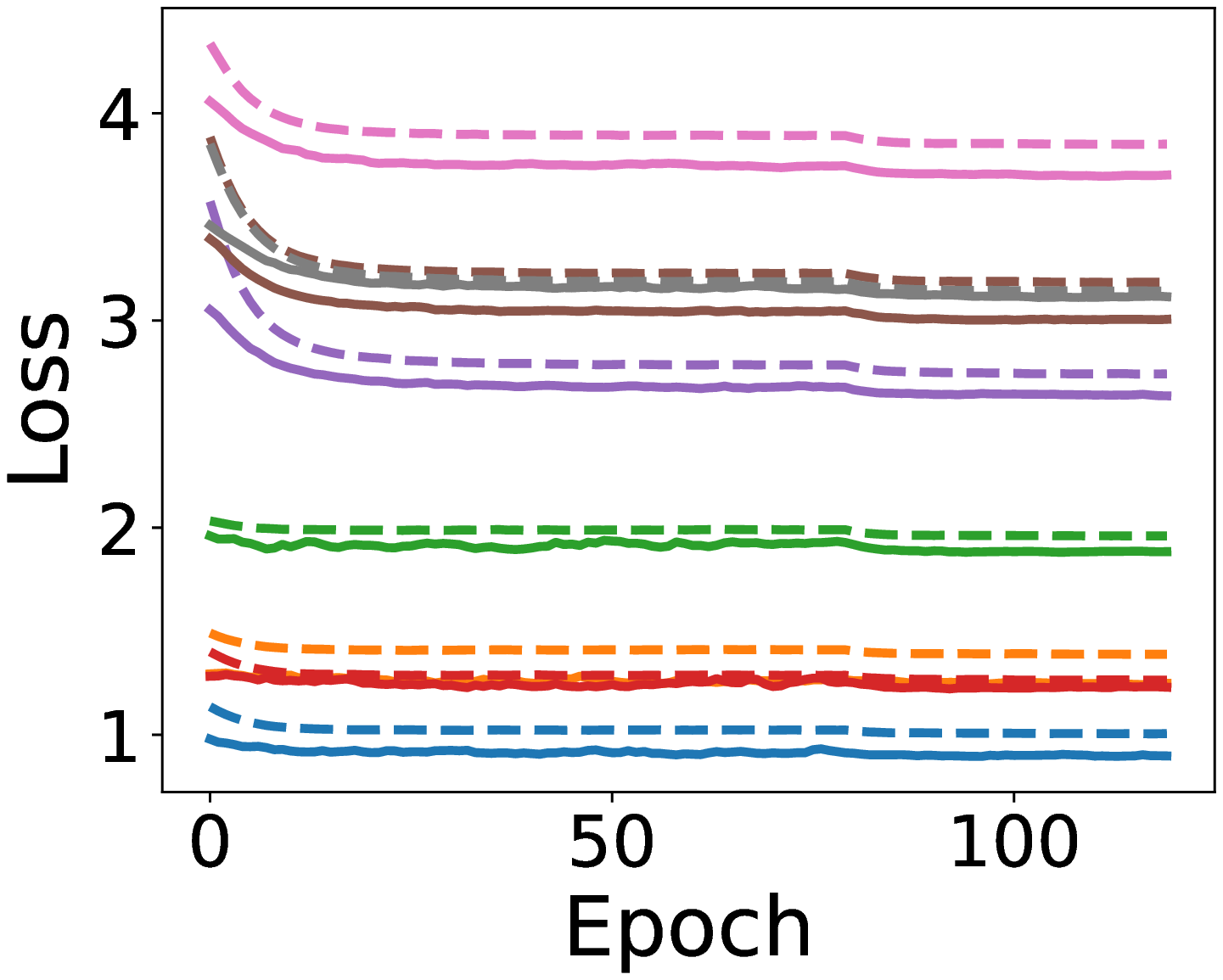}
    \hspace{.6cm}
    \includegraphics[width=0.557\textwidth]{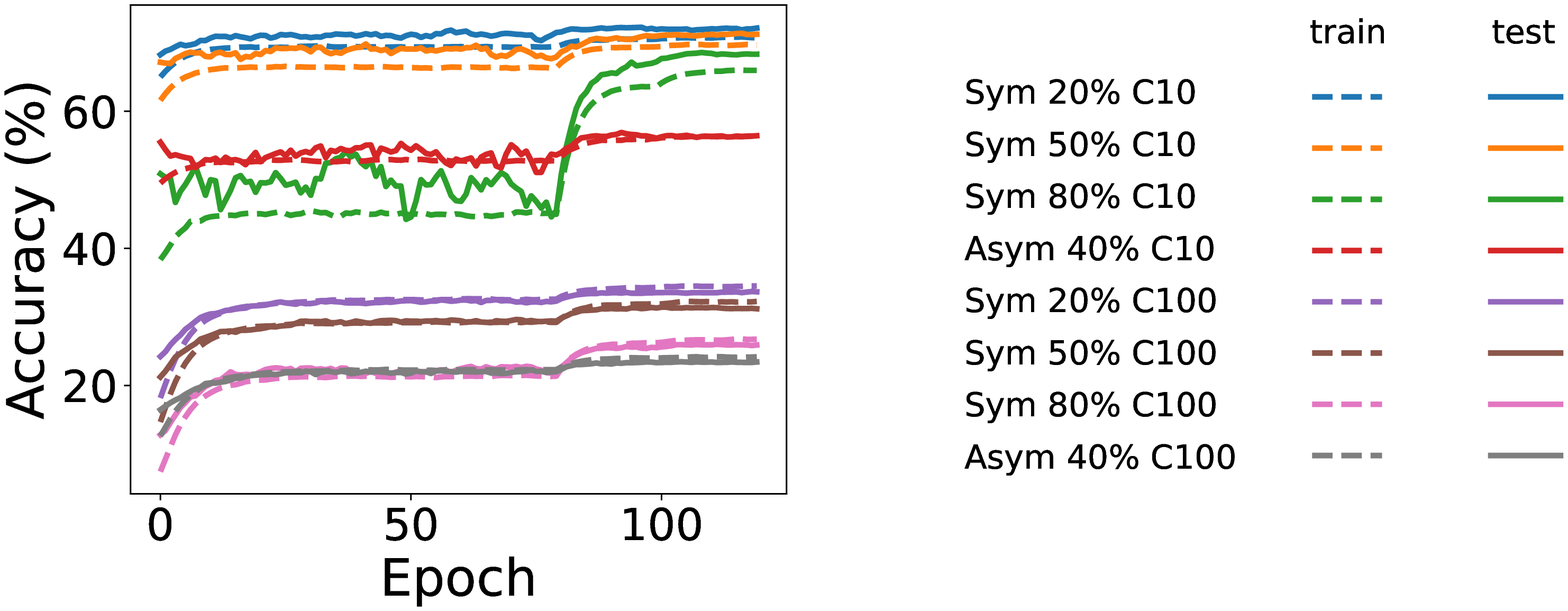}
    \vspace{-.2cm}
    \caption{Training accuracy w.r.t. ground-truth labels and test accuracy of a linear classifier trained on representations learned by contrastive learning (SimCLR). Experiments are conducted on CIFAR-10 (C10) and CIFAR-100 (C100) under different noise levels. Dashed lines show loss and accuracy on training set w.r.t. ground-truth labels and solid lines show test loss and accuracy.}
    \label{fig:train_acc_gt}
\end{figure*}

\subsection{Provable Robustness of the Linear Head}

{
To understand the robustness provided by contrastive learning, 
we assume
certain properties of the augmentation graph
and analyze the low-rank structure of the the resulting representation matrix.}
In particular, we utilize the following natural assumptions that formalize the following two properties on the data augmentation: (1) the augmented examples of one sub-class are similar to each other; and (2) the augmented examples of one sub-class are different from the augmented examples of other sub-classes.

\begin{assumption}[\textbf{Compact sub-class structure}]\label{assump: ratio} For a triple of augmented examples $\pmb{x}_j$, $\pmb{x}_s$ and $\pmb{x}_t$ from the same sub-class, 
the marginal probability of $\pmb{x}_s$, $\pmb{x}_j$ being generated from a natural data point is close to that of $\pmb{x}_t$, $\pmb{x}_j$. Formally, 
we have 
${w_{\pmb{x}_s \pmb{x}_j}}/{w_{\pmb{x}_t \pmb{x}_j}}\in[\frac{1}{1+\delta}, 1+\delta]$, for small $\delta\in[0,1)$.
\end{assumption}
 
\begin{assumption}[\textbf{Distinguishable sub-class structure}]\label{assump: block_with_off_diag} For two pairs of augmentated examples $(\pmb{x}_i, \pmb{x}_j)$ and $(\pmb{x}_s, \pmb{x}_t)$ where $\pmb{x}_i$, $\pmb{x}_j$ are from different sub-classes and $\pmb{x}_s$, $\pmb{x}_t$ are from the same sub-class, 
the marginal probability of $\pmb{x}_i$, $\pmb{x}_j$ being generated from a natural data is much smaller than that of $\pmb{x}_s$, $\pmb{x}_t$. Formally, 
we have 
$w_{\pmb{x}_i \pmb{x}_j}/{w_{\pmb{x}_s \pmb{x}_t}} \leq \xi$, for small $\xi\in[0,\! 1)$. 
\end{assumption}
The above assumptions result in an augmentation graph where augmented data points from different subclasses form nearly disconnected subgraphs with similar edge weights.
In particular for $\xi=0$, we get diconnected subgraph structure.


\subsubsection{\!\!Desirable Properties \!of Cl \!Representations}\label{sec:desirable}
{
The key to our analysis is that, based on compact and distinguishable sub-class structure assumptions \ref{assump: ratio}, \ref{assump: block_with_off_diag}, contrastive learning produces a low-rank representation matrix $\pmb{F}$ that captures the sub-class structure. More formally, the representation matrix has $\bar{K}$ singular vectors that align well with the ground-truth labels, and the corresponding $\bar{K}$ singular values are significant larger than the other singular values.
The following theorem is a summary of Lemmas \ref{lemma:largest_egvalue} \ref{lemma: perron_vec} \ref{lemma: perturb_eigenvalue} \ref{lemma: perturb_proj} and Corollary \ref{corollary:sum_smallest} in the Appendix which details the desirable properties of the representation matrix.} 

\begin{theorem}\label{theo:informal}
Having $\bar{K}$ compact and distinguishable sub-classes in the data,
the representation matrix $\pmb{F}$ learned by contrastive learning has $\bar{K}$ prominent singular values of magnitude $\mathcal{O}(1)$. At the same time, the sum of the remaining singular values is significantly smaller, i.e., $\mathcal{O}(\sqrt{\delta}+\xi)$. Furthermore, the most prominent $\bar{K}$ singular vectors and the ground-truth labels has a $\mathcal{O}(1)$ alignment, measured by the normalized projection of the clean labels $~\pmb{Y}$ onto the span of the singular vectors. 
\end{theorem}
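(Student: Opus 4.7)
The plan is to connect the minimizer $f_{\min}$ of the contrastive loss to a spectral decomposition of the population augmentation graph, then use matrix perturbation theory to show that an "idealized" block structure is preserved. By the result of \citet{paper_ssl_contra_loss}, the minimizer of $\mathfrak{C}(f)$ yields a representation matrix $\pmb{F}$ such that $\pmb{F}\pmb{F}^{\top}$ equals the best rank-$p$ approximation of a suitably normalized weighted adjacency matrix $\pmb{A}$ of the population augmentation graph (with entries $w_{\pmb{x}_i\pmb{x}_j}$, normalized by the square root of the degree). Thus the singular values and singular vectors of $\pmb{F}$ are (up to a square root) precisely the top eigenvalues and eigenvectors of $\pmb{A}$, and the theorem reduces to a spectral statement about $\pmb{A}$.

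Next I would build an idealized matrix $\pmb{A}^{\mathrm{id}}$ that reflects the ``exact'' sub-class structure: set all cross-sub-class weights to zero (corresponding to $\xi=0$) and replace within-sub-class weights by their average (corresponding to $\delta=0$). After the same normalization, $\pmb{A}^{\mathrm{id}}$ is block diagonal with $\bar{K}$ rank-one blocks, one per sub-class. Its nonzero eigenvalues are therefore $\bar{K}$ values of order $\Theta(1)$ (this is the content I would place in Lemma~\ref{lemma:largest_egvalue}), and the corresponding eigenvectors are (scaled) indicator vectors of the sub-classes (Lemma~\ref{lemma: perron_vec}, essentially a Perron--Frobenius argument on each block). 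Since every point in a sub-class carries the same one-hot clean label, the columns of $\pmb{Y}$ lie entirely in the span of these ideal eigenvectors, giving a perfect alignment in the idealized case.

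The actual matrix $\pmb{A}$ is a perturbation of $\pmb{A}^{\mathrm{id}}$ whose entries I would bound using Assumptions~\ref{assump: ratio} and~\ref{assump: block_with_off_diag}: the within-block entries deviate by a multiplicative factor of at most $\delta$ (yielding an entry-wise error of order $\delta$ inside each block), and the off-block entries are at most $\xi$ times the in-block entries (yielding an entry-wise error of order $\xi$ outside). Summing these contributions and taking a Frobenius norm gives $\|\pmb{A}-\pmb{A}^{\mathrm{id}}\|_F = \mathcal{O}(\sqrt{\delta}+\xi)$; the $\sqrt{\delta}$ exponent is the technically delicate point, because the in-block perturbation is dense but its entries are uniformly small, so Frobenius accounting gives the square root rather than a linear dependence. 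Weyl's inequality then transfers this to the $\bar{K}$ large eigenvalues, keeping them at $\Theta(1)$, and by the Hoffman--Wielandt inequality together with the fact that $\pmb{A}^{\mathrm{id}}$ has rank $\bar{K}$, the sum of the remaining singular values of $\pmb{A}$ is controlled by $\|\pmb{A}-\pmb{A}^{\mathrm{id}}\|_F = \mathcal{O}(\sqrt{\delta}+\xi)$ (this would become Lemma~\ref{lemma: perturb_eigenvalue} and Corollary~\ref{corollary:sum_smallest}).

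Finally, for the alignment claim I would apply a Davis--Kahan style bound (Lemma~\ref{lemma: perturb_proj}). Since the top $\bar{K}$ eigenvalues of $\pmb{A}^{\mathrm{id}}$ are $\Theta(1)$ and the rest are zero, the spectral gap is $\Theta(1)$; combined with the perturbation bound above, the principal $\bar{K}$-dimensional eigenspace of $\pmb{A}$ is within $\mathcal{O}(\sqrt{\delta}+\xi)$ of that of $\pmb{A}^{\mathrm{id}}$ in operator norm. Since $\pmb{Y}$ lies exactly in the ideal subspace, its projection onto the complement of the perturbed subspace is small, which is precisely the alignment statement. The main obstacle I anticipate is the delicate Frobenius accounting that produces the $\sqrt{\delta}$ rather than $\delta$ rate: one must correctly exploit that the in-block perturbation, while supported on an $\Omega(n/\bar K)$-sized block, has uniformly small entries whose sum of squares scales linearly in $\delta$.
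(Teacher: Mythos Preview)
Your overall strategy is sound and closely parallels the paper's, but there is one structural difference and one accounting error worth noting. The paper does a \emph{two-step} analysis: it first treats the case $\xi=0$ directly (not by perturbation), exploiting that $\overline{\pmb{A}}$ is exactly block diagonal and applying Perron--Frobenius to each block $\overline{\pmb{A}}_{\bar{k}}$ to control the leading eigenvector (Lemma~\ref{lemma: perron_vec}) and the tail eigenvalues via $\sum_{i\ge 2}\lambda_{\bar{k},i}^2=\|\overline{\pmb{A}}_{\bar{k}}\|_F^2-1\le \mathcal{O}(\delta)$ followed by Cauchy--Schwarz (Lemma~\ref{lemma: bounded_squared_eigenv}, Corollary~\ref{corollary:sum_smallest}); only then does it perturb for $\xi>0$ using Weyl and Wedin (Lemmas~\ref{lemma: perturb_eigenvalue},~\ref{lemma: perturb_proj}). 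You instead take the fully idealized $\delta=\xi=0$ matrix as the base and do a single Weyl/Hoffman--Wielandt/Davis--Kahan perturbation. Both routes work; yours is a bit cleaner conceptually, while the paper's separates the two parameters and makes the role of Perron--Frobenius explicit.

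Your Frobenius accounting for the in-block perturbation is incorrect, however. Under Assumption~\ref{assump: ratio} the normalized entries satisfy $\bar{a}_{ij}=1/n_{\bar{k}}+\mathcal{O}(\delta/n_{\bar{k}})$, so the sum of squared deviations over the $n_{\bar{k}}^2$ entries is $\mathcal{O}(\delta^2)$, not $\mathcal{O}(\delta)$; hence $\|\pmb{A}-\pmb{A}^{\mathrm{id}}\|_F=\mathcal{O}(\delta+\xi)$ rather than $\mathcal{O}(\sqrt{\delta}+\xi)$. This error is harmless for the stated theorem (since $\delta\le\sqrt{\delta}$ for $\delta<1$), and in fact your route would yield a tighter eigenvalue bound than the paper's. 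The paper's $\sqrt{\delta}$ arises differently: it bounds $\|\overline{\pmb{A}}_{\bar{k}}\|_F^2-1\le \mathcal{O}(\delta)$ column-by-column (a somewhat loose estimate that does not exploit cancellations), and then Cauchy--Schwarz over $p_{\bar{k}}-1$ terms turns this into $\sum_{i\ge 2}\lambda_{\bar{k},i}\le \mathcal{O}(\sqrt{\delta})$. So the ``delicate $\sqrt{\delta}$'' you flag is an artifact of the paper's particular bookkeeping, not an intrinsic feature of the problem.
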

{
Intuitively, the above three properties of the representation matrix affect the downstream training in the following sense:
{(1) the magnitude of largest singular values} determines the speed at which the model evolves as well as the extent to which the model can fit the training data;
{(2) the alignment between prominent singular vectors and clean labels} indicates whether the model evolves in the right direction; and
{(3) the magnitude of smaller singular values} dictates the amount of overfitting. As a result, Theorem \ref{theo:informal} implies that the model trained on such representation learns mainly the correct information from the training data, which we formally show in Theorems \ref{theorem: gaussian_loss} and \ref{theorem: flipping_accuracy}.
}

\subsubsection{{\!\!Training performance w.r.t. Ground-truth Labels 
Reflects Robustness}}
{To simplify the theoretical analysis, instead of studying the generalization performance (usually measured by the expected loss over the data distribution), we will examine the loss and accuracy on the training data w.r.t. ground-truth labels. This strongly correlates with the test accuracy, especially under large noise. We empirically confirm this correlation in Figure \ref{fig:train_acc_gt}, where the dashed lines show training loss and training accuracy w.r.t. ground-truth labels, and solid lines show test loss and test accuracy.} We clearly see the high correlation between training and performance, 
in particular under significant levels of label noise.

\subsubsection{Gaussian Label Noise}
We first consider the case where label noise is generated from a Gaussian distribution.
Formally, $\hat{\pmb{Y}}=\pmb{Y}+\Delta \pmb{Y}$, where ${\pmb{Y}}$ is the clean label matrix containing all the one-hot encoded labels, and $\Delta \pmb{Y}$ is the label noise matrix, where each column drawn independently from $\mathcal{N} (0, \sigma^2\mathbf{I}_n/{K})$. 
{We consider this setting first, as it provides the most convenient way to analyze robustness.}
{Here, our analysis mainly aims at breaking down the effect of 
label perturbations on training dynamics, in terms of bias and variance.
This could provide theoretical insights into the benefits of contrastive learning for boosting robustness.} 


The following theorem bounds the expected error on training data w.r.t. ground-truth labels,
and shows how contrastive learning exploits the augmented sub-class structure to improve robustness.\looseness=-1 

\begin{theorem}\label{theorem: gaussian_loss}
For a dataset of size $n$ with $K$ classes, $\bar{K}$ {balanced} compact and distinguishable sub-classes (\textit{c.f.} assumptions \ref{assump: block_with_off_diag}, \ref{assump: ratio}) and labels corrupted with Gaussian noise $\mathcal{N} (0, \sigma^2\mathbf{I}_n/{K})$, a linear model trained by minimizing the objective in Eq. \eqref{eq:objective} with the representations 
obtained by minimizing contrastive loss in Eq. \eqref{eq:contrastive} has the following expected error on the training set w.r.t. the ground-truth labels $\pmb{Y}$:
\begin{align}\label{eq: upper_bound_error}
    &\mathbb{E}_{\Delta \pmb{Y}}\frac{1}{n}\| \pmb{Y} -\pmb{F}\hat{\pmb{W}}^* \|_F^2 \\
    \nonumber
    \leq & \underbrace{(\frac{\beta}{\beta+1})^2 + \mathcal{O}(\delta+\xi)}_{\textbf{bias}^2} + \underbrace{\sigma^2\frac{\bar{K}}{n}(\frac{1}{\beta+1})^2 + \sigma^2\mathcal{O}(\frac{\sqrt{\delta}+\xi}{\beta})}_{\textbf{variance}}.
\end{align}
\end{theorem}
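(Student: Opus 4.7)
The plan is to exploit the closed form $\hat{\pmb{W}}^* = (\pmb{F}^\top \pmb{F} + \beta \pmb{I})^{-1}\pmb{F}^\top \hat{\pmb{Y}}$ and split the expected error into a deterministic bias (from the interaction of the clean labels with ridge shrinkage) and a stochastic variance (from Gaussian noise). Writing $\pmb{H} := \pmb{F}(\pmb{F}^\top \pmb{F} + \beta \pmb{I})^{-1}\pmb{F}^\top$ and $\hat{\pmb{Y}} = \pmb{Y} + \Delta \pmb{Y}$, one has $\pmb{F}\hat{\pmb{W}}^* = \pmb{H}\pmb{Y} + \pmb{H}\Delta\pmb{Y}$. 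Since $\Delta\pmb{Y}$ has zero-mean independent columns, the cross term vanishes and
\begin{equation*}
\mathbb{E}_{\Delta\pmb{Y}}\|\pmb{Y} - \pmb{F}\hat{\pmb{W}}^*\|_F^2 \;=\; \|(\pmb{I}-\pmb{H})\pmb{Y}\|_F^2 \;+\; \mathbb{E}\,\|\pmb{H}\Delta\pmb{Y}\|_F^2 .
\end{equation*}
Taking the SVD $\pmb{F} = \pmb{U}\pmb{\Sigma}\pmb{V}^\top$, I diagonalize $\pmb{H} = \pmb{U}\pmb{D}\pmb{U}^\top$ with $\pmb{D}_{ii} = \sigma_i^2/(\sigma_i^2+\beta)$ and $(\pmb{I}-\pmb{H})_{ii} = \beta/(\sigma_i^2+\beta)$. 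By Theorem~\ref{theo:informal}, the first $\bar{K}$ singular values are $\mathcal{O}(1)$ and close to one up to a $\mathcal{O}(\sqrt{\delta}+\xi)$ deviation (using the balanced sub-class assumption), while the remaining singular values sum to $\mathcal{O}(\sqrt{\delta}+\xi)$. I split every resulting sum over $i$ into a \textbf{top} block (indices $1,\ldots,\bar{K}$) and a \textbf{tail} block (the rest).

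For the bias, I decompose $\pmb{Y} = \pmb{U}_1\pmb{U}_1^\top\pmb{Y} + \pmb{U}_2\pmb{U}_2^\top\pmb{Y}$, with $\pmb{U}_1$ the top $\bar{K}$ left singular vectors. The alignment statement of Theorem~\ref{theo:informal} gives $\|\pmb{U}_2^\top\pmb{Y}\|_F^2 = \mathcal{O}((\delta+\xi)\,n)$, while $\|\pmb{U}_1^\top\pmb{Y}\|_F^2 = n - \mathcal{O}((\delta+\xi)\,n)$ (using $\|\pmb{Y}\|_F^2=n$ for one-hot labels). On the top block the eigenvalues $\beta/(\sigma_i^2+\beta)$ equal $\beta/(1+\beta)$ up to $\mathcal{O}(\sqrt{\delta}+\xi)$; after squaring and normalizing by $n$ this yields $(\beta/(\beta+1))^2 + \mathcal{O}(\delta+\xi)$. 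On the tail block I simply bound $\beta/(\sigma_i^2+\beta)\leq 1$, so the tail contributes another $\mathcal{O}(\delta+\xi)$. For the variance, since the $K$ columns of $\Delta\pmb{Y}$ are i.i.d.\ with covariance $(\sigma^2/K)\pmb{I}_n$, one has $\mathbb{E}\|\pmb{H}\Delta\pmb{Y}\|_F^2 = \sigma^2\,\Tr(\pmb{H}^2) = \sigma^2\sum_i(\sigma_i^2/(\sigma_i^2+\beta))^2$. The top block contributes $\sigma^2\bar{K}/(\beta+1)^2$ up to a lower-order correction, giving the $\sigma^2\bar{K}/(n(\beta+1)^2)$ term after dividing by $n$. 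For the tail I use $(\sigma_i^2/(\sigma_i^2+\beta))^2 \leq \sigma_i^2/\beta$ together with $\sum_{i>\bar{K}}\sigma_i^2 \leq (\sum_{i>\bar{K}}\sigma_i)^2$ to obtain the claimed $\sigma^2\mathcal{O}((\sqrt{\delta}+\xi)/\beta)$ tail term.

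The main obstacle I anticipate is tracking the first-order perturbations of the top $\bar{K}$ singular values and of the projection onto the dominant subspace carefully enough. To land at bias $\mathcal{O}(\delta+\xi)$ rather than $\mathcal{O}(\sqrt{\delta}+\xi)$, one must exploit that on the top block the deviations enter squared through $\beta/(\sigma_i^2+\beta) - \beta/(1+\beta)$ and combine with the alignment bound in the right order; this is exactly what the combination of Lemmas~\ref{lemma:largest_egvalue}, \ref{lemma: perron_vec}, \ref{lemma: perturb_eigenvalue}, \ref{lemma: perturb_proj} and Corollary~\ref{corollary:sum_smallest} packaged inside Theorem~\ref{theo:informal} is designed to deliver. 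Once those spectral perturbation bounds are inserted, the remainder is the standard ridge-regression bias/variance computation sketched above.
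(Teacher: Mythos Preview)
Your decomposition $\mathbb{E}\|\pmb{Y}-\pmb{F}\hat{\pmb{W}}^*\|_F^2 = \|(\pmb{I}-\pmb{H})\pmb{Y}\|_F^2 + \mathbb{E}\|\pmb{H}\Delta\pmb{Y}\|_F^2$, the diagonalization via the SVD of $\pmb{F}$, and the top/tail split are exactly the route the paper takes (the paper writes $\pmb{F}\hat{\pmb{W}}^*=\pmb{F}^*\pmb{B}\pmb{F}^{*\top}\hat{\pmb{Y}}$ with $b_i=\lambda_i/(\lambda_i+\beta)$, which is your $\pmb{H}$ in the eigenbasis of $\overline{\pmb{A}}$). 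Your variance argument is fine and in fact slightly more direct than the paper's, which bounds $\sum_{i>\bar K}\beta/(\lambda_i+\beta)$ from below via Cauchy--Schwarz rather than bounding $b_i^2\le \lambda_i/\beta$ directly.

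Where your reasoning slips is the bias on the top block. You write that the top $\bar K$ values $\beta/(\sigma_i^2+\beta)$ equal $\beta/(1+\beta)$ up to $\mathcal{O}(\sqrt{\delta}+\xi)$ and that ``after squaring'' this becomes $\mathcal{O}(\delta+\xi)$. That algebra is wrong: $(a+\epsilon)^2=a^2+2a\epsilon+\epsilon^2$, so a first-order $\mathcal{O}(\sqrt{\delta})$ perturbation would survive squaring as $\mathcal{O}(\sqrt{\delta})$, not $\mathcal{O}(\delta)$. The actual reason the bias is $\mathcal{O}(\delta+\xi)$ is different: when $\xi=0$, each diagonal block $\overline{\pmb{A}}_{\bar k}$ is itself a normalized adjacency matrix, so its top eigenvalue is \emph{exactly} $1$ (this is Lemma~\ref{lemma:largest_egvalue}). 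Hence for $\xi=0$ there is \emph{no} $\sqrt{\delta}$ deviation in the top $\bar K$ values at all, and $b_i=1/(1+\beta)$ exactly for $i\le\bar K$; the entire $\mathcal{O}(\delta)$ in the bias comes from the alignment defect $\|\pmb{V}_I\pmb{V}_I^\top\pmb{Y}\|_F^2\ge n-\mathcal{O}(n\delta)$ (Lemma~\ref{lemma: perron_vec}). For $\xi>0$, Lemma~\ref{lemma: perturb_eigenvalue} gives $\widetilde\lambda_i\in[1-\mathcal{O}(\xi),1]$ on the top block (no $\sqrt{\delta}$ term), and Lemma~\ref{lemma: perturb_proj} adds only an $\mathcal{O}(\xi)$ loss in alignment. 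If you carry your argument through with the deviation you stated, you would end at $(\beta/(\beta+1))^2+\mathcal{O}(\sqrt{\delta}+\xi)$, which is weaker than the claim; you need to invoke the exactness of the top eigenvalues rather than a squaring argument.
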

{We note that the above results can be easily extended to imbalanced sub-class structure. 

The proof can be found in the Appendix. The proof follows the intuition discussed in Section \ref{sec:desirable} that the desirable properties of the learned representation benefit the downstream training. In a nutshell, we derive the bound by writing the error in terms of singular values and vectors of $\pmb{F}$ and then applying Theorem \ref{theo:informal}. }





{In Eq. \eqref{eq: upper_bound_error}, the error is decomposed into bias and variance.}
The bias {captures the mismatch between the average prediction of the model and the ground-truth labels. It depends on the magnitude of the prominent singular values as well as the alignment of the corresponding singular vectors with the ground-truth labels. Contrastive learning reduces the bias by aligning the first $\bar{K}$ singular vectors with ground-truth labels (Theorem \ref{theo:informal}), thus producing a small second term in the bias. The variance quantifies the sensitivity to label noise, and is controlled by the magnitude of the non-prominent singular values, which is guaranteed to be small by Theorem \ref{theo:informal}. The regularization parameter $\beta$ appears in both terms and can be tuned as a trade-off between underfitting and overfitting. 

With small enough $\delta$ and $\xi$, one can select a small $\beta$ to not explicitly penalize the variance much. This results in a small bias, and subsequently a small total error. For example, when there exists a $\beta$ such that $\sqrt{\delta}+\xi \ll \beta \ll 1$, the error $\approx\sigma^2\frac{\bar{K}}{n}(\frac{1}{\beta+1})^2$, which is the inevitable cost of achieving a small bias, when the representation matrix $\pmb{F}$ has $\bar{K}$ prominent singular values. 
} 

\begin{figure*}[t]
    \centering
    \subfloat[\label{subfig: singular_value}]{
	\includegraphics[width=.24\textwidth]{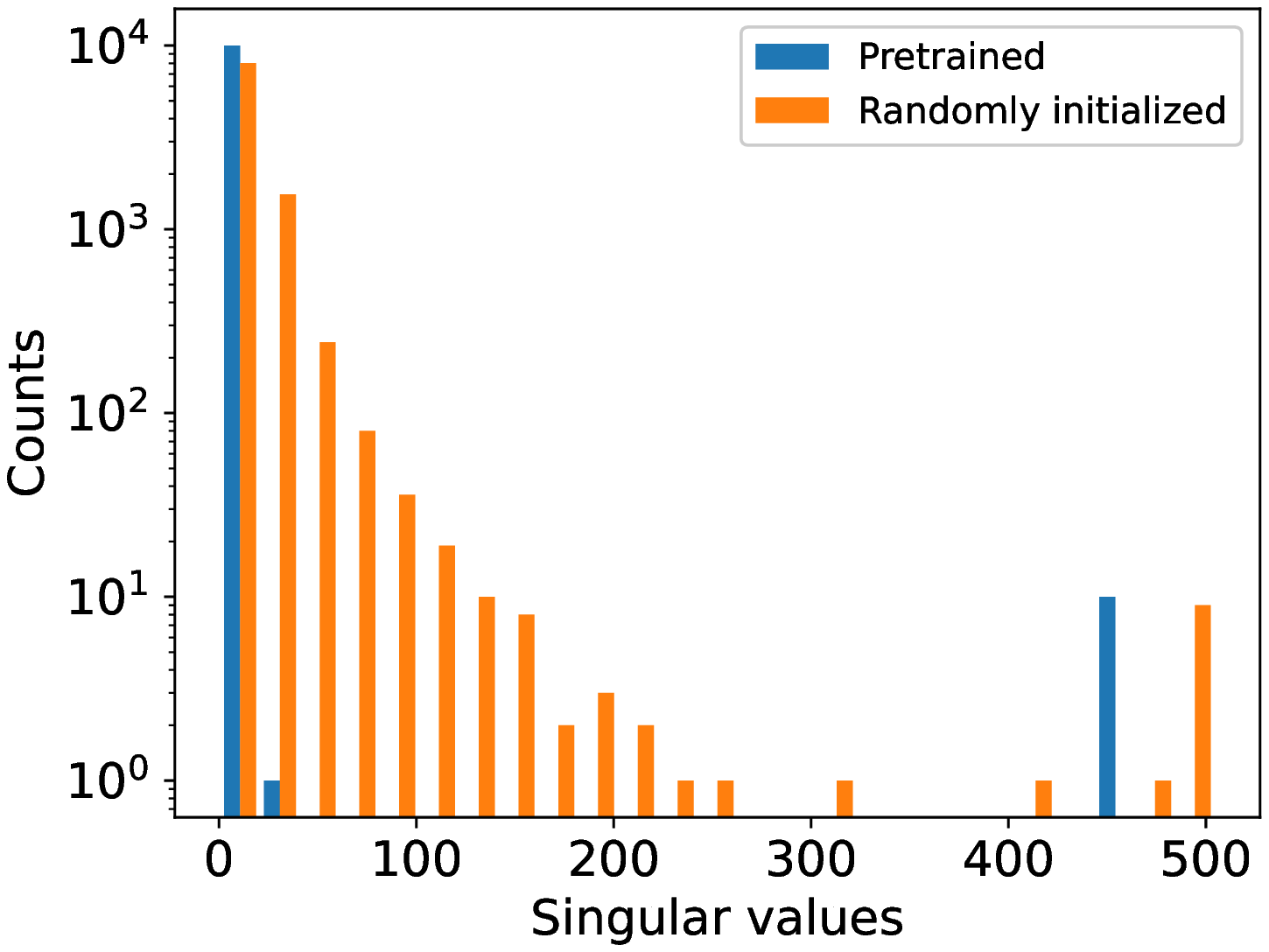}
	}
	\subfloat[\label{subfig: alignment}]{
	\includegraphics[width=.24\textwidth]{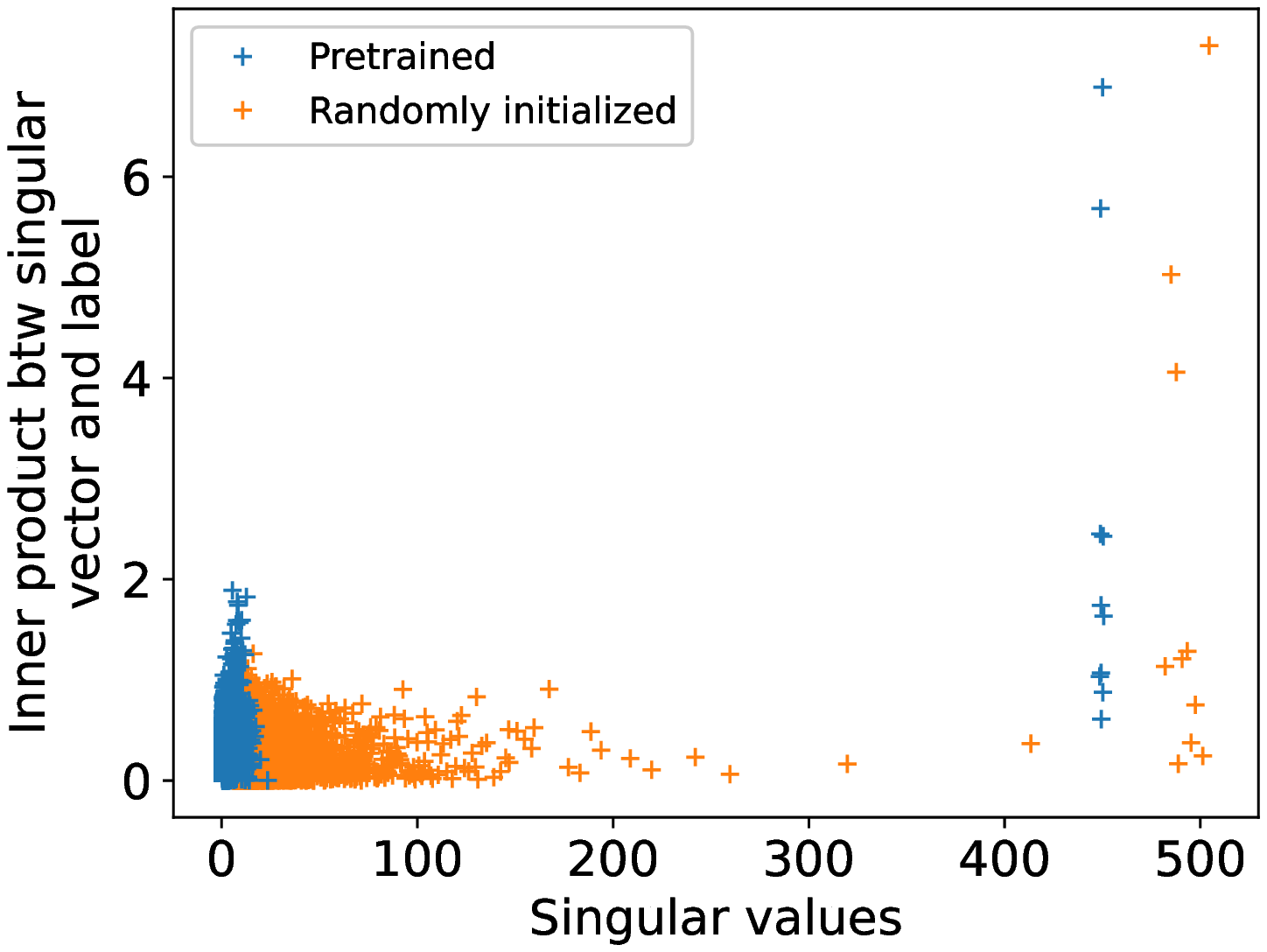}
	}
    \subfloat[\label{subfig: training_curve}]{
	\includegraphics[width=.235\textwidth]{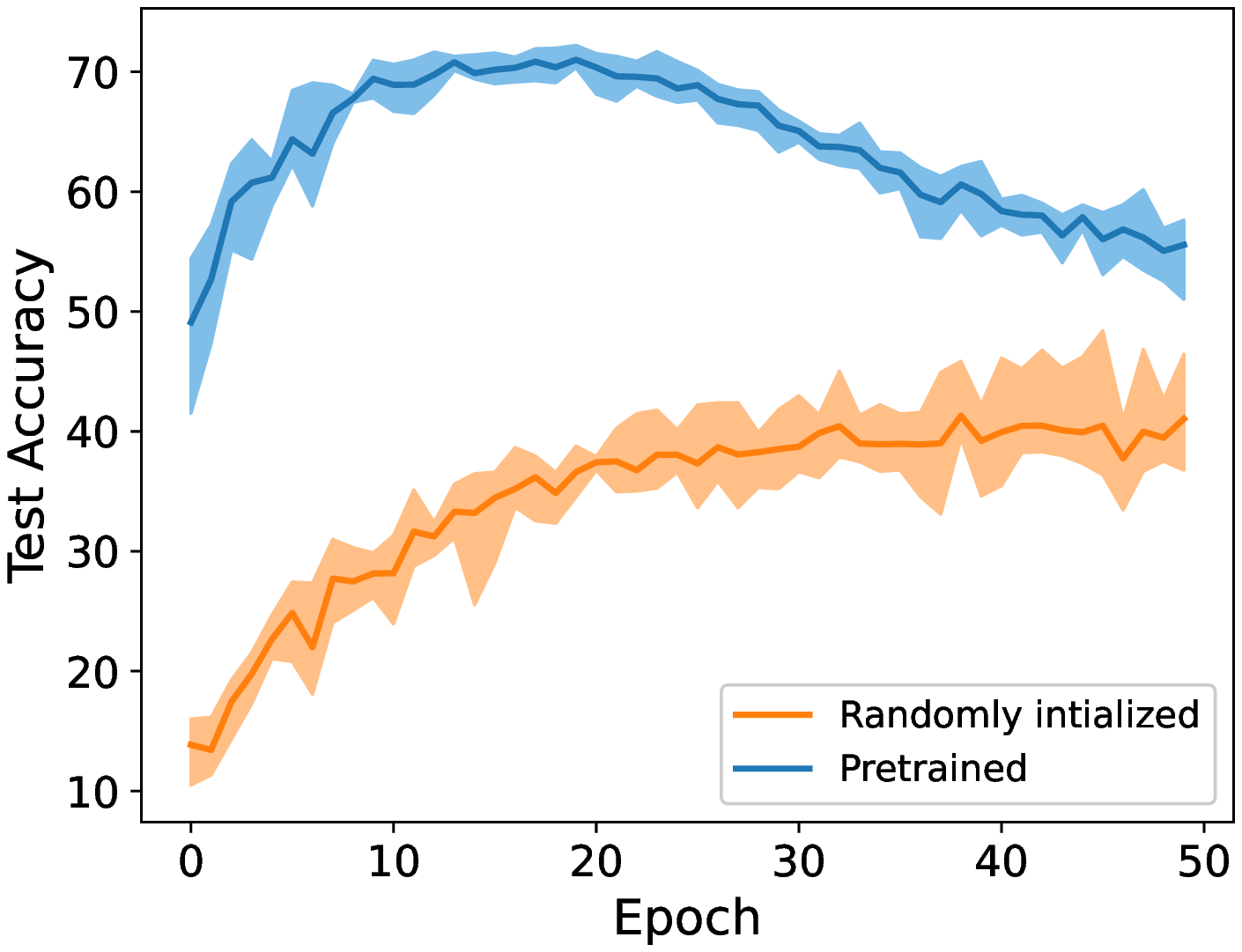}
	}
	\subfloat[\label{subfig: loss_noisy}]{
	\includegraphics[width=.242\textwidth]{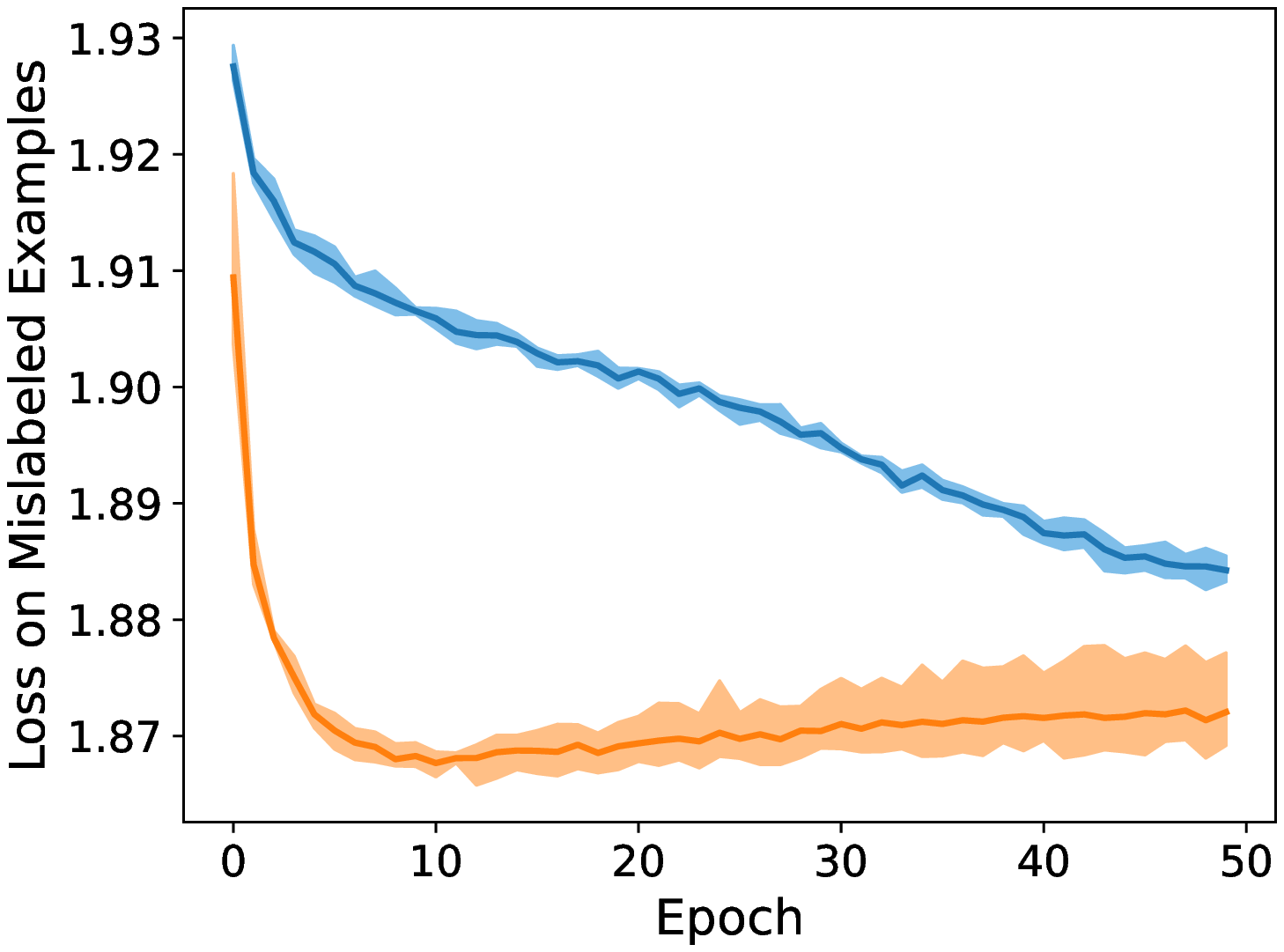}
	}
    \caption{Jacobian spectrum and dynamics of training a randomly initialized vs. fine-tuning a pre-trained ResNet32 on CIFAR-10 with 80\% randomly flipped labels.
   (a) distribution of singular values of the initial Jacobian, (b) alignment of the clean labels with the initial Jacobian, (c) test accuracy and (d) loss on mislabeled data points within the first 50 epochs.
    While pre-training does not improve the alignment of the Jacobian with the labels, it significantly slows down overfitting at the beginning by shrinking the smaller singular values of the Jacobian matrix.
    } 
    \label{fig:singular_value_cifar10}
\end{figure*}

\subsubsection{Random Label Flipping}
Next, we study the case where the label noise $\Delta \pmb{Y}\!=\!\hat{\pmb{Y}}\!-\!\pmb{Y}$ is generated by flipping a fraction of the clean labels at random. 
Formally, for an example $\pmb{x}_i$ belongs to class $j$ with $\pmb{y}_i=\pmb{e}_j$, if its label is flipped to class $k$, we have $\Delta \pmb{y}_i=\pmb{e}_k-\pmb{e}_j$.
We introduce the following notations to analyze the case of asymmetric label noise which mimics the real-world noise, where wrong labels are generated in a (sub)class-dependent way.
Let $m_{\bar{k}}$ be the number of mislabeled examples in subclass $\bar{k}$, $m_{\bar{k},k}$ be the number of mislabeled examples in subclass $\bar{k}$ that are labeled as class $k$, and $Z_{k}$ be the set of sub-classes in class $k$. {We define $c_{k|\bar{k}}\coloneqq\frac{m_{\bar{k},k}}{m_{\bar{k}}}$ for all $\bar{k}$,$k$ such that $\bar{k}\notin Z_{k}$, to be the fraction of mislabeled examples in sub-class $\bar{k}$ that are mislabeled as $k$. We have that $\sum_{k:~\bar{k}\notin Z_{k}}c_{k|\bar{k}}=1,~ \forall \bar{k}\in[\bar{K}]$.  When $c_{k|\bar{k}}=\frac{1}{K-1}~~ \forall k\in[K], \bar{k}\in[\bar{K}]$, the noise is symmetric.
}

{For simplicity we assume $\xi=0$. The general case of $\xi\geq0$ requires more involved analysis which we discuss in the Appendix.}
The following theorem shows that for a dataset with compact and distinguishable sub-class structure 
the linear classifier trained on the representations obtained by contrastive learning can 
{recover the clean label for all training data}.
\begin{theorem}[Asymmetric Noise]\label{theorem: flipping_accuracy}
For a dataset 
with ${K}$ classes and $\bar{K}$
compact and distinguishable sub-class structure (\textit{c.f.} assumptions \ref{assump: block_with_off_diag} \ref{assump: ratio}) with $\xi=0$, 
let $n_{\min}, n_{\max}$ be the size of the smallest and largest sub-class, and $\alpha$ be the 
fraction of {mislabeled examples} in the training set. {Let $c_{\max}\coloneqq\max_{k\in[K],\bar{k}\in[\bar{K}]} c_{k|\bar{k}}\in[\frac{1}{K-1}, 1]$ 
be the maximum fraction of wrong labels in a subclass that are flipped to another class.
}
Then as long as 
{
\begin{align}
{
    \label{eq: tolerance}
    \alpha < \frac{1}{1+\frac{n_{\max}}{n_{\min}}c_{\max}}- \mathcal{O}\left(\frac{\sqrt{\delta}}{\beta}\right), }
\end{align}
}

 a linear model trained by minimizing the objective in Eq. \eqref{eq:objective} with the representations
obtained by minimizing contrastive loss in Eq. \eqref{eq:contrastive} can predict the ground-truth labels for all training examples, i.e., 
\begin{align}
\nonumber
   \forall i,~\argmax_{j\in[K]} (\pmb{F}\hat{\pmb{W}_{}}
    )_{i,j} \!\!=\! \arg_{j\in [K]}({\pmb{Y}}_{i,j}=1) \!\!.
\end{align}

\end{theorem}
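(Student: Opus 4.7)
The plan is to exploit the spectral characterization in Theorem~\ref{theo:informal} to reduce the predictor $\pmb{F}\hat{\pmb{W}}^{*}$ to an almost-ideal sub-class averaging operator acting on $\hat{\pmb{Y}}$, and then to argue, per sub-class, that the clean label wins the induced average by the margin dictated by $c_{\max}$. I would start by substituting the closed form \eqref{eq:closedform_w} into the predictor to obtain, via the SVD $\pmb{F}=\pmb{U}\pmb{\Sigma}\pmb{V}^{\top}$, the symmetric smoothing matrix $\pmb{S}\coloneqq\pmb{F}(\pmb{F}^{\top}\pmb{F}+\beta\pmb{I})^{-1}\pmb{F}^{\top}=\pmb{U}\pmb{D}\pmb{U}^{\top}$, where $D_{kk}=\sigma_k^{2}/(\sigma_k^{2}+\beta)$. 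With $\xi=0$, Theorem~\ref{theo:informal} guarantees that the top $\bar{K}$ singular values are $\Theta(1)$ while the remaining ones have sum $\mathcal{O}(\sqrt{\delta})$, and the top-$\bar{K}$ singular subspace lies near the sub-class indicator subspace. Splitting $\pmb{S}$ into a top-$\bar{K}$ block and a tail block, the top block approximates the ideal sub-class averaging operator $\pmb{M}$ with $(\pmb{M}\hat{\pmb{Y}})_{i}=n_{\bar{k}}^{-1}\sum_{j\in \bar{k}}\hat{\pmb{y}}_{j}$ for $i\in\bar{k}$, while the tail block contributes an operator-level error of magnitude $\mathcal{O}(\sqrt{\delta}/\beta)$ in $\pmb{S}\hat{\pmb{Y}}$.

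Given this reduction, the main per-sub-class computation is straightforward. For a training example $i$ in sub-class $\bar{k}$ with clean class $k^{*}$ (so $\bar{k}\in Z_{k^{*}}$), plugging in the mislabel bookkeeping yields $(\pmb{M}\hat{\pmb{Y}})_{i,k^{*}}=(n_{\bar{k}}-m_{\bar{k}})/n_{\bar{k}}$ and $(\pmb{M}\hat{\pmb{Y}})_{i,k}=c_{k|\bar{k}}\,m_{\bar{k}}/n_{\bar{k}}$ for $k\neq k^{*}$. Thus the true class is predicted correctly at $i$ as soon as $(n_{\bar{k}}-m_{\bar{k}})/n_{\bar{k}}-c_{\max}\,m_{\bar{k}}/n_{\bar{k}}$ strictly exceeds the residual error from the non-prominent singular directions and the subspace misalignment, which I would bound uniformly by $\mathcal{O}(\sqrt{\delta}/\beta)$ using the spectral properties above. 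Rearranging gives a per-sub-class tolerance $m_{\bar{k}}/n_{\bar{k}}<1/(1+c_{\max})-\mathcal{O}(\sqrt{\delta}/\beta)$.

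The last step is to convert the per-sub-class condition into a global one on $\alpha$. Since $\alpha n=\sum_{\bar{k}}m_{\bar{k}}$ and each sub-class has size between $n_{\min}$ and $n_{\max}$, a worst-case redistribution of mislabels inflates $\max_{\bar{k}}m_{\bar{k}}/n_{\bar{k}}$ by at most a factor $n_{\max}/n_{\min}$ relative to $\alpha$; pushing this through the per-sub-class inequality and absorbing the spectral slack produces exactly \eqref{eq: tolerance}. Combined with the argmax computation above, this proves the desired clean-label recovery for every training point.

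The main technical obstacle I expect is step one, namely showing that $\pmb{S}$ really does behave as the ideal sub-class averager up to $\mathcal{O}(\sqrt{\delta}/\beta)$. This requires (i) a Weyl-type gap between the $\bar{K}$-th and $(\bar{K}+1)$-th singular values of $\pmb{F}\pmb{F}^{\top}$ so that the top invariant subspace is well-defined, and (ii) a Davis--Kahan-style $\sin\Theta$ bound converting the $\delta$-compactness perturbation of the augmentation-graph block structure into an $\mathcal{O}(\sqrt{\delta})$ deviation of the top singular subspace from the sub-class indicator subspace, whose leakage across sub-classes then produces the $\sqrt{\delta}/\beta$ error after dividing by the regularized spectrum. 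Once these spectral perturbation bounds are in place, the remainder of the proof is essentially bookkeeping enabled by Theorem~\ref{theo:informal} and the closed-form expression \eqref{eq:closedform_w}.
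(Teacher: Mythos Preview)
Your overall strategy—forming the smoothing operator $\pmb{S}=\pmb{F}(\pmb{F}^{\top}\pmb{F}+\beta\pmb{I})^{-1}\pmb{F}^{\top}$, splitting it into a dominant $\bar{K}$-dimensional piece that acts as a sub-class averager plus a tail of order $\mathcal{O}(\sqrt{\delta}/\beta)$, and then comparing entries of $\pmb{S}\hat{\pmb{Y}}$—is exactly how the paper proceeds. One technical difference: since $\xi=0$ the normalized adjacency $\overline{\pmb{A}}$ is \emph{exactly} block-diagonal, so the top $\bar{K}$ eigenvectors are precisely the Perron vectors of the blocks and no Davis--Kahan argument is needed. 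The paper instead bounds Perron-vector entries directly via the ratio condition of Assumption~\ref{assump: ratio} (Lemma~\ref{lemma: perron_vec}) and the tail eigenvalues via $\|\overline{\pmb{A}}_{\bar{k}}\|_F^2$ (Corollary~\ref{corollary: bound_lambda_2}, Corollary~\ref{corollary:sum_smallest}); this is simpler than the Weyl/Davis--Kahan route you sketch.

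The genuine gap is your last step. The claim that ``worst-case redistribution of mislabels inflates $\max_{\bar{k}} m_{\bar{k}}/n_{\bar{k}}$ by at most a factor $n_{\max}/n_{\min}$ relative to $\alpha$'' is false: the identity $\alpha n=\sum_{\bar{k}} m_{\bar{k}}$ places no control on $\max_{\bar{k}} m_{\bar{k}}/n_{\bar{k}}$, since all noise could sit in a single sub-class. The $n_{\max}/n_{\min}$ factor in \eqref{eq: tolerance} does \emph{not} come from converting a per-sub-class condition to a global one. In the paper, $\alpha$ is treated as a uniform per-sub-class mislabel fraction, and the factor arises from a conservative entry-wise bound on the dominant term: instead of your exact averager comparison $(n_{\bar{k}}-m_{\bar{k}})/n_{\bar{k}}$ versus $c_{\max}\,m_{\bar{k}}/n_{\bar{k}}$, the paper lower-bounds the correct-class score by $\tfrac{1}{\beta+1}e_{\min}\,\dot{\pmb{v}}_{\bar{k}_j,1}^{(j)}\,n_{\min}(1-\alpha)$ and upper-bounds each wrong-class score by $\tfrac{1}{\beta+1}e_{\max}\,\dot{\pmb{v}}_{\bar{k}_j,1}^{(j)}\,n_{\max}\,\alpha\,c_{\max}$ (Lemma~\ref{lemma: bound_z}), where $e_{\min},e_{\max}$ are the extremal Perron-vector entries with $e_{\max}/e_{\min}=1+\mathcal{O}(\delta)$ (Lemma~\ref{lemma: bound_e}). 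Requiring the former to exceed the latter directly produces $\alpha<1/(1+(n_{\max}/n_{\min})c_{\max})-\mathcal{O}(\sqrt{\delta}/\beta)$. If you keep your exact-averager computation, you actually obtain the \emph{sharper} per-sub-class condition $m_{\bar{k}}/n_{\bar{k}}<1/(1+c_{\max})-\mathcal{O}(\sqrt{\delta}/\beta)$, which under a uniform rate already implies \eqref{eq: tolerance}; but you cannot reach \eqref{eq: tolerance} from a truly global $\alpha$ via any redistribution argument.
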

In other words contrastive learning can prevent the linear model from memorizing any wrong label even under large noise. Theorem \ref{theorem: flipping_accuracy} also shows that the model can tolerate more noise when the sub-class structure is more compact, i.e., $\delta$ is smaller, or the noise is more symmetric, or the sub-classes are more balanced. The following corollary for symmetric noise is simply obtained by setting $c_{\max}=\frac{1}{K-1}$ in Theorem \ref{theorem: flipping_accuracy}.
{
\begin{corollary}[Symmetric Noise]\label{corollary:symmetric}
For symmetric noise, under the same assumption as in Theorem \ref{theorem: flipping_accuracy}, as long as
\begin{align}
{
    \label{eq: tolerance}
    \alpha < \frac{K\!-\!1}{K+\frac{n_{\max}}{n_{\min}}-1}- \mathcal{O}\left(\sqrt{\delta}\right),}
\end{align}
 a linear model trained by minimizing the objective in Eq. \eqref{eq:objective} with the representations
obtained by minimizing contrastive loss in Eq. \eqref{eq:contrastive} can predict the ground-truth labels for all training examples.
\end{corollary}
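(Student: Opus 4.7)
The plan is to derive this as an immediate consequence of Theorem \ref{theorem: flipping_accuracy} by specializing the asymmetric-noise quantity $c_{\max}$ to its value under symmetric flipping, and then algebraically simplifying the resulting threshold.

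First, I would recall the definition $c_{k|\bar k} = m_{\bar k,k}/m_{\bar k}$, the fraction of mislabeled examples in sub-class $\bar k$ that are flipped to class $k$. Under symmetric noise, every mislabeled example in sub-class $\bar k$ is, by construction, assigned uniformly at random to one of the $K-1$ classes different from the one containing $\bar k$. Hence $c_{k|\bar k} = \tfrac{1}{K-1}$ for all $k\in[K]$ and $\bar k\in[\bar K]$ with $\bar k\notin Z_k$, which is consistent with the constraint $\sum_{k:\bar k\notin Z_k} c_{k|\bar k}=1$ stated in the setup. Consequently $c_{\max} = \max_{k,\bar k} c_{k|\bar k} = \tfrac{1}{K-1}$.

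Next, I would invoke Theorem \ref{theorem: flipping_accuracy}, which ensures that the linear model trained on the contrastive representations predicts the ground-truth label for every training example whenever
\begin{align*}
\alpha < \frac{1}{1+\frac{n_{\max}}{n_{\min}}\,c_{\max}} - \mathcal{O}\!\left(\frac{\sqrt{\delta}}{\beta}\right).
\end{align*}
Substituting $c_{\max}=\tfrac{1}{K-1}$ and clearing the inner fraction,
\begin{align*}
\frac{1}{1+\frac{n_{\max}}{n_{\min}}\cdot\frac{1}{K-1}}
= \frac{K-1}{(K-1)+\frac{n_{\max}}{n_{\min}}}
= \frac{K-1}{K+\frac{n_{\max}}{n_{\min}}-1}.
\end{align*}
This yields exactly the threshold stated in the corollary (with the $1/\beta$ absorbed into the $\mathcal{O}(\sqrt{\delta})$ term by choosing $\beta$ to be any fixed constant independent of $\delta$, as is done implicitly in the hypothesis of Theorem \ref{theorem: flipping_accuracy}).

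Since the entire argument is a substitution plus simplification on top of Theorem \ref{theorem: flipping_accuracy}, there is essentially no obstacle: the only point to be careful about is verifying that the symmetric regime is a valid instance of the asymmetric setting (i.e., $c_{k|\bar k}=\tfrac{1}{K-1}$ really does maximize $c_{\max}$ among all admissible profiles with that noise model, and the per–sub-class normalization $\sum_k c_{k|\bar k}=1$ is preserved), and that $\beta$ can be treated as a constant so that $\mathcal{O}(\sqrt{\delta}/\beta)=\mathcal{O}(\sqrt{\delta})$. Both are immediate.
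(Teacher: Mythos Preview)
Your proposal is correct and matches the paper's own derivation, which simply states that the corollary is obtained by setting $c_{\max}=\frac{1}{K-1}$ in Theorem~\ref{theorem: flipping_accuracy}. Your additional algebraic simplification of the threshold and your remark on absorbing $1/\beta$ into the $\mathcal{O}(\sqrt{\delta})$ term are exactly what is needed to reach the stated bound.
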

}
If we further let $\delta=0$ and $n_{\max}/n_{\min}=1$,  
we get $(K-1)/K$ noise tolerance. We note that this, however, does not imply that a dataset with more classes necessarily has a higher noise tolerance. In Appendix \ref{sec: perturb_eigen}, we show that  less distinguishable sub-class structure, i.e. $\xi>0$, introduces a $\mathcal{O}(\bar{K}^{5/2}\xi)$ perturbation to the singular values and a $\mathcal{O}(\bar{K}^{5/2}\xi)$ rotation in the direction of singular vectors of the representation matrix. Datasets with more classes usually contains more sub-classes, which 
greatly reduces the noise tolerance. {This is also reflected by our empirical results (Figure \ref{fig:train_acc_gt}) where the performance of the linear model is worse on CIFAR-100 compared to CIFAR-10 under the same noise level.}
\subsection{Contrastive Learning Slows down Overfitting for Fine-tuning}
\label{sec: training_the_whole}


In the previous section, we showed that training a linear model on representations learned by contrastive learning is provably robust.
Here, we study fine-tuning all layers of the deep network. Interestingly, as is shown in Fig. \ref{subfig: training_curve}, finetuning achieves a very high test accuracy under 80\% label noise in the early phase of training.

Recall that the theoretical guarantee for linear model (theorems \ref{theorem: gaussian_loss} and \ref{theorem: flipping_accuracy}) is obtained by examining singular values and singular vectors of $\pmb{F}$. Here, we use a similar idea to understand benefits of contrastive learning on robustness when all the layers are trained.
Intuitively, 
during the early stage of training, it is natural to assume that the gradient does not considerably change, and therefore the model is nearly linear. In this case, the initial Jacobian matrix plays the same role as the representation matrix, $\pmb{F}$, to the linear model. {This is supported by the recent studies suggesting the following properties of training neural networks: the early learning dynamics can by mimicked by training a linear model \cite{hu2020surprising}, SGD on neural networks learns a linear model first \cite{kalimeris2019sgd},
and a network that provides a better 
alignment between prominent directions of the Jacobian matrix and the label vector is more likely to generalize well \cite{oymak2019generalization}. }

We examine the SVD of the Jacobian of a ResNet pretrained with contrastive learning and compare it to that of a randomly initialized network. Fig. \ref{subfig: singular_value}, \ref{subfig: alignment} present the distribution of singular values and the alignment of singular vectors with clean labels. The Jacobian is computed on a random sample of 1000 data points from CIFAR10. 
Interestingly, Fig. \ref{subfig: alignment} shows that while pre-training does not considerably improve (in Appendix \ref{apx:alignment} we show a slight improvement)
the alignment between singular vectors of the Jacobian and the clean label vector, it greatly shrinks the smaller singular values of the Jacobian, as is illustrated by Fig. \ref{subfig: singular_value}. As a result, it takes substantially longer for the pre-trained network to overfit the noisy labels.  
As Fig. \ref{subfig: loss_noisy} shows, while a randomly initialized network experience a sharp drop in loss of noisy labeled data points during the first few epochs of training, it takes much longer for a pre-trained network to overfit the noise. Details of the experiment can be found in Appendix \ref{apx:frz_unfrz}. \looseness=-1

\subsection{Contrastive Learning Boosts Robust Methods}
As discussed, pre-training the network with contrastive learning effectively shrinks the smaller singular values of the Jacobian and slows down overfitting the noisy labels.
The initial level of robustness provided by contrastive learning can be leveraged by 
existing robust training methods to achieve a superior performance under extreme noise levels. Next, we briefly discuss three methods that prevent the pre-trained network from overfitting the noisy labels,
through  regularization \cite{liu2020early,zhang2017mixup}, or identifying clean examples \cite{mirzasoleiman2020coresets}. 

\vspace{-2mm}
\paragraph{ELR} \cite{liu2020early} 
regularizes the loss by 
$\frac{1}{n}\sum_{i=1}^n \log(1-\pmb{p}(x_i)^{\top} \pmb{t}(x_i))$ to 
encourage the alignment between the model prediction $\pmb{p}(x)$ and the 
running average of the model outputs in previous rounds $\pmb{t}(x)$. The effectiveness of ELR is attributed to the early-learning phenomenon where the model first fits the correct labels 
and then memorizes the noisy ones 
\cite{oymak2019generalization}. 
Effectively, the regularization term stretches the prediction toward the clean labels predicted early by the model.
However, under extreme label noise, the memorization phase starts very early, 
and does not let
the model to learn clean labels 
and high-quality targets. 
As discussed, contrastive learning makes a large separation between learning and memorization and gives the network enough time to learn high-quality targets. As we show in our experiments, applying ELR to fine-tune the network learned by contrastive learning significantly boosts the generalization performance.

\vspace{-2mm}
\paragraph{Mixup} \cite{zhang2017mixup} extends the training distribution by linear interpolations of feature vectors and their associated labels: $\hat{\pmb{x}} = \lambda \pmb{x}_i + (1 - \lambda)\pmb{x}_j ,
\hat{\pmb{y}} = \lambda \pmb{y}_i + (1 - \lambda)\pmb{y}_j$, where $\lambda\sim \text{Beta}(\alpha,\alpha)$.
In doing so, mixup makes linear transition in the decision boundary between classes and provide a smoother estimate of uncertainty. Larger $\alpha$ prevents overfitting by generating examples that are less similar to the training examples and are more difficult for the network to memorize.
In our experiments, we show that 
the network learned by contrastive learning can be robustly fine-tuned by mixup to achieve a superior generalization performance. 
\begin{table*}[t]
\centering
\caption{Average test accuracy (3 runs) on CIFAR-10 and CIFAR-100. The best test accuracy is marked in bold. We note the higher performance of methods that use SimCLR (CL) pretraining, especially under higher noise levels.  In particular, under $80\%$ noise, methods see an average of $27.18\%$, and $15.58\%$ increase in test accuracy for CIFAR-10 and CIFAR-100 respectively. 
Results marked with $(^*)$ are reproduced from publicly available code. E2E refers to end to end fine-tuning the pre-trained network. 
}
\label{tab:cifar-table}
{\footnotesize
\begin{tabular}{c|c|c|c|c|c|c|c|c}
\toprule
Dataset     & \multicolumn{4}{c|}{CIFAR-10}        & \multicolumn{4}{c}{CIFAR-100}       \\ \midrule
Noise Type  & \multicolumn{3}{c|}{Sym} & Asym      & \multicolumn{3}{c|}{Sym} & Asym      \\ \midrule
Noise Ratio & $20$          & $50$   & $80$      & $40$        & $20$          & $50$        & $80$ & $40$        \\ \midrule
F-correction    & $85.1 \pm 0.4$ & $76.0 \pm 0.2$ & $34.8\pm4.5$ & $83.6 \pm 2.2$ & $55.8 \pm 0.5$ & $43.3 \pm 0.7$ & $-$ & $42.3 \pm 0.7$ \\
Decoupling    & $86.7 \pm 0.3$ & $79.3 \pm 0.6$ & $36.9 \pm 4.6$ & $75.3 \pm 0.8$ & $57.6 \pm 0.5$ & $45.7 \pm 0.4$  & $-$ & $43.1 \pm 0.4$ \\
Co-teaching    & $89.1 \pm 0.3$ & $82.1 \pm 0.6$  & $16.2 \pm 3.2$& $84.6 \pm 2.8$ & $64.0 \pm 0.3$  & $52.3 \pm 0.4$ & $-$ & $47.7 \pm 1.2$ \\
MentorNet   & $88.4 \pm 0.5$ & $77.1 \pm 0.4$ & $28.9 \pm 2.3$ &$77.3 \pm 0.8$ & $63.0 \pm 0.4$ & $46.4 \pm 0.4$ & $-$ & $42.4 \pm 0.5$ \\
D2L   & $86.1 \pm 0.4$ & $67.4 \pm 3.6$ & $10.0 \pm 0.1 $& $85.6 \pm 1.2$ & $12.5 \pm 4.2$ & $5.6 \pm 5.4$ & $-$ & $14.1 \pm 5.8$ \\
INCV   & $89.7 \pm 0.2$ & $84.8 \pm 0.3$ & $52.3 \pm 3.5$ & $86.0 \pm 0.5$ &  $60.2 \pm 0.2$        &  $53.1 \pm 0.4$  & $-$ & $50.7 \pm 0.2$ \\
T-Revision   & $79.3 \pm 0.5$ & $78.5 \pm 0.6$ & $36.2 \pm 1.6$& $76.3 \pm 0.8$ &  $52.4 \pm 0.3$        &  $37.6 \pm 0.3$  & $-$ & $32.3 \pm 0.4$ \\
L\_DMI   & $84.3 \pm 0.4$ & $78.8 \pm 0.5$ & $20.9 \pm 2.2$ & $84.8 \pm 0.7$ &  $56.8 \pm 0.4$        &  $42.2 \pm 0.5$  & $-$ & $39.5 \pm 0.4$ \\

Crust$^*$ & $ {{85.3} \pm {0.5}}$  & $ {{86.8} \pm {0.3}}$ & ${33.8} \pm {1.3}$ & ${{76.7} \pm {3.4}}$ & ${{62.9} \pm {0.3}}$ & $ {{55.5} \pm {1.1}}$ & $ {{18.5} \pm {0.8}}$ & ${{52.5} \pm {0.4}}$  \\

Mixup & $ {{89.7} \pm {0.7}}$  & $ {{84.5} \pm {0.3}}$ & ${40.7} \pm {1.1}$ &$ { {86.3} \pm {0.1}}$ & ${ {64.0} \pm {0.4}}$ & $ {{53.4} \pm {0.5}}$ & $ {{15.1} \pm {0.1}}$ & $ { {54.4} \pm {2.0}}$  \\

ELR$^*$ & $ {{90.6} \pm {0.6}}$  & $ {{87.7} \pm {1.0}}$ & ${69.5} \pm {5.0}$ & ${{86.6} \pm {2.9}}$ & ${{63.6} \pm {1.7}}$ & $ {{52.5} \pm {4.2}}$ & $ { {23.4} \pm {1.9}}$ &  $ {{59.7} \pm {0.1}}$  \\\hline

CL+E2E$^*$ & $ {{88.8} \pm {0.5}}$  & $ { {82.8} \pm {0.2}}$ & ${72.0} \pm {0.3}$ & ${ {83.5} \pm {0.5}}$ & ${{63.5} \pm {0.2}}$ & $ {{56.1} \pm {0.3}}$ & $ \bf{{{36.7} \pm {0.3}}}$ &  $ {{52.4} \pm {0.2}}$  \\

CL+Crust$^*$ & $ {{86.5} \pm {0.7}}$  & $ { {87.6} \pm {0.3}}$ & $\bf{{77.9} \pm {0.3}}$ &$ { {85.9} \pm {0.4}}$ & ${{63.0} \pm {0.8}}$ & $\bf{{{58.3} \pm {0.1}}}$ & $ { {34.8} \pm {1.5}}$ & $ { {53.3} \pm {0.7}}$\\

CL+Mixup$^*$ & $ {{90.8} \pm {0.2}}$  & $ { {84.6} \pm {0.4}}$ & ${74.8} \pm {0.3}$ &$ { {87.5} \pm {1.3}}$ & ${ {64.4} \pm {0.4}}$ & $ {{55.5} \pm {0.1}}$ & $ { {30.3} \pm {0.4}}$ & $ { {55.5} \pm {0.8}}$  \\

CL+ELR$^*$ & $\bf{{{91.3} \pm {0.0}}}$  & $ {\bf{{89.1} \pm {0.1}}}$ & ${77.7} \pm {0.2}$ &$ {\bf{{89.7} \pm {0.3}}}$ & ${\bf{{64.7} \pm {0.2}}}$ & ${{55.6} \pm {0.2}}$ & $ { {35.9} \pm {0.3}}$ & $ \bf{{{63.6} \pm {0.1}}}$ \\\bottomrule
\end{tabular}
}
\end{table*}

\vspace{-5mm}
\paragraph{CRUST} \cite{mirzasoleiman2020coresets} provides provable robustness guarantees by extracting clean examples that cluster closely in the gradient space
based on the following observation: 
as the nuisance space is very high dimensional, data points with noisy labels spread out in the gradient space. In contrast, the information space is low-dimensional and data points with clean labels that have similar gradients cluster closely together. 
Central clean examples in the gradient space can be efficiently extracted by maximizing a submodular function. 
To enable Crust to find the clean examples under extreme noise, 
we first fine-tune the entire network on noisy labels for around 20 epochs and then randomly label half of the examples with the prediction of the model. 
As discussed, pre-training the network with contrastive learning shrinks the smaller singular values of the Jacobian. 
This allows the clean examples to make clear clusters around the large singular directions and be easily extracted.
In our experiments, we show that the pre-trained network can significantly boost Crust's performance under extreme noise.
\section{Experiments}\label{sec: experiments}
We evaluate the effectiveness of contrastive learning in boosting the robustness of deep networks under various levels of label noise.
We first consider fine-tuning all layers of a network pre-trained with contrastive learning on noisy labels, and show that it can achieve a comparable generalization performance to the state-of-the-art robust methods.
Then, we show that the structure of the representation matrix obtained by contrastive learning can be leveraged by robust methods to achieve a superior generalization performance under extreme noise levels.

For our evaluation, we use artificially corrupted versions of CIFAR-10 and CIFAR-100 \cite{krizhevsky2009learning}, as well as a subset of the real-world dataset Webvision \cite{li2017webvision}, which naturally contains noisy labels. Our method was developed using PyTorch \cite{paszke2017automatic}. We use 1 Nvidia A40 for all experiments.

\textbf{Baselines}.  
We compare our results with
many commonly used baselines for robust training against label noise:  (1) F-correction \cite{patrini2017making} is a two step process, where a neural network is first trained on noisily-labelled data, then retrained using a corrected loss function based on an estimation of the noise transition matrix. (2) Decoupling \cite{malach2017decoupling} is a meta-algorithm that trains two networks concurrently, only training on examples where the two networks disagree. (3) Co-teaching \cite{han2018co} also trains two networks simultaneously. Each network selects subsets of clean data with high probability for the other network to train on. (4) MentorNet \cite{jiang2018mentornet} uses two neural networks, a student and a mentor.  The mentor dynamically creates a curriculum based on the student, while the student trains on the curriculum provided by the mentor. (5) D2L \cite{ma2018dimensionality} learns the training data distribution, then dynamically adapts the loss function based on the changes in dimensionality of subspaces during training. (6) INCV \cite{chen2019understanding} identifies random subsets of the training data with fewer noisy labels, then applies Co-teaching to iteratively train on subsets found with the most clean labels. (7) T-Revision \cite{xia2019anchor} learns the transition matrix efficiently using an algorithm that does not rely on known points with clean labels. (8) L\_DMI \cite{xu2019l_dmi} uses a novel information-theoretic loss function based on determinant based mutual information. (9) ELR \cite{liu2020early} uses semi-supervised learning techniques to regularize based on the early-learning phase of training, to ensure the noisy labels are not overfit.  (10) CRUST \cite{mirzasoleiman2020coresets} dynamically selects subsets of clean data points by clustering in the gradient space. (11) Mixup \cite{zhang2017mixup} smooths the decision boundary by adding linear interpolations of feature vectors and their labels to the dataset.

\subsection{Empirical Results on Artificially Corrupted CIFAR}\label{sec:exp_cifar}
We first evaluate our method on CIFAR-10 and CIFAR-100, which each contain $50,000$ training images, and $10,000$ test images of size $32 \times32 \times 3$.  CIFAR-10 and CIFAR-100 have $10$ and $100$ classes respectively.  We use the same testing protocol as \cite{xu2019l_dmi,liu2020early,xia2019anchor}, by evaluating our method on symmetric and asymmetric label noise.  For both CIFAR-10 and CIFAR-100, we use symmetric noise ratios of $0.2$, $0.5$, $0.8$, and an asymmetric noise ratio of $0.4$.

In our experiments, we first pre-train ResNet-32 \cite{he2016deep} using SimCLR \cite{paper_simclr,simclr}  for 1000 epochs using the Adam optimizer \cite{kingma2014adam} with a learning rate of $3 \times 10^{-4}$, a weight decay of $1 \times 10 ^{-6}$ and a batch size of 128.  When pre-training, the last linear layer of ResNet-32 is replaced with a 2-layer projection head with an output dimensionality of 64. When pretraining is finished, we replace the projection head with a new, randomly initialized classification layer, and begin training normally.  
We also report the results when ELR, Mixup, and Crust are applied to fine-tune the pre-trained network.
For ELR, we use $\beta = 0.7$ for the temporal ensembling parameter, and $\lambda = 3$ for the regularization strength. For mixup, we use a mixup strength of $\alpha = 1$.  For Crust, we choose a coreset ratio of $0.5$.

The results are shown in Table \ref{tab:cifar-table}.  We note that SimCLR pretraining leads to an across the board improvement for Crust, ELR, and Mixup.  For lower noise ratios, the improvement is marginal.  However, for extreme noise ratios, the improvement is more dramatic.  
In particular, pre-training boosts the performance of Crust by up to 44.1\%, ELR by up to 8.2\%, and Mixup by up to 34.1\% under 80\% noise.
We also note that under $80\%$ noise, SimCLR pretraining alone outperforms all methods without SimCLR pretraining on CIFAR-10 and CIFAR-100.

{
\textbf{Effects of Network Size}
Empirically, larger networks 
trained for longer can achieve smaller contrastive loss \cite{paper_simclr}, thus providing a representation closer to the optimal. In this regard, increasing network size should further improve the robustness to label noise. We confirm this by comparing pretrained ResNet34 (with 46x params) and ResNet32 in Table \ref{tab:size}. Both models are fintuned with Mixup. We see that increasing the network size can greatly improve the performance.  \looseness=-1}
\begin{table}[t]
\caption{Comparison between ResNet34 and ResNet32. We finetune the contrastively pretrained network with Mixup on CIFAR10/100 with different fractions of noise.
}\vspace{1mm}
\centering
\label{tab:size}
{
\begin{small}
\begin{tabular}{c|c|c|c|c}
\toprule
Dataset & \multicolumn{2}{c|}{CIFAR-10}   & \multicolumn{2}{c}{CIFAR-100}  \\ 
\hline
Noise & Sym 80 & Asym 40 & Sym 80 & Asym 40\\
\hline
ResNet-32 & 74.8 $\pm$.3 & 87.5$\pm$1.3 & 30.3$\pm$.4 & 55.5$\pm$.8 \\
ResNet-34 & 90.8$\pm$.6 & 90.4$\pm$.4 & 69.4$\pm$.3& 65.2$\pm$.2 \\\bottomrule
\end{tabular}
\end{small}
}
\vspace{-.7cm}
\end{table}



\begin{table}[t]
	\caption{Test accuracy on mini WebVision. The best test accuracy is marked in bold.  SimCLR (CL) pre-training leads to average improvements of $4.11\%$ and $3.20\%$ for mini Webvision and ImageNet respectively. 
	}
	\label{tab:webvision-table}
	\centering
	{\small
	\begin{tabular}{c|cc|cc}
		\toprule
		& \multicolumn{2}{c|}{WebVision} & \multicolumn{2}{c}{ImageNet} \\
		Method & Top-1 & Top-5 & Top-1 & Top-5 \\
		\midrule
    F-correction & 61.12 & 82.68 & 57.36 & 82.36 \\
    Decoupling & 62.54 & 84.74 & 58.26 & 82.26 \\
    Co-teaching & 63.58 & 85.20 & 61.48 & 84.70 \\
    MentorNet & 63.00 & 81.40& 57.80 & 79.92 \\
    D2L & 62.68 & 84.00 & 57.80 & 81.36\\
    INCV & 65.24 & 85.34 & 61.60 & 84.98 \\
    {Crust} & {72.40} & {89.56} & {67.36} & {87.84}\\
    Mixup & 71.38 & 87.36 & 68.34 & 88.44\\
    ELR & 76.26 & 91.26 & 68.71 & 87.84 \\\hline
    CL + E2E & 71.84 & 88.84 & 68.48 & 89.32\\
    CL + Mixup & 76.34 & 90.52 & \bf{72.25} & \bf{89.72}\\
    CL + ELR & \bf{79.52} & \bf{93.80} & 71.20 & 90.80\\
		\bottomrule
	\end{tabular}}
\end{table}

\subsection{Empirical Results on WebVision}
WebVision is large scale image dataset with noisy labels~\cite{li2017webvision}. It contains 2.4 million images crawled from Google Images search and Flickr that share the same 1000 classes as the ImageNet dataset. 
The noise ratio in classes varies from 0.5\% to 88\%, and the number of images per class varies from 300 to more than 10,000 (Fig. 4 in \cite{li2017webvision} shows the noise distribution).
We follow the setting in~\cite{jiang2018mentornet} and create a mini WebVision dataset that consists of the top 50 classes in the Google subset with 66,000 images. We use both WebVision and ImageNet test sets for testing the performance of the model. We train InceptionResNet-v2~\cite{szegedy2017inception} for 120 epochs with a starting learning rate of $0.02$, which we anneal by a factor of $0.01$ at epochs 40 and 80.  We use the SGD optimizer with a weight decay of $1 \times 10^{-3}$, and a minibatch size of 32.  For Mixup and ELR, we use the same hyperparameters as CIFAR.

Table~\ref{tab:webvision-table} shows the Top-1 and Top-5 accuracy of different methods evaluated on WebVision and ImageNet.  We see that for both ELR and Mixup, SimCLR pretraining leads to average improvements of $4.11\%$ and $3.20\%$ for mini Webvision and ImageNet respectively.  Furthermore, we note that SimCLR pretraining on its own outperforms every method without SimCLR pretraining, except ELR and Crust. 

\section{Conclusion}
We showed that representations learned by contrastive learning provably boosts robustness against noisy labels. In particular, contrastive learning  provides a representation matrix that has: 
{(i) a significant gap between {the prominent singular values and the remaining ones;}
(ii) a large alignment between the prominent singular vectors and the clean labels. {The above properties allow a linear layer trained on the representations to effectively learn the clean labels well while barely overfitting the noise.}}
Then we explained why fine-tuning all layers of a network pre-trained with contrastive learning can also achieve a good performance in the early phase. Crucially, contrastive learning greatly reduces the magnitude of nonprominant singular values of the initial Jacobian matrix, which slows down the overfitting.
Finally, we demonstrated that the initial robustness provided by contrastive learning
{can boost robust methods and achieve state-of-the-art performance under extreme noise levels.}
Our results confirm benefits of contrastive pretraining for robust machine learning. \looseness=-1

\section*{Acknowledgements}
This research was supported in part
by Cisco Systems and UCLA-Amazon Science Hub for Humanity and Artificial Intelligence.


\bibliography{icml_robust}

\begin{thebibliography}{51}
\providecommand{\natexlab}[1]{#1}
\providecommand{\url}[1]{\texttt{#1}}
\expandafter\ifx\csname urlstyle\endcsname\relax
  \providecommand{\doi}[1]{doi: #1}\else
  \providecommand{\doi}{doi: \begingroup \urlstyle{rm}\Url}\fi

\bibitem[Arora et~al.(2019)Arora, Khandeparkar, Khodak, Plevrakis, and
  Saunshi]{arora2019theoretical}
Arora, S., Khandeparkar, H., Khodak, M., Plevrakis, O., and Saunshi, N.
\newblock A theoretical analysis of contrastive unsupervised representation
  learning.
\newblock \emph{arXiv preprint arXiv:1902.09229}, 2019.

\bibitem[Cao et~al.(2020)Cao, Chen, Lu, Arechiga, Gaidon, and
  Ma]{cao2020heteroskedastic}
Cao, K., Chen, Y., Lu, J., Arechiga, N., Gaidon, A., and Ma, T.
\newblock Heteroskedastic and imbalanced deep learning with adaptive
  regularization.
\newblock \emph{arXiv preprint arXiv:2006.15766}, 2020.

\bibitem[Chen et~al.(2019)Chen, Liao, Chen, and Zhang]{chen2019understanding}
Chen, P., Liao, B.~B., Chen, G., and Zhang, S.
\newblock Understanding and utilizing deep neural networks trained with noisy
  labels.
\newblock In \emph{International Conference on Machine Learning}, pp.\
  1062--1070, 2019.

\bibitem[Chen et~al.(2020)Chen, Kornblith, Norouzi, and Hinton]{paper_simclr}
Chen, T., Kornblith, S., Norouzi, M., and Hinton, G.
\newblock A simple framework for contrastive learning of visual
  representations.
\newblock In \emph{International conference on machine learning}, pp.\
  1597--1607. PMLR, 2020.

\bibitem[Cheng et~al.(2021)Cheng, Zhu, Sun, and Liu]{demystifying}
Cheng, H., Zhu, Z., Sun, X., and Liu, Y.
\newblock Demystifying how self-supervised features improve training from noisy
  labels.
\newblock \emph{arXiv preprint arXiv:2110.09022}, 2021.

\bibitem[Deng et~al.(2009)Deng, Dong, Socher, Li, Li, and
  Fei-Fei]{imagenet_cvpr09}
Deng, J., Dong, W., Socher, R., Li, L.-J., Li, K., and Fei-Fei, L.
\newblock {ImageNet: A Large-Scale Hierarchical Image Database}.
\newblock In \emph{CVPR09}, 2009.

\bibitem[Eckart \& Young(1936)Eckart and Young]{Eckart_Young_Mirsky}
Eckart, C. and Young, G.
\newblock The approximation of one matrix by another of lower rank.
\newblock \emph{Psychometrika}, 1\penalty0 (3):\penalty0 211--218, 1936.

\bibitem[Floridi \& Chiriatti(2020)Floridi and Chiriatti]{floridi2020gpt}
Floridi, L. and Chiriatti, M.
\newblock Gpt-3: Its nature, scope, limits, and consequences.
\newblock \emph{Minds and Machines}, 30\penalty0 (4):\penalty0 681--694, 2020.

\bibitem[Ghosh \& Lan(2021)Ghosh and Lan]{ghosh2021contrastive}
Ghosh, A. and Lan, A.
\newblock Contrastive learning improves model robustness under label noise.
\newblock In \emph{Proceedings of the IEEE/CVF Conference on Computer Vision
  and Pattern Recognition}, pp.\  2703--2708, 2021.

\bibitem[Ghosh et~al.(2017)Ghosh, Kumar, and Sastry]{ghosh2017robust}
Ghosh, A., Kumar, H., and Sastry, P.
\newblock Robust loss functions under label noise for deep neural networks.
\newblock In \emph{Thirty-First AAAI Conference on Artificial Intelligence},
  2017.

\bibitem[Goldberger \& Ben-Reuven(2016)Goldberger and
  Ben-Reuven]{goldberger2016training}
Goldberger, J. and Ben-Reuven, E.
\newblock Training deep neural-networks using a noise adaptation layer.
\newblock 2016.

\bibitem[Han et~al.(2018)Han, Yao, Yu, Niu, Xu, Hu, Tsang, and
  Sugiyama]{han2018co}
Han, B., Yao, Q., Yu, X., Niu, G., Xu, M., Hu, W., Tsang, I., and Sugiyama, M.
\newblock Co-teaching: Robust training of deep neural networks with extremely
  noisy labels.
\newblock In \emph{Advances in neural information processing systems}, pp.\
  8527--8537, 2018.

\bibitem[HaoChen et~al.(2021)HaoChen, Wei, Gaidon, and
  Ma]{paper_ssl_contra_loss}
HaoChen, J.~Z., Wei, C., Gaidon, A., and Ma, T.
\newblock Provable guarantees for self-supervised deep learning with spectral
  contrastive loss.
\newblock \emph{arXiv preprint arXiv:2106.04156}, 2021.

\bibitem[He et~al.(2016)He, Zhang, Ren, and Sun]{he2016deep}
He, K., Zhang, X., Ren, S., and Sun, J.
\newblock Deep residual learning for image recognition.
\newblock In \emph{Proceedings of the IEEE conference on computer vision and
  pattern recognition}, pp.\  770--778, 2016.

\bibitem[Hendrycks et~al.(2019)Hendrycks, Mazeika, Kadavath, and
  Song]{hendrycks2019usingssl}
Hendrycks, D., Mazeika, M., Kadavath, S., and Song, D.
\newblock Using self-supervised learning can improve model robustness and
  uncertainty.
\newblock \emph{arXiv preprint arXiv:1906.12340}, 2019.

\bibitem[Hu et~al.(2020)Hu, Xiao, Adlam, and Pennington]{hu2020surprising}
Hu, W., Xiao, L., Adlam, B., and Pennington, J.
\newblock The surprising simplicity of the early-time learning dynamics of
  neural networks.
\newblock \emph{Advances in Neural Information Processing Systems},
  33:\penalty0 17116--17128, 2020.

\bibitem[Jiang et~al.(2018)Jiang, Zhou, Leung, Li, and
  Fei-Fei]{jiang2018mentornet}
Jiang, L., Zhou, Z., Leung, T., Li, L.-J., and Fei-Fei, L.
\newblock Mentornet: Learning data-driven curriculum for very deep neural
  networks on corrupted labels.
\newblock In \emph{International Conference on Machine Learning}, pp.\
  2309--2318, 2018.

\bibitem[Kalimeris et~al.(2019)Kalimeris, Kaplun, Nakkiran, Edelman, Yang,
  Barak, and Zhang]{kalimeris2019sgd}
Kalimeris, D., Kaplun, G., Nakkiran, P., Edelman, B., Yang, T., Barak, B., and
  Zhang, H.
\newblock Sgd on neural networks learns functions of increasing complexity.
\newblock \emph{Advances in neural information processing systems}, 32, 2019.

\bibitem[Kingma \& Ba(2014)Kingma and Ba]{kingma2014adam}
Kingma, D.~P. and Ba, J.
\newblock Adam: A method for stochastic optimization.
\newblock \emph{arXiv preprint arXiv:1412.6980}, 2014.

\bibitem[Krishna et~al.(2016)Krishna, Hata, Chen, Kravitz, Shamma, Fei-Fei, and
  Bernstein]{krishna2016embracing}
Krishna, R.~A., Hata, K., Chen, S., Kravitz, J., Shamma, D.~A., Fei-Fei, L.,
  and Bernstein, M.~S.
\newblock Embracing error to enable rapid crowdsourcing.
\newblock In \emph{Proceedings of the 2016 CHI conference on human factors in
  computing systems}, pp.\  3167--3179, 2016.

\bibitem[Krizhevsky \& Hinton(2009)Krizhevsky and
  Hinton]{krizhevsky2009learning}
Krizhevsky, A. and Hinton, G.
\newblock Learning multiple layers of features from tiny images.
\newblock Technical report, Citeseer, 2009.

\bibitem[Li et~al.(2020)Li, Socher, and Hoi]{li2020dividemix}
Li, J., Socher, R., and Hoi, S.~C.
\newblock Dividemix: Learning with noisy labels as semi-supervised learning.
\newblock \emph{arXiv preprint arXiv:2002.07394}, 2020.

\bibitem[Li et~al.(2017)Li, Wang, Li, Agustsson, and Van~Gool]{li2017webvision}
Li, W., Wang, L., Li, W., Agustsson, E., and Van~Gool, L.
\newblock Webvision database: Visual learning and understanding from web data.
\newblock \emph{arXiv preprint arXiv:1708.02862}, 2017.

\bibitem[Liu et~al.(2020)Liu, Niles-Weed, Razavian, and
  Fernandez-Granda]{liu2020early}
Liu, S., Niles-Weed, J., Razavian, N., and Fernandez-Granda, C.
\newblock Early-learning regularization prevents memorization of noisy labels.
\newblock \emph{arXiv preprint arXiv:2007.00151}, 2020.

\bibitem[Ma et~al.(2018)Ma, Wang, Houle, Zhou, Erfani, Xia, Wijewickrema, and
  Bailey]{ma2018dimensionality}
Ma, X., Wang, Y., Houle, M.~E., Zhou, S., Erfani, S., Xia, S., Wijewickrema,
  S., and Bailey, J.
\newblock Dimensionality-driven learning with noisy labels.
\newblock In \emph{International Conference on Machine Learning}, pp.\
  3355--3364, 2018.

\bibitem[Malach \& Shalev-Shwartz(2017)Malach and
  Shalev-Shwartz]{malach2017decoupling}
Malach, E. and Shalev-Shwartz, S.
\newblock Decoupling" when to update" from" how to update".
\newblock In \emph{Advances in Neural Information Processing Systems}, pp.\
  960--970, 2017.

\bibitem[Minc(1970)]{maximal_eigenvec}
Minc, H.
\newblock On the maximal eigenvector of a positive matrix.
\newblock \emph{SIAM Journal on Numerical Analysis}, 7\penalty0 (3):\penalty0
  424--427, 1970.

\bibitem[Mirzasoleiman et~al.(2020)Mirzasoleiman, Cao, and
  Leskovec]{mirzasoleiman2020coresets}
Mirzasoleiman, B., Cao, K., and Leskovec, J.
\newblock Coresets for robust training of deep neural networks against noisy
  labels.
\newblock \emph{Advances in Neural Information Processing Systems}, 33, 2020.

\bibitem[Oord et~al.(2018)Oord, Li, and Vinyals]{oord2018representation}
Oord, A. v.~d., Li, Y., and Vinyals, O.
\newblock Representation learning with contrastive predictive coding.
\newblock \emph{arXiv preprint arXiv:1807.03748}, 2018.

\bibitem[Oymak et~al.(2019)Oymak, Fabian, Li, and
  Soltanolkotabi]{oymak2019generalization}
Oymak, S., Fabian, Z., Li, M., and Soltanolkotabi, M.
\newblock Generalization guarantees for neural networks via harnessing the
  low-rank structure of the jacobian.
\newblock \emph{arXiv preprint arXiv:1906.05392}, 2019.

\bibitem[Paszke et~al.(2017)Paszke, Gross, Chintala, Chanan, Yang, DeVito, Lin,
  Desmaison, Antiga, and Lerer]{paszke2017automatic}
Paszke, A., Gross, S., Chintala, S., Chanan, G., Yang, E., DeVito, Z., Lin, Z.,
  Desmaison, A., Antiga, L., and Lerer, A.
\newblock Automatic differentiation in pytorch.
\newblock 2017.

\bibitem[Patrini et~al.(2017)Patrini, Rozza, Krishna~Menon, Nock, and
  Qu]{patrini2017making}
Patrini, G., Rozza, A., Krishna~Menon, A., Nock, R., and Qu, L.
\newblock Making deep neural networks robust to label noise: A loss correction
  approach.
\newblock In \emph{Proceedings of the IEEE Conference on Computer Vision and
  Pattern Recognition}, pp.\  1944--1952, 2017.

\bibitem[Reed et~al.(2014)Reed, Lee, Anguelov, Szegedy, Erhan, and
  Rabinovich]{reed2014training}
Reed, S., Lee, H., Anguelov, D., Szegedy, C., Erhan, D., and Rabinovich, A.
\newblock Training deep neural networks on noisy labels with bootstrapping.
\newblock \emph{arXiv preprint arXiv:1412.6596}, 2014.

\bibitem[Ren et~al.(2018)Ren, Zeng, Yang, and Urtasun]{ren2018learning}
Ren, M., Zeng, W., Yang, B., and Urtasun, R.
\newblock Learning to reweight examples for robust deep learning.
\newblock In \emph{International Conference on Machine Learning}, pp.\
  4334--4343, 2018.

\bibitem[SimCLR()]{simclr}
SimCLR.
\newblock https://github.com/spijkervet/simclr.

\bibitem[Sohn(2016)]{sohn2016improved}
Sohn, K.
\newblock Improved deep metric learning with multi-class n-pair loss objective.
\newblock In \emph{Advances in neural information processing systems}, pp.\
  1857--1865, 2016.

\bibitem[Stewart(1990)]{stewart1990matrix}
Stewart, G.~W.
\newblock Matrix perturbation theory.
\newblock 1990.

\bibitem[Szegedy et~al.(2017)Szegedy, Ioffe, Vanhoucke, and
  Alemi]{szegedy2017inception}
Szegedy, C., Ioffe, S., Vanhoucke, V., and Alemi, A.~A.
\newblock Inception-v4, inception-resnet and the impact of residual connections
  on learning.
\newblock In \emph{Thirty-first AAAI conference on artificial intelligence},
  2017.

\bibitem[Tanaka et~al.(2018)Tanaka, Ikami, Yamasaki, and
  Aizawa]{tanaka2018joint}
Tanaka, D., Ikami, D., Yamasaki, T., and Aizawa, K.
\newblock Joint optimization framework for learning with noisy labels.
\newblock In \emph{Proceedings of the IEEE Conference on Computer Vision and
  Pattern Recognition}, pp.\  5552--5560, 2018.

\bibitem[Tosh et~al.(2021)Tosh, Krishnamurthy, and Hsu]{tosh2021contrastive}
Tosh, C., Krishnamurthy, A., and Hsu, D.
\newblock Contrastive estimation reveals topic posterior information to linear
  models.
\newblock \emph{Journal of Machine Learning Research}, 22\penalty0
  (281):\penalty0 1--31, 2021.

\bibitem[Van~Rooyen et~al.(2015)Van~Rooyen, Menon, and
  Williamson]{van2015learning}
Van~Rooyen, B., Menon, A., and Williamson, R.~C.
\newblock Learning with symmetric label noise: The importance of being
  unhinged.
\newblock In \emph{Advances in Neural Information Processing Systems}, pp.\
  10--18, 2015.

\bibitem[Wang et~al.(2019)Wang, Hua, Kodirov, and Robertson]{wang2019imae}
Wang, X., Hua, Y., Kodirov, E., and Robertson, N.~M.
\newblock Imae for noise-robust learning: Mean absolute error does not treat
  examples equally and gradient magnitude's variance matters.
\newblock \emph{arXiv preprint arXiv:1903.12141}, 2019.

\bibitem[Wedin(1972)]{wedin1972perturbation}
Wedin, P.-{\AA}.
\newblock Perturbation bounds in connection with singular value decomposition.
\newblock \emph{BIT Numerical Mathematics}, 12\penalty0 (1):\penalty0 99--111,
  1972.

\bibitem[Wu et~al.(2018)Wu, Xiong, Yu, and Lin]{wu2018unsupervised}
Wu, Z., Xiong, Y., Yu, S.~X., and Lin, D.
\newblock Unsupervised feature learning via non-parametric instance
  discrimination.
\newblock In \emph{Proceedings of the IEEE conference on computer vision and
  pattern recognition}, pp.\  3733--3742, 2018.

\bibitem[Xia et~al.(2019)Xia, Liu, Wang, Han, Gong, Niu, and
  Sugiyama]{xia2019anchor}
Xia, X., Liu, T., Wang, N., Han, B., Gong, C., Niu, G., and Sugiyama, M.
\newblock Are anchor points really indispensable in label-noise learning?
\newblock In \emph{Advances in Neural Information Processing Systems}, pp.\
  6838--6849, 2019.

\bibitem[Xu et~al.(2019)Xu, Cao, Kong, and Wang]{xu2019l_dmi}
Xu, Y., Cao, P., Kong, Y., and Wang, Y.
\newblock L\_dmi: A novel information-theoretic loss function for training deep
  nets robust to label noise.
\newblock In \emph{Advances in Neural Information Processing Systems}, pp.\
  6225--6236, 2019.

\bibitem[Zhang et~al.(2016)Zhang, Bengio, Hardt, Recht, and
  Vinyals]{zhang2016understanding}
Zhang, C., Bengio, S., Hardt, M., Recht, B., and Vinyals, O.
\newblock Understanding deep learning requires rethinking generalization.
\newblock \emph{arXiv preprint arXiv:1611.03530}, 2016.

\bibitem[Zhang et~al.(2017)Zhang, Cisse, Dauphin, and
  Lopez-Paz]{zhang2017mixup}
Zhang, H., Cisse, M., Dauphin, Y.~N., and Lopez-Paz, D.
\newblock mixup: Beyond empirical risk minimization.
\newblock \emph{arXiv preprint arXiv:1710.09412}, 2017.

\bibitem[Zhang et~al.(2020)Zhang, Lee, Arik, Pfister, and
  Zhang]{zhang2020distilling}
Zhang, H., Lee, H., Arik, S., Pfister, T., and Zhang, Z.
\newblock Distilling effective supervision from severe label noise.
\newblock 2020.

\bibitem[Zhang \& Sabuncu(2018)Zhang and Sabuncu]{zhang2018generalized}
Zhang, Z. and Sabuncu, M.
\newblock Generalized cross entropy loss for training deep neural networks with
  noisy labels.
\newblock In \emph{Advances in neural information processing systems}, pp.\
  8778--8788, 2018.

\bibitem[Zheltonozhskii et~al.(2022)Zheltonozhskii, Baskin, Mendelson,
  Bronstein, and Litany]{C2D}
Zheltonozhskii, E., Baskin, C., Mendelson, A., Bronstein, A.~M., and Litany, O.
\newblock Contrast to divide: Self-supervised pre-training for learning with
  noisy labels.
\newblock In \emph{Proceedings of the IEEE/CVF Winter Conference on
  Applications of Computer Vision}, pp.\  1657--1667, 2022.

\end{thebibliography}
\bibliographystyle{icml2022}

\newpage
\appendix
\onecolumn
\section{Analysis for Disconnected Subclasses}
In this section we consider the case where $\xi=0$ in assumption \ref{assump: block_with_off_diag}, which implies that the probability of two augmentation data from different subclasses being generated from the same random natural datum is exactly zero. And in section \ref{sec: non_zero_off_diag} we extend the results to any $\xi\in[0, 1)$ via eigenvalue and eigenvector perturbation. We use $\|\cdot\|_1$, $\|\cdot\|_2$ and $\|\|_F$ to denote the $1$-norm, operator norm and Frobenius norm, respectively.

\subsection{Spectral Decomposition of Augmentation Graph} 
An important technical idea we use to formalize the representations obtained by contrastive learning is augmentation graph \cite{paper_ssl_contra_loss}, which is an undirected graph with all augmentation data $\{\pmb{x}_1, \pmb{x}_2, \dots, \pmb{x}_n\}$ as its vertices and $w_{\pmb{x}_i \pmb{x}_j}$ as the weight for edge $(\pmb{x}_i, \pmb{x}_j)$. Let $\pmb{A}$ denote the adjacency matrix of the augmentation graph, that is, each entry $a_{ij}=w_{\pmb{x}_i\pmb{x}_j}$. And the normalized adjacency matrix $\overline{\pmb{A}}$ is defined as
\begin{align}
    \nonumber
    \overline{\pmb{A}} \coloneqq \pmb{D}^{-1/2} \pmb{A} \pmb{D}^{-1/2},
\end{align}
where $\pmb{D} = \text{diag}(w_{\pmb{x}_1}, w_{\pmb{x}_2}, \dots, w_{\pmb{x}_n})$ with $w_{\pmb{x}_i} = \sum_{j=1}^n w_{\pmb{x}_i\pmb{x}_j}$. For simplicity we index the augmentation data in the following way: the first $n_1$ data are from subclass $1$, the next $n_2$ data are from subclass $2$, \dots, the last $n_{\bar{K}}$ data are from subclass $\bar{K}$. Lemma \ref{lemma: A_bar} states an important property of $\overline{\pmb{A}}$.
\begin{lemma}\label{lemma: A_bar}
Assumption \ref{assump: block_with_off_diag} with $\xi=0$ implies that $\overline{\pmb{A}}$ is a block diagonal matrix $\textbf{diag}(\overline{\pmb{A}}_1, \overline{\pmb{A}}_2, \dots, \overline{\pmb{A}}_{\bar{K}})$ where $\overline{\pmb{A}}_{\bar{k}} \in \mathbb{R}^{n_{\bar{k}}\times n_{\bar{k}}}$. This combined with \ref{assump: ratio} gives us: for each block $\overline{\pmb{A}}_{\bar{k}}$, the ratio between two entries in the same column is bounded as follows
\begin{align}
    \nonumber
    \forall \bar{k}\in[\bar{K}], \max_{j, s, t} \frac{\bar{a}_{\bar{k},s,j}}{\bar{a}_{\bar{k}, t, j}} \leq 1+\delta',
\end{align}
where $\bar{a}_{k, i, j}$ are the entries of $\overline{\pmb{A}}_{\bar{k}}$ and $\delta' = (1+\delta)^{3/2}-1 = \frac{3}{2}\delta+\mathcal{O}(\delta^2)$.
\end{lemma}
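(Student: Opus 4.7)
The lemma splits cleanly into two independent pieces, so my plan is to handle them separately.

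For the block-diagonal claim, I would start from the weight definition $w_{\pmb{x}_i\pmb{x}_j}=\mathbb{E}_{\pmb{x}^*\sim P}[\mathcal{A}(\pmb{x}_i|\pmb{x}^*)\mathcal{A}(\pmb{x}_j|\pmb{x}^*)]$ and invoke Assumption \ref{assump: block_with_off_diag} with $\xi=0$. Specifically, for any $\pmb{x}_i,\pmb{x}_j$ from different subclasses and any same-subclass pair $\pmb{x}_s,\pmb{x}_t$ with $w_{\pmb{x}_s\pmb{x}_t}>0$, the ratio $w_{\pmb{x}_i\pmb{x}_j}/w_{\pmb{x}_s\pmb{x}_t}\leq 0$, forcing $w_{\pmb{x}_i\pmb{x}_j}=0$. (The degenerate case where every same-subclass weight vanishes is not compatible with Assumption \ref{assump: ratio}, so I can rule it out.) Under the given indexing of vertices by subclass, this immediately makes $\pmb{A}$ block diagonal with blocks $\pmb{A}_{\bar{k}}\in\mathbb{R}^{n_{\bar{k}}\times n_{\bar{k}}}$. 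Since the degree $w_{\pmb{x}_i}=\sum_j w_{\pmb{x}_i\pmb{x}_j}$ only sums weights within the same subclass, $\pmb{D}$ is block diagonal as well, so conjugating by $\pmb{D}^{-1/2}$ preserves the block structure, giving $\overline{\pmb{A}}=\textbf{diag}(\overline{\pmb{A}}_1,\dots,\overline{\pmb{A}}_{\bar{K}})$.

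For the entry-ratio bound, fix a block $\bar{k}$ and indices $s,t,j$ within it. Using $\bar{a}_{ij}=w_{\pmb{x}_i\pmb{x}_j}/\sqrt{w_{\pmb{x}_i}w_{\pmb{x}_j}}$, I would write
\begin{align}
\nonumber
\frac{\bar{a}_{\bar{k},s,j}}{\bar{a}_{\bar{k},t,j}} \;=\; \frac{w_{\pmb{x}_s\pmb{x}_j}}{w_{\pmb{x}_t\pmb{x}_j}}\cdot\sqrt{\frac{w_{\pmb{x}_t}}{w_{\pmb{x}_s}}}.
\end{align}
The first factor is bounded by $1+\delta$ directly by Assumption \ref{assump: ratio}. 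For the degree ratio, I use the block-diagonal structure to write $w_{\pmb{x}_s}=\sum_{k}w_{\pmb{x}_s\pmb{x}_k}$ where the sum runs only over vertices in subclass $\bar{k}$; then applying Assumption \ref{assump: ratio} termwise gives $w_{\pmb{x}_t\pmb{x}_k}\leq(1+\delta)\,w_{\pmb{x}_s\pmb{x}_k}$ for every such $k$, and summing yields $w_{\pmb{x}_t}/w_{\pmb{x}_s}\leq 1+\delta$. Combining the two factors gives the bound $(1+\delta)\cdot\sqrt{1+\delta}=(1+\delta)^{3/2}=1+\delta'$, as claimed.

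Neither step is genuinely hard; the only subtlety is making sure the degree-ratio argument legitimately uses Assumption \ref{assump: ratio} termwise, which requires the block-diagonal conclusion from part one so that the summation range is restricted to the same subclass. The Taylor expansion $\delta'=\tfrac{3}{2}\delta+\mathcal{O}(\delta^2)$ is just the binomial series for $(1+\delta)^{3/2}-1$ and can be stated without further justification.
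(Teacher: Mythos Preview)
Your proof is correct and follows the natural approach; the paper itself does not spell out a proof of this lemma, treating both claims as immediate consequences of the definitions and assumptions. Your argument is exactly the computation one would expect: the block-diagonal structure is forced by $\xi=0$ annihilating the off-block entries of $\pmb{A}$ (and conjugation by the diagonal $\pmb{D}^{-1/2}$ preserves this), and the ratio bound follows from the cancellation $\bar{a}_{\bar{k},s,j}/\bar{a}_{\bar{k},t,j}=(w_{\pmb{x}_s\pmb{x}_j}/w_{\pmb{x}_t\pmb{x}_j})\sqrt{w_{\pmb{x}_t}/w_{\pmb{x}_s}}$ together with the termwise application of Assumption~\ref{assump: ratio} to the degree sums, which is legitimate precisely because the block-diagonal conclusion restricts those sums to the same subclass.
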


Let $f_{\min} = \argmin_{f} \mathfrak{C}(f)$ and $\pmb{F}_{\min} = [f_{\min}(\pmb{x}_1) ~ f_{\min}(\pmb{x}_2) \dots f_{\min}(\pmb{x}_n)]^{\top}$, according the theorem in \cite{paper_ssl_contra_loss}, $\pmb{F}_{\min}$ is also the minimizer of the following matrix factorization problem
\begin{align}
    \label{eq: mf}
    \min_{\pmb{F}\in \mathbb{R}^{n\times p}}  \|\overline{\pmb{A}}-\pmb{F}\pmb{F}^{\top}\|_F^2,
\end{align}
and therefore can be further decomposed as 
\begin{align}
    \label{eq: decomp_F}
    \pmb{F}_{\min} = \pmb{F}^* \pmb{\Sigma} \pmb{R},
\end{align}
by Eckart–Young–Mirsky theorem \cite{Eckart_Young_Mirsky}, where $\pmb{F}^* \in \mathbb{R}^{n\times p} = [\pmb{v}_1, \pmb{v}_2, \dots, \pmb{v}_p] \in \mathbb{R}^{n\times p}$, $\pmb{\Sigma} = \textbf{diag}(\sqrt{\lambda_1}, \sqrt{\lambda_2}, \dots,  \sqrt{\lambda_p})$, $\pmb{R}\in \mathbb{R}^{p\times p}$ is some orthogonal matrix, $\lambda_1, \lambda_2, \dots, \lambda_p$ are the $p$ largest eigenvalues of $\overline{\pmb{A}}$ and $\pmb{v}_1, \pmb{v}_2, \dots, \pmb{v}_p$ are the corresponding unit-norm eigenvectors. W.l.o.g.,  we assume $\lambda_1\geq \lambda_2 \geq \dots \geq \lambda_p$. Our following proofs are based on this decomposition. To avoid cluttered notation we drop the subscript of $\pmb{F}_{\min}$, i.e., we use $\pmb{F}$ for the \emph{learned} representation.

\subsection{Properties of Singular Values/Vectors of the Representation Matrix }\label{sec:property_A}
From the above section we know that the singular values/vectors of $\pmb{F}$ are the first $p$ eigenvalues/vectors of $\overline{\pmb{A}}$. For each block $\overline{\pmb{A}}_{\bar{k}}$, let $\lambda_{\bar{k}, 1}, \lambda_{\bar{k},2}, \dots, \lambda_{\bar{k}, n_{\bar{k}}}$ denote the eigenvalues (in descending order) and $\pmb{v}_{\bar{k},1}, \pmb{v}_{\bar{k},2}, \dots, \pmb{v}_{\bar{k}, n_{\bar{k}}}$ denote the corresponding eigenvectors. The eigenvalues of $\overline{\pmb{A}}$ are the list of the eigenvalues of all blocks. The corresponding eigenvectors are the block vectors $(\vec{0}_1,\vec{0}_2,\dots, \vec{0}_{\bar{k}-1}, \pmb{v}_{\bar{k},i}, \vec{0}_{\bar{k}+1}, \dots, \vec{0}_{\bar{K}})\coloneqq \dot{\pmb{v}}_{\bar{k}, i}$ with each $\vec{0}_j$ being a zero vector of length $n_j$. Since $\overline{\pmb{A}}$ is a normalized adjacency matrix, each block $\overline{\pmb{A}}_{\bar{k}}$ is also normalized. Then the largest eigenvalue for each block is exactly $1$, i.e., $\lambda_{\bar{k},1}=1$. It immediately follow Lemma \ref{lemma:largest_egvalue}.
\begin{lemma}\label{lemma:largest_egvalue}
The $\bar{K}$ largest eigenvalues of $\overline{\pmb{A}}$ are all $1$, i.e., $\lambda_1=\lambda_2=\dots=\lambda_{\bar{K}}=1$.
\end{lemma}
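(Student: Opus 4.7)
The plan is to combine Lemma~\ref{lemma: A_bar} with the standard spectral theory of normalized adjacency matrices of nonnegative weighted graphs. Since Lemma~\ref{lemma: A_bar} establishes that $\overline{\pmb{A}} = \textbf{diag}(\overline{\pmb{A}}_1,\dots,\overline{\pmb{A}}_{\bar{K}})$, the spectrum of $\overline{\pmb{A}}$ is the multiset union of the spectra of its $\bar{K}$ diagonal blocks. Thus it suffices to prove that each block contributes exactly one copy of the eigenvalue $1$ as its maximum; these $\bar{K}$ copies together form the top $\bar{K}$ eigenvalues.

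For each block I would write $\overline{\pmb{A}}_{\bar{k}} = \pmb{D}_{\bar{k}}^{-1/2} \pmb{A}_{\bar{k}} \pmb{D}_{\bar{k}}^{-1/2}$, where $\pmb{A}_{\bar{k}}$ is the corresponding diagonal block of $\pmb{A}$ and $\pmb{D}_{\bar{k}}$ is the diagonal matrix of row sums of $\pmb{A}_{\bar{k}}$. A key observation is that with $\xi=0$, the total row sum of $\pmb{A}$ at any vertex equals its within-block row sum, so $\pmb{D}_{\bar{k}}$ coincides with the restriction of $\pmb{D}$ to block $\bar{k}$; the normalization in each block is therefore self-contained. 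A direct computation then verifies that $\pmb{D}_{\bar{k}}^{1/2}\mathbf{1}$ is an eigenvector with eigenvalue $1$:
\[
\overline{\pmb{A}}_{\bar{k}}\, \pmb{D}_{\bar{k}}^{1/2}\mathbf{1} = \pmb{D}_{\bar{k}}^{-1/2} \pmb{A}_{\bar{k}}\mathbf{1} = \pmb{D}_{\bar{k}}^{-1/2} \pmb{D}_{\bar{k}}\mathbf{1} = \pmb{D}_{\bar{k}}^{1/2}\mathbf{1}.
\]

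To see that $1$ is in fact the \emph{largest} eigenvalue of each block, I would use the similarity $\overline{\pmb{A}}_{\bar{k}} = \pmb{D}_{\bar{k}}^{1/2}(\pmb{D}_{\bar{k}}^{-1}\pmb{A}_{\bar{k}})\pmb{D}_{\bar{k}}^{-1/2}$, so $\overline{\pmb{A}}_{\bar{k}}$ shares eigenvalues with the random-walk transition matrix $\pmb{D}_{\bar{k}}^{-1}\pmb{A}_{\bar{k}}$. The latter is row-stochastic with nonnegative entries, so by the standard Gershgorin/Perron--Frobenius argument its spectral radius is exactly $1$. Hence every eigenvalue of $\overline{\pmb{A}}_{\bar{k}}$ lies in $[-1,1]$, and we have just exhibited the value $1$, making it the maximum. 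Assembling the three steps yields $\lambda_1=\lambda_2=\cdots=\lambda_{\bar{K}}=1$.

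The main obstacle is minimal: the argument reduces to Lemma~\ref{lemma: A_bar} plus textbook facts about normalized adjacency matrices. The one subtlety worth checking carefully is the block-degree identity in the second step; without $\xi=0$ the within-block normalization would no longer match the global one, and the clean eigenvalue-$1$ statement would break, which is exactly why the more delicate eigenvalue perturbation analysis of Section~\ref{sec: non_zero_off_diag} is needed for general $\xi$.
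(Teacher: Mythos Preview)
Your proposal is correct and follows essentially the same approach as the paper: the paper simply notes that the block-diagonal structure of $\overline{\pmb{A}}$ (from Lemma~\ref{lemma: A_bar}) makes each block $\overline{\pmb{A}}_{\bar{k}}$ itself a normalized adjacency matrix, whose largest eigenvalue is therefore $1$, and states Lemma~\ref{lemma:largest_egvalue} as an immediate consequence. Your write-up merely unpacks these standard spectral-graph facts (the Perron eigenvector $\pmb{D}_{\bar{k}}^{1/2}\mathbf{1}$ and the similarity to a row-stochastic matrix) in more detail, together with the key observation that $\xi=0$ makes the within-block and global normalizations coincide.
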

As long as $p\geq \bar{K}$, all $\lambda_{\bar{k}, 1}$ and $\pmb{v}_{\bar{k}, 1}$ appear in  the decomposition of $\pmb{F}$. Let $p_{\bar{k}}\geq 1$ be the number of eigenvalues/eigenvectors of block $\overline{\pmb{A}}_{\bar{k}}$ that also appear in the decomposition of $\pmb{F}$. The following Lemmas and Corollaries states other important properties of eigenvalues/eigenvectors of $\overline{\pmb{A}}$ .\looseness=-1

\begin{lemma}\label{lemma: perron_vec} By assumption \ref{assump: ratio}, the $1$-norm of $\pmb{v}_{\bar{k},1}$ has the following lower bound.
\begin{align}
    \nonumber
    \|\pmb{v}_{\bar{k},1}\|_1^2 \geq \frac{n_{\bar{k}}^2}{1+(n_{\bar{k}}-1)(1+\delta')^2} = n_{\bar{k}}-2(n_{\bar{k}}-1)\delta'-\mathcal{O}(\delta'^2).
\end{align}
\end{lemma}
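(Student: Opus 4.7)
The plan is to exploit the Perron--Frobenius structure of each block $\overline{\pmb{A}}_{\bar{k}}$ together with the column-ratio bound from Lemma~\ref{lemma: A_bar} to pin the entries of $\pmb{v}_{\bar{k},1}$ inside a narrow multiplicative window, and then convert the unit $\ell_2$ normalization into an $\ell_1$ lower bound. First, I would note that Assumption~\ref{assump: ratio} forces every entry of $\overline{\pmb{A}}_{\bar{k}}$ to be strictly positive (no entry ratio can be finite if the denominator vanishes), so $\overline{\pmb{A}}_{\bar{k}}$ is a positive matrix whose top eigenvalue equals $1$ by Lemma~\ref{lemma:largest_egvalue}. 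Perron--Frobenius then guarantees that the associated eigenvector $\pmb{v}_{\bar{k},1}$ can be chosen with strictly positive coordinates, which I denote $v_1,\dots,v_{n_{\bar{k}}}$, with $v_{\min}\coloneqq\min_i v_i>0$ and $v_{\max}\coloneqq\max_i v_i$.

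Second, I would write the eigenvector equation coordinatewise as $v_s=\sum_j \bar{a}_{\bar{k},s,j}\,v_j$ and $v_t=\sum_j \bar{a}_{\bar{k},t,j}\,v_j$, and apply the column-ratio bound $\bar{a}_{\bar{k},s,j}/\bar{a}_{\bar{k},t,j}\le 1+\delta'$ of Lemma~\ref{lemma: A_bar} termwise; since all $v_j>0$, this yields $v_s\le(1+\delta')\,v_t$ for every $s,t$, and hence the multiplicative pinning $v_{\max}\le(1+\delta')\,v_{\min}$. With this in hand, I lower-bound the $\ell_1$ norm in two steps. On one side, $v_i\ge v_{\min}$ for every $i$ gives $\|\pmb{v}_{\bar{k},1}\|_1\ge n_{\bar{k}}\,v_{\min}$, so $\|\pmb{v}_{\bar{k},1}\|_1^2\ge n_{\bar{k}}^2\,v_{\min}^2$. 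On the other side, splitting off the argmin coordinate and bounding all others by $v_{\max}\le(1+\delta')\,v_{\min}$ in the normalization identity $\sum_i v_i^2=1$ yields
\begin{equation*}
1\;\le\; v_{\min}^2+(n_{\bar{k}}-1)(1+\delta')^2\,v_{\min}^2\;=\;v_{\min}^2\bigl(1+(n_{\bar{k}}-1)(1+\delta')^2\bigr),
\end{equation*}
so $v_{\min}^2\ge 1/\bigl(1+(n_{\bar{k}}-1)(1+\delta')^2\bigr)$. Substituting this into the first inequality produces the announced lower bound, and a first-order Taylor expansion of $n_{\bar{k}}^2/\bigl(1+(n_{\bar{k}}-1)(1+\delta')^2\bigr)$ around $\delta'=0$ (using $1+(n_{\bar{k}}-1)(1+\delta')^2 = n_{\bar{k}} + 2(n_{\bar{k}}-1)\delta'+(n_{\bar{k}}-1)\delta'^2$ and $(1+x)^{-1}=1-x+\mathcal{O}(x^2)$) delivers the second form $n_{\bar{k}}-2(n_{\bar{k}}-1)\delta'-\mathcal{O}(\delta'^2)$.

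There is no deep obstacle here; the step requiring a bit of care is the strict positivity of $v_{\min}$, which is what allows the multiplicative comparison $v_{\max}\le(1+\delta')\,v_{\min}$ to be meaningful rather than vacuous. This is handled cleanly by Perron--Frobenius applied to the positive matrix $\overline{\pmb{A}}_{\bar{k}}$, with positivity of its entries itself being a consequence of the finite ratio bound in Assumption~\ref{assump: ratio}. The rest of the argument is an elementary two-sided sandwich between $\ell_1$ and $\ell_2$ norms using only that all coordinates lie in the interval $[v_{\min},(1+\delta')v_{\min}]$.
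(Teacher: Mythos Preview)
Your proposal is correct and follows essentially the same approach as the paper: invoke Perron--Frobenius for positivity, bound the ratio $v_{\max}/v_{\min}\le 1+\delta'$, then sandwich $\|\pmb{v}_{\bar{k},1}\|_1^2$ between $n_{\bar{k}}^2 v_{\min}^2$ and the unit $\ell_2$ constraint. The only cosmetic difference is that the paper cites an external reference for the eigenvector entry-ratio bound $\max_{i,j} e_i/e_j \le \max_{j,s,t} \bar{a}_{\bar{k},s,j}/\bar{a}_{\bar{k},t,j}$, whereas you derive it directly by writing out the eigenvector equation termwise---your argument is slightly more self-contained.
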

\begin{proof}
Write $\pmb{v}_{\bar{k},1}$ as $[e_1, e_2, \dots, e_{n_{\bar{k}}}]^T$. By Perron-Frobenius theorem, all the elements here are positive since $\overline{\pmb{A}}_{\bar{k}}$ is a positive matrix. Then the quotient of any two elements in $\pmb{v}_{\bar{k},1}$ can be bounded in terms of the entries of $\overline{\pmb{A}}_{\bar{k}}$ \cite{maximal_eigenvec} and therefore $1+\delta'$ by lemma \ref{lemma: A_bar}:
\begin{align}
    \nonumber
    \max_{i,j} \frac{e_i}{e_j} \leq \max_{j,s,t} \frac{\bar{a}_{\bar{k},s,j}}{\bar{a}_{\bar{k}, t, j}} \leq 1+\delta'.
\end{align}
Let $e_{min} = \min_i e_i$, we have 
\begin{align}
    \nonumber
    1= \|\pmb{v}_{\bar{k},1}\|_2^2 = \sum_{i=1}^{n_{\bar{k}}} e_i^2
    \leq  e_{min}^2 + (n_{\bar{k}}-1)(1+\delta')^2 e_{min}^2,
\end{align}
and
\begin{align}
    \nonumber
    \|\pmb{v}_{\bar{k},1}\|_1^2 &= (\sum_{i=1}^{n_{\bar{k}}} e_i )^2  \geq n_{\bar{k}}^2 e_{min}^2.
\end{align}
Combining the preceding two equations yields
\begin{align}
    \nonumber
     \|\pmb{v}_{\bar{k},1}\|_1^2 \geq \frac{n_{\bar{k}}^2}{1+ (n_{\bar{k}}-1)(1+\delta')^2}.
\end{align}
\end{proof}

\begin{lemma}\label{lemma: bounded_squared_eigenv}
The sum of squared eigenvalues of each block $\overline{\pmb{A}}_{\bar{k}}$ can be bounded.
\begin{align}
    \nonumber
    \sum_{i=1}^{n_{\bar{k}}} \lambda_{\bar{k}, i}^2 \leq  \frac{(1+(n_{\bar{k}}-1)(1+\delta')^2)(1+\delta')^2}{n_{\bar{k}}}.
\end{align}
\end{lemma}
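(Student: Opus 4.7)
The plan is to convert the spectral sum into a Frobenius norm and bound the entries of $\overline{\pmb{A}}_{\bar{k}}$ directly, using the column-ratio control from Lemma \ref{lemma: A_bar} together with the Perron eigenvector estimate from Lemma \ref{lemma: perron_vec}. Since $\overline{\pmb{A}}_{\bar{k}}$ is symmetric,
\begin{equation*}
\sum_{i=1}^{n_{\bar{k}}} \lambda_{\bar{k},i}^2 \;=\; \|\overline{\pmb{A}}_{\bar{k}}\|_F^2 \;=\; \sum_{j=1}^{n_{\bar{k}}} \sum_{i=1}^{n_{\bar{k}}} \bar{a}_{\bar{k},i,j}^2,
\end{equation*}
so the problem reduces to controlling each column's squared $\ell_2$ norm.

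Fix a column $j$ and write $m_j \coloneqq \min_i \bar{a}_{\bar{k},i,j}$ and $M_j \coloneqq \max_i \bar{a}_{\bar{k},i,j}$; by Lemma \ref{lemma: A_bar}, $M_j \leq (1+\delta')\, m_j$. To bound $m_j$ in absolute terms, I would invoke the eigenvalue equation $\overline{\pmb{A}}_{\bar{k}} \pmb{v}_{\bar{k},1} = \pmb{v}_{\bar{k},1}$. Using symmetry of $\overline{\pmb{A}}_{\bar{k}}$ and the nonnegativity of $\pmb{v}_{\bar{k},1}$ from Perron--Frobenius, the $j$-th coordinate gives
\begin{equation*}
(\pmb{v}_{\bar{k},1})_j \;=\; \sum_{i=1}^{n_{\bar{k}}} \bar{a}_{\bar{k},i,j}\,(\pmb{v}_{\bar{k},1})_i \;\geq\; m_j\,\|\pmb{v}_{\bar{k},1}\|_1,
\end{equation*}
so $m_j \leq (\pmb{v}_{\bar{k},1})_j / \|\pmb{v}_{\bar{k},1}\|_1$ and consequently $M_j \leq (1+\delta')\,(\pmb{v}_{\bar{k},1})_j / \|\pmb{v}_{\bar{k},1}\|_1$.

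Next, I would control each column crudely by $\sum_i \bar{a}_{\bar{k},i,j}^2 \leq n_{\bar{k}} M_j^2$, sum over $j$, and use $\|\pmb{v}_{\bar{k},1}\|_2 = 1$ to collect the per-column estimate into
\begin{equation*}
\|\overline{\pmb{A}}_{\bar{k}}\|_F^2 \;\leq\; \frac{n_{\bar{k}}\,(1+\delta')^2}{\|\pmb{v}_{\bar{k},1}\|_1^2}.
\end{equation*}
Substituting the lower bound $\|\pmb{v}_{\bar{k},1}\|_1^2 \geq n_{\bar{k}}^2 / \bigl(1+(n_{\bar{k}}-1)(1+\delta')^2\bigr)$ from Lemma \ref{lemma: perron_vec} then yields exactly the stated bound.

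The main conceptual step is recognizing that the Perron eigenvector provides the right weighting to convert the per-column ratio bound into a usable global estimate: once $M_j$ is anchored to the $j$-th entry of $\pmb{v}_{\bar{k},1}$, summing $M_j^2$ automatically collapses to $\|\pmb{v}_{\bar{k},1}\|_2^2 = 1$, and Lemma \ref{lemma: perron_vec} supplies the only missing ingredient. I expect the remaining steps---symmetry, the Frobenius identity, and the final substitution---to be routine; the nonobvious move is inserting $\pmb{v}_{\bar{k},1}$ into the argument instead of trying to bound the entries directly.
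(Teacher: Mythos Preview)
Your argument is correct and yields exactly the stated bound, but it proceeds along a different route than the paper. The paper also starts from $\sum_i \lambda_{\bar{k},i}^2 = \|\overline{\pmb{A}}_{\bar{k}}\|_F^2 = \sum_j \|\pmb{c}_{\bar{k},j}\|_2^2$, but then it controls each column directly: it uses the intra-column ratio bound to convert $\|\pmb{c}_{\bar{k},j}\|_2^2$ into $\|\pmb{c}_{\bar{k},j}\|_1^2$ (the same inequality that underlies Lemma~\ref{lemma: perron_vec}, applied to the column instead of the eigenvector), and then bounds the column sums via $\max_j\|\pmb{c}_{\bar{k},j}\|_1 \le (1+\delta')\min_j\|\pmb{c}_{\bar{k},j}\|_1 \le (1+\delta')\lambda_{\bar{k},1} = 1+\delta'$, using symmetry and the Perron--Frobenius fact that the minimum row/column sum is at most the Perron eigenvalue. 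Your approach instead anchors the column maxima to the Perron \emph{eigenvector} through the eigen-equation, so that summing $M_j^2$ collapses neatly via $\|\pmb{v}_{\bar{k},1}\|_2^2=1$, and then closes the argument by invoking Lemma~\ref{lemma: perron_vec} for the $\|\pmb{v}_{\bar{k},1}\|_1$ lower bound. The paper's proof is slightly more self-contained (it does not need Lemma~\ref{lemma: perron_vec} as a prerequisite), while yours is an elegant reuse of that lemma and makes the role of the Perron eigenvector as a ``summing weight'' explicit.
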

\begin{proof}
First we have
\begin{align}
    \label{eq: lambda_c_2}
    \sum_{i=1}^{n_{\bar{k}}} \lambda_{\bar{k}, i}^2=\|\overline{\pmb{A}}_{\bar{k}}\|_F^2 = \sum_{j=1}^{n_{\bar{k}}} \|\pmb{c}_{\bar{k},j}\|_2^2,
\end{align}
where $\pmb{c}_{\bar{k},j} = [\bar{a}_{\bar{k},1,j}, \bar{a}_{\bar{k},2,j}, \dots, \bar{a}_{\bar{k},n_{\bar{k}},j}]^T$ denotes the $j$-th column in $\overline{\pmb{A}}_{\bar{k}}$. Analogous to lemma \ref{lemma: perron_vec}, here we can bound $\|\pmb{c}_{\bar{k},j}\|_2^2$ in terms of $\|\pmb{c}_{\bar{k},j}\|_1^2$ by lemma \ref{lemma: A_bar}
\begin{align}
    \label{eq: c2_c1}
    \|\pmb{c}_{\bar{k},j}\|_2^2 \leq \frac{(1+(n_{\bar{k}}-1)(1+\delta')^2)\|\pmb{c}_{\bar{k},j}\|_1^2}{n_{\bar{k}}^2}.
\end{align}
We also have
\begin{align}
\label{eq: bound_on_c2}
    \|\pmb{c}_{\bar{k},j}\|_1^2 \leq \max_j\|\pmb{c}_{\bar{k},j}\|_1^2 \leq (1+\delta')^2 \min_j\|\pmb{c}_{\bar{k},j}\|_1^2 \leq (1+\delta')^2 \lambda_{\bar{k},1}^2 = (1+\delta')^2,
\end{align}
where the second inequality holds because of assumption \ref{assump: ratio} and the third inequality holds because of Perron-Frobenius theorem. Combining equations \ref{eq: lambda_c_2}, \ref{eq: c2_c1} and \ref{eq: bound_on_c2} completes the proof.
\end{proof}

\begin{corollary}\label{corollary: bound_lambda_2}
The eigenvalues except the $\bar{K}$ largest ones are each upper bounded by 
\begin{align}
    \nonumber
    \lambda_{\bar{k}, i} ~\leq ~ \sqrt{\frac{(1+(n_{\bar{k}}-1)(1+\delta')^2)(1+\delta')^2}{n_{\bar{k}}} - 1}    =  ~\mathcal{O}(\sqrt{\delta'}), \quad \forall i=2,3,\dots, n_{\bar{k}}, \quad \forall \bar{k} \in [\bar{K}].
\end{align}
\end{corollary}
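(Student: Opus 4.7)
The plan is to derive this corollary directly from the two preceding results: Lemma \ref{lemma: bounded_squared_eigenv}, which upper-bounds the sum $\sum_{i=1}^{n_{\bar{k}}} \lambda_{\bar{k},i}^2$, together with Lemma \ref{lemma:largest_egvalue}, which pins down $\lambda_{\bar{k},1}=1$ exactly. The essential observation is that once the contribution of the leading eigenvalue is subtracted off, the remaining squared eigenvalues are all nonnegative and therefore each one is individually bounded by the leftover mass.

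Concretely, I would execute the argument in three short steps. First, I would invoke Lemma \ref{lemma:largest_egvalue} to write $\lambda_{\bar{k},1}^2=1$ and then subtract this from the bound given in Lemma \ref{lemma: bounded_squared_eigenv}, obtaining
\begin{equation*}
\sum_{i=2}^{n_{\bar{k}}} \lambda_{\bar{k},i}^2 \;\leq\; \frac{(1+(n_{\bar{k}}-1)(1+\delta')^2)(1+\delta')^2}{n_{\bar{k}}} - 1.
\end{equation*}
Second, since $\lambda_{\bar{k},i}^2 \geq 0$ for every $i$, any single term on the left-hand side is at most the whole sum, which immediately yields the stated square-root bound on $\lambda_{\bar{k},i}$ for all $i\geq 2$ and $\bar{k}\in[\bar{K}]$.

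Third, to justify the $\mathcal{O}(\sqrt{\delta'})$ asymptotic, I would expand the numerator in $\delta'$. Writing $\eta=(1+\delta')^2=1+2\delta'+\delta'^2$, the bracketed quantity becomes $\bigl(\eta+(n_{\bar{k}}-1)\eta^2-n_{\bar{k}}\bigr)/n_{\bar{k}}$; after cancellation the leading-order terms in $\delta'$ give $(4n_{\bar{k}}-2)\delta'/n_{\bar{k}}+\mathcal{O}(\delta'^2)$, which is $\mathcal{O}(\delta')$ and hence its square root is $\mathcal{O}(\sqrt{\delta'})$. There is essentially no obstacle here beyond bookkeeping in the Taylor expansion; the only thing to be careful about is that the constant hidden in the $\mathcal{O}(\cdot)$ may depend weakly on $n_{\bar{k}}$ via the $4n_{\bar{k}}/n_{\bar{k}}=4$ ratio, but importantly it does not blow up with $n_{\bar{k}}$, so the bound is uniform across blocks.
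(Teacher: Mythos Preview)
Your proposal is correct and follows essentially the same approach as the paper: subtract the known contribution $\lambda_{\bar{k},1}^2=1$ from the sum-of-squares bound of Lemma~\ref{lemma: bounded_squared_eigenv}, then bound each remaining squared eigenvalue by the residual mass and take the square root. Your Taylor expansion is in fact slightly more careful than the paper's, which simply records the leading term as $4\delta'+\mathcal{O}(\delta'^2)$.
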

\begin{proof}
By Lemmas \ref{lemma:largest_egvalue} and \ref{lemma: bounded_squared_eigenv} 
\begin{align}
    \nonumber
 \sum_{i=2}^{n_{\bar{k}}} \lambda_{\bar{k},i}^2
    = & \sum_{i=1}^{n_{\bar{k}}} \lambda_{\bar{k},i}^2 - \lambda_{\bar{k},1}^2 \\
    \nonumber
    \leq & \frac{(1+(n_{\bar{k}}-1)(1+\delta')^2)(1+\delta')^2}{n_{\bar{k}}} - 1 \\
    \label{eq:bound_2_to_pk}
    = & 4\delta'+\mathcal{O}(\delta'^2).
\end{align}
Then 
\begin{align}
    \nonumber
    \lambda_{\bar{k}, i} \leq & \sqrt{\sum_{i=2}^{n_{\bar{k}}} \lambda_{\bar{k},i}^2} \\
    \nonumber
    \leq & \sqrt{\frac{(1+(n_{\bar{k}}-1)(1+\delta')^2)(1+\delta')^2}{n_{\bar{k}}} - 1} 
\end{align}
\end{proof}

\begin{corollary}\label{corollary:sum_smallest}
For each block $\overline{\pmb{A}}_{\bar{k}}$, the sum of the eigenvalues from the second to the $p_{\bar{k}}$-th is bounded by
\begin{align}
    \nonumber
    \sum_{i=2}^{p_{\bar{k}}} \lambda_{\bar{k}, i} \leq 2\sqrt{(p_{\bar{k}}-1)}\sqrt{\delta'}+\mathcal{O}(\delta').
\end{align}
And the sum of eigenvalues of $\overline{\pmb{A}}$ from the $\bar{K}+1$-th to the $p$-th is bounded by
\begin{align}
    \nonumber
    \sum_{i=\bar{K}+1}^p \lambda_{i} \leq 2\sqrt{(p-\bar{K})\bar{K}\delta'}+\mathcal{O}(\delta').
\end{align}
\end{corollary}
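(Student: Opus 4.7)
The plan is to derive both inequalities by chaining Cauchy--Schwarz with the second-moment bound on the per-block spectrum already established in Corollary \ref{corollary: bound_lambda_2}. That corollary's proof (equation \eqref{eq:bound_2_to_pk}) gives, as a key intermediate step, $\sum_{i=2}^{n_{\bar{k}}} \lambda_{\bar{k},i}^2 \leq 4\delta' + \mathcal{O}(\delta'^2)$, so every tail sum of squared eigenvalues within a block is controlled at order $\delta'$. The job is then to convert this bound on $\ell_2$ mass into a bound on the $\ell_1$ mass of the tail.

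First, I would handle the per-block claim. Since the $p_{\bar{k}}-1$ relevant eigenvalues $\lambda_{\bar{k},2},\dots,\lambda_{\bar{k},p_{\bar{k}}}$ form a subset of the tail, Cauchy--Schwarz gives
\begin{equation*}
\sum_{i=2}^{p_{\bar{k}}} \lambda_{\bar{k},i} \leq \sqrt{p_{\bar{k}}-1}\,\sqrt{\sum_{i=2}^{p_{\bar{k}}} \lambda_{\bar{k},i}^2} \leq \sqrt{p_{\bar{k}}-1}\,\sqrt{4\delta' + \mathcal{O}(\delta'^2)},
\end{equation*}
and a Taylor expansion of $\sqrt{4\delta'+\mathcal{O}(\delta'^2)} = 2\sqrt{\delta'}+\mathcal{O}(\delta')$ yields the first stated bound.

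For the global claim, I would exploit the block-diagonal structure established in Lemma \ref{lemma: A_bar}: the eigenvalues of $\overline{\pmb{A}}$ are exactly the union (with multiplicity) of the eigenvalues of the blocks, and the $\bar{K}$ largest ones, equal to $1$, are precisely the per-block top eigenvalues by Lemma \ref{lemma:largest_egvalue}. Therefore the eigenvalues indexed $\bar{K}+1,\dots,p$ of $\overline{\pmb{A}}$ are exactly the union over $\bar{k}$ of the block tails $\lambda_{\bar{k},2},\dots,\lambda_{\bar{k},p_{\bar{k}}}$, so
\begin{equation*}
\sum_{i=\bar{K}+1}^{p} \lambda_i = \sum_{\bar{k}=1}^{\bar{K}} \sum_{i=2}^{p_{\bar{k}}} \lambda_{\bar{k},i} \leq \sum_{\bar{k}=1}^{\bar{K}} \bigl(2\sqrt{p_{\bar{k}}-1}\sqrt{\delta'} + \mathcal{O}(\delta')\bigr).
\end{equation*}
A second application of Cauchy--Schwarz, using the identity $\sum_{\bar{k}}(p_{\bar{k}}-1) = p - \bar{K}$, gives $\sum_{\bar{k}} \sqrt{p_{\bar{k}}-1} \leq \sqrt{\bar{K}\,(p-\bar{K})}$, which produces the claimed $2\sqrt{(p-\bar{K})\bar{K}\delta'} + \mathcal{O}(\delta')$ bound.

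The argument is essentially routine once the block structure and the second-moment estimate are in hand; there is no real obstacle. The only place to be careful is the bookkeeping for which eigenvalues of $\overline{\pmb{A}}$ correspond to which block tails --- this is where one uses that each block contributes exactly one eigenvalue equal to $1$ (Lemma \ref{lemma:largest_egvalue}), so removing the top $\bar{K}$ eigenvalues of $\overline{\pmb{A}}$ is the same as removing the top eigenvalue of each block. The $\mathcal{O}(\delta')$ remainder in the final bound absorbs both the Taylor expansion of the square root and the $\bar{K}$-fold sum of $\mathcal{O}(\delta')$ remainders, which is harmless since $\bar{K}$ is treated as a constant parameter of the problem.
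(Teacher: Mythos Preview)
Your proposal is correct and follows essentially the same approach as the paper: both arguments combine the second-moment bound \eqref{eq:bound_2_to_pk} with Cauchy--Schwarz. The only cosmetic difference is in the global bound, where the paper first sums \eqref{eq:bound_2_to_pk} over $\bar{k}$ and then applies a single Cauchy--Schwarz to the $p-\bar{K}$ eigenvalues, whereas you apply Cauchy--Schwarz per block and then once more on $\sum_{\bar{k}}\sqrt{p_{\bar{k}}-1}$; both routes give the same constant.
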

\begin{proof}
Applying Cauchy–Schwarz inequality to equation \ref{eq:bound_2_to_pk} yields the bound for $\sum_{i=2}^{p_{\bar{k}}} \lambda_{\bar{k}, i}$. Summing both sides of equation \ref{eq:bound_2_to_pk} over $\bar{k}\in[\bar{K}]$ and then applying Cauchy–Schwarz inequality give the bound for $\sum_{i=\bar{K}+1}^p \lambda_{i}$.
\end{proof}

\subsection{Error under Gaussian Noise when $\xi=0$}\label{sec: error_block_adj_matrix}
With the decomposition in Equation \ref{eq: decomp_F}, the learned parameter of the linear model in Equation \ref{eq:closedform_w} can be rewritten as 
\begin{align}
    \nonumber
    \hat{\pmb{W}}^* = \pmb{R}^{\top}\textbf{diag}(\frac{\sqrt{\lambda_1}}{\lambda_1+\beta}, \frac{\sqrt{\lambda_2}}{\lambda_2+\beta}, \dots, \frac{\sqrt{\lambda_p}}{\lambda_p+\beta})\pmb{F}^{*\top}\hat{\pmb{Y}}.
\end{align}
The output on the training set $\pmb{F}\hat{\pmb{W}}^*$ is
\begin{align}
    \nonumber
    \pmb{F}\hat{\pmb{W}}^* = & \pmb{F}^*\pmb{B}\pmb{F}^{*\top}\hat{\pmb{Y}},
\end{align}
where $\pmb{B}=\textbf{diag}(b_1,b_2,\dots,b_p)$ with $b_i=\frac{\lambda_i}{\lambda_i+\beta}$. And the error on training set w.r.t. ground-truth labels can be therefore written in terms of the eigenvalues and eigenvectors of $\overline{\pmb{A}}$
\begin{align}
    \nonumber
    \mathbb{E}_{\Delta \pmb{Y}}\frac{1}{n}\| \pmb{Y} -\pmb{F}\hat{\pmb{W}}^* \|_F^2
    = & \mathbb{E}_{\Delta \pmb{Y}} \left[\frac{1}{n}\|\pmb{Y}-\pmb{F}^*\pmb{B}\pmb{F}^{*\top}(\pmb{Y}+\Delta\pmb{Y})\|_F^2\right] \\
    \nonumber
    = & \frac{1}{n}\|\pmb{Y}-\pmb{F}^*\pmb{B}\pmb{F}^{*\top}\pmb{Y}\|_F^2 + \mathbb{E}_{\Delta \pmb{Y}} \left[ \| \pmb{F}^*\pmb{B}\pmb{F}^{*\top}\Delta \pmb{Y} \|_F^2 \right] \\
    \nonumber
    = & \frac{1}{n}\|\pmb{Y}\|_F^2+ \frac{1}{n}\sum_{i=1}^p\sum_{j=1}^{K}(b_i^2-2b_i)(\pmb{v}_i^{\top}\pmb{y}_j)^2 + \frac{\sigma^2}{n}\sum_{i=1}^p b_i^2\\
    \label{eq: error_eigenvec}
    = & \underbrace{1+ \frac{1}{n}\sum_{i=1}^p\sum_{j=1}^{K}(b_i^2-2b_i)(\pmb{v}_i^{\top}\pmb{y}_j)^2}_{\textbf{bias}^2} + \underbrace{\frac{\sigma^2}{n}\sum_{i=1}^p b_i^2}_{\textbf{variance}},
\end{align}
where $\pmb{y}_j \in \mathbb{R}^n$ is the $j$-th \emph{column} of $\pmb{Y}$.

We first calculate the $\textbf{bias}^2$ term. We have $b_i^2-2b_i \leq 0$ since $b_i\in (0,1]$. Then we drop items with $i\geq \bar{K}+1$ in the summation and apply Lemma \ref{lemma:largest_egvalue} to get an upper bound 
\begin{align}
    \nonumber
    \textbf{bias}^2 \leq & 1 -  \frac{1}{n}\sum_{i=1}^{\bar{K}}\sum_{j=1}^{K}(2b_i-b_i^2)(\pmb{v}_i^{\top}\pmb{y}_j)^2 \\
    \nonumber
    = & 1-\frac{1}{n} \left(\frac{2}{1+\beta} -(\frac{1}{1+\beta})^2\right)\sum_{i=1}^{\bar{K}}\sum_{j=1}^{K}(\pmb{v}_i^{\top}\pmb{y}_j)^2\\
    \nonumber
    = &1-\frac{1}{n} \left(\frac{2}{1+\beta} -(\frac{1}{1+\beta})^2\right)\sum_{\bar{k}=1}^{\bar{K}}\sum_{j=1}^{K}(\dot{\pmb{v}}_{\bar{k},1}^\top\pmb{y}_j)^2
\end{align}
By Perron-Frobenious theorem all elements in $\pmb{v}_{\bar{k},1}$ are positive, thus the sum of elements of $\pmb{v}_{\bar{k},1}$ can be written as $\|\pmb{v}_{\bar{k},1}\|_1$. With the observation that $\dot{\pmb{v}}_{\bar{k},1}^T \pmb{y}_j = \|\pmb{v}_{\bar{k},1}\|_1$ when $\bar{k}$ is a subclass within class $j$ and otherwise $\dot{\pmb{v}}_{\bar{k},1}^T \pmb{y}_j =0$, the above equation can be rewritten as 
\begin{align}
    \label{eq: 2nd_term_in_bias}
    \textbf{bias}^2 \leq & 1-\frac{1}{n} \left(\frac{2}{1+\beta} -(\frac{1}{1+\beta})^2\right)\sum_{\bar{k}=1}^{\bar{K}} \|\pmb{v}_{\bar{k},1}\|_1^2.
\end{align}
Then by Lemma \ref{lemma: perron_vec}, 
\begin{align}
    \nonumber
    \textbf{bias}^2 \leq & 1-\frac{1}{n} \left(\frac{2}{1+\beta} -(\frac{1}{1+\beta})^2\right)\sum_{\bar{k}=1}^{\bar{K}} \left( n_{\bar{k}}-2(n_{\bar{k}}-1)\delta'-\mathcal{O}(\delta'^2) \right)\\
    \nonumber
    = & (\frac{\beta}{1+\beta})^2 + 2 (1-\frac{\bar{K}}{n})\frac{(2\beta+1)}{(\beta+1)^2}\delta'+\mathcal{O}(\delta'^2)\\
    = & (\frac{\beta}{1+\beta})^2 + 3 (1-\frac{\bar{K}}{n})\frac{(2\beta+1)}{(\beta+1)^2}\delta+\mathcal{O}(\delta^2)
\end{align}

Now we bound the $\textbf{variance}$ term. By Lemma \ref{lemma:largest_egvalue}
\begin{align}
\nonumber
    \textbf{variance} =& \frac{\sigma^2}{n}\sum_{i=1}^p b_i^2 \\
    \nonumber
    = & \frac{\sigma^2}{n}\sum_{i=1}^{\bar{K}} b_i^2 + \frac{\sigma^2}{n}\sum_{i=\bar{K}+1}^{p} b_i^2  \\
    \nonumber
    = & \sigma^2\frac{\bar{K}}{n}(\frac{1}{\beta+1})^2+\frac{\sigma^2}{n}\sum_{i=\bar{K}+1}^{p} (1-\frac{\beta}{\lambda_i+\beta}) \\
    \label{eq:tmp_variance}
    = & \sigma^2\frac{\bar{K}}{n}(\frac{1}{\beta+1})^2 + \frac{\sigma^2}{n}(p-\bar{K}) -\frac{\sigma^2}{n}\sum_{i=\bar{K}+1}^{p} \frac{\beta}{\lambda_i+\beta}.
\end{align}
Apply Cauchy–Schwarz inequality and Corollary \ref{corollary:sum_smallest} to bound the summation in the last term
\begin{align}
    \nonumber
    \sum_{i=\bar{K}+1}^{p} \frac{\beta}{\lambda_i+\beta} \geq &~ \frac{\beta(p-\bar{K})^2}{\sum_{i=\bar{K}+1}^p(\lambda_i+\beta)} \\
    \nonumber
    \geq & ~\frac{\beta(p-\bar{K})^2}{2\sqrt{(p_{\bar{k}}-1)}\sqrt{\delta'}+\mathcal{O}(\delta') + (p-\bar{K})\beta} \\
    \nonumber
    = & ~ p-\bar{K} + \frac{2\sqrt{p-\bar{K}}}{\beta}\sqrt{\bar{K}\delta'} + \mathcal{O}(\delta').
\end{align}
Plugging the preceding into Equation \ref{eq:tmp_variance} yields
\begin{align}
\nonumber
    \textbf{variance} \leq & \sigma^2\frac{\bar{K}}{n}(\frac{1}{\beta+1})^2 + \sigma^2 \frac{2\sqrt{\bar{K}(p-\bar{K})}}{n}\frac{\sqrt{\delta'}}{\beta}+\mathcal{O}(\delta')\\
    \nonumber
    = & \sigma^2\frac{\bar{K}}{n}(\frac{1}{\beta+1})^2 + \sigma^2 \frac{\sqrt{6\bar{K}(p-\bar{K})}}{n}\frac{\sqrt{\delta}}{\beta}+\mathcal{O}(\delta)
\end{align}

\subsection{Accuracy under Label Flipping (Proof for Theorem \ref{theorem: flipping_accuracy})}
We study the accuracy by looking at the entries of the output $\pmb{F}\hat{\pmb{W}}^*$.

\begin{align}
    \nonumber
    \pmb{F}\hat{\pmb{W}}^* = & ~ \pmb{F}^*\pmb{B}\pmb{F}^{*\top}(\pmb{Y}+\Delta \pmb{Y}) \\
    \nonumber
    = & \left[ \pmb{F}^*\pmb{B}\pmb{F}^{*\top}(\pmb{y}_1+\Delta \pmb{y}_1), ~ \pmb{F}^*\pmb{B}\pmb{F}^{*\top}(\pmb{y}_2+\Delta \pmb{y}_2), ~ \dots, ~ \pmb{F}^*\pmb{B}\pmb{F}^{*\top}(\pmb{y}_K+\Delta \pmb{y}_K)  \right] \\
    \nonumber
    = & \left[ \sum_{i=1}^p b_i \pmb{v}_i \pmb{v}_i^{\top} (\pmb{y}_1+\Delta \pmb{y}_1), ~ \sum_{i=1}^p b_i \pmb{v}_i \pmb{v}_i^{\top} (\pmb{y}_2+\Delta \pmb{y}_2), ~ \dots, ~ \sum_{i=1}^p b_i \pmb{v}_i \pmb{v}_i^{\top} (\pmb{y}_K+\Delta \pmb{y}_K) \right] \\
    \nonumber
    \coloneqq & \left[ \pmb{z}_1, \pmb{z}_2, \dots, \pmb{z}_K \right]
\end{align}

For convenience we define the notations $C_k$ and $S_{\bar{k}}$ as the sets of indices of examples from class $k$ and subclass $\bar{k}$, respectively
\begin{align}
    \nonumber
    C_k \coloneqq & \{ i: \text{$\pmb{x}_i$ belongs to class $k$} \} \\
    \nonumber
    S_{\bar{k}} \coloneqq & \{ i: \text{$\pmb{x}_i$ belongs to subclass $\bar{k}$} \} = \{ i: \sum_{j=1}^{\bar{k}-1}n_j < i \leq \sum_{j=1}^{\bar{k}} n_j  \}.
\end{align}

Let the notation $\pmb{\mu}^{(j)}$ denote the $j$-th element of vector $\pmb{\mu}$. Then $\pmb{z}_k^{(j)}$ can be written as $ \sum_{i=1}^p b_i \pmb{v}_i^{(j)} \pmb{v}_i^{\top}(\pmb{y}_k+\Delta \pmb{y}_k)$. Let $\bar{k}_j$ denote the subclass that $\pmb{x}_j$ belongs to, i.e., $j\in S_{\bar{k}}$ and define $e_{\min,s} \coloneqq \min_{j} \pmb{v}_{\bar{k}_j,1}^{(s)}$ and $e_{\max, s} \coloneqq \max_{s} \pmb{v}_{\bar{k}_j,1}^{(s)}$. We have the following two lemmas. 
\begin{lemma} \label{lemma: bound_z}
$\pmb{z}_k^{(j)}$ can be bounded
\begin{align}
    \nonumber
    \begin{cases}
    \pmb{z}_k^{(j)} \geq & \frac{1}{\beta+1} e_{\min,j}\dot{\pmb{v}}_{\bar{k}_j,1}^{(j)}n_{\min}(1-\alpha) - \sqrt{n_{\max}}\frac{2\sqrt{p-1}}{\beta}\sqrt{\delta'}- \mathcal{O}(\delta'), \quad j\in C_k  \\
    \pmb{z}_k^{(j)} \leq & \frac{1}{\beta+1}e_{\max,j}\dot{\pmb{v}}_{\bar{k}_j,1}^{(j)}n_{\max}\alpha c_{\max} + \sqrt{n_{\max}c_{\max}}\frac{2\sqrt{p-1}}{\beta}\sqrt{\delta'} + \mathcal{O}(\delta'), \quad j\notin C_k.
    \end{cases}.
\end{align}

\end{lemma}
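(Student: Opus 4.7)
The natural approach is to split $\pmb{z}_k^{(j)} = \sum_{i=1}^p b_i \pmb{v}_i^{(j)}\pmb{v}_i^\top(\pmb{y}_k+\Delta\pmb{y}_k)$ along the spectral gap established in Section \ref{sec:property_A}, writing $\pmb{z}_k^{(j)} = I + II$ where $I$ aggregates the $\bar{K}$ prominent eigenpairs (all with eigenvalue $1$ by Lemma \ref{lemma:largest_egvalue}) and $II$ collects the tail. The two terms will supply the two pieces appearing on the right-hand side of each bound, and the signs then give either a lower or upper bound on $\pmb{z}_k^{(j)}$.

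For $I$, I would use that each of the first $\bar{K}$ eigenvectors equals $\dot{\pmb{v}}_{\bar{k},1}$ for a unique $\bar{k}\in[\bar{K}]$, and these are block-supported on $S_{\bar k}$. Since $\pmb{v}_i^{(j)}=0$ unless $\bar{k}=\bar{k}_j$, only one term survives and
\begin{align*}
I = \tfrac{1}{1+\beta}\,\dot{\pmb{v}}_{\bar{k}_j,1}^{(j)}\,\dot{\pmb{v}}_{\bar{k}_j,1}^{\top}\hat{\pmb{y}}_k
= \tfrac{1}{1+\beta}\,\dot{\pmb{v}}_{\bar{k}_j,1}^{(j)}\sum_{s\in S_{\bar{k}_j}}\pmb{v}_{\bar{k}_j,1}^{(s)}\,\mathbbm{1}[\hat{\pmb{y}}_k^{(s)}=1].
\end{align*}
In case $j\in C_k$ (so $\bar{k}_j\in Z_k$), the indicator picks out the correctly labeled points of $S_{\bar{k}_j}$, of which there are at least $n_{\min}(1-\alpha)$; the weight is at least $e_{\min,j}$, giving the first term of the lower bound. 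In case $j\notin C_k$, the indicator picks out only the mislabeled points of $S_{\bar{k}_j}$ flipped to class $k$, which is $m_{\bar{k}_j,k}=c_{k|\bar{k}_j}m_{\bar{k}_j}\le n_{\max}\alpha c_{\max}$; the weight is at most $e_{\max,j}$, giving the first term of the upper bound.

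For $II$, the same block-support argument collapses the sum over $i>\bar{K}$ to a sum over $t=2,\dots,p_{\bar{k}_j}$ involving only eigenpairs of $\overline{\pmb{A}}_{\bar{k}_j}$. Using $b_i=\tfrac{\lambda_i}{\lambda_i+\beta}\le\tfrac{\lambda_i}{\beta}$, $|\pmb{v}_{\bar{k}_j,t}^{(j)}|\le\|\pmb{v}_{\bar{k}_j,t}\|_2=1$, and $|\pmb{v}_{\bar{k}_j,t}^\top\hat{\pmb{y}}_k|\le\|\hat{\pmb{y}}_k\restriction_{S_{\bar{k}_j}}\|_2=\sqrt{M_{\bar{k}_j,k}}$, with $M_{\bar{k}_j,k}$ the count of points in $S_{\bar{k}_j}$ carrying label $k$, I would bound
\begin{align*}
|II| \;\le\; \tfrac{\sqrt{M_{\bar{k}_j,k}}}{\beta}\sum_{t=2}^{p_{\bar{k}_j}}\lambda_{\bar{k}_j,t}
\;\le\; \sqrt{M_{\bar{k}_j,k}}\,\tfrac{2\sqrt{p_{\bar{k}_j}-1}}{\beta}\sqrt{\delta'}+\mathcal{O}(\delta'),
\end{align*}
by Corollary \ref{corollary:sum_smallest}. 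Plugging $M_{\bar{k}_j,k}\le n_{\max}$ in case 1 and $M_{\bar{k}_j,k}\le n_{\max}c_{\max}$ in case 2, and using $p_{\bar{k}_j}\le p$, yields the remainder terms in the two inequalities.

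The main obstacle is the bookkeeping for the $I$ term, particularly in case 1: converting a bound on the total mislabeled fraction $\alpha$ into a per-subclass lower bound on correctly labeled points uses that the correctly labeled count in $S_{\bar{k}_j}$ is at least $n_{\bar{k}_j}-m_{\bar{k}_j}\ge n_{\min}(1-\alpha)$ (with an implicit balanced-noise reading of $\alpha$), and verifying that pairing this with the weight lower bound $e_{\min,j}$ is consistent with the $\dot{\pmb{v}}_{\bar{k}_j,1}^{(j)}$ factor that has been pulled outside. In case 2 the corresponding step needs the worst-case asymmetry $c_{\max}$ and the identification $c_{k|\bar{k}_j}m_{\bar{k}_j}\le c_{\max}n_{\max}\alpha$; after that, combining $I$ and the signed bound on $II$ gives the two inequalities stated.
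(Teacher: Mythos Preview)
Your proposal is correct and follows essentially the same approach as the paper: split $\pmb{z}_k^{(j)}$ into the contribution of the $\bar{K}$ leading eigenvectors (which, by block-support, collapses to the single term $\tfrac{1}{1+\beta}\dot{\pmb{v}}_{\bar{k}_j,1}^{(j)}\dot{\pmb{v}}_{\bar{k}_j,1}^{\top}(\pmb{y}_k+\Delta\pmb{y}_k)$) plus the tail, and then bound each piece separately using the same counting arguments for the inner product and the same entrywise bounds on the eigenvectors.

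The one methodological difference is in how you control $\sum_{t\ge 2} b_{\bar{k}_j,t}$: you use the elementary inequality $b_i=\tfrac{\lambda_i}{\lambda_i+\beta}\le \tfrac{\lambda_i}{\beta}$ and then apply Corollary~\ref{corollary:sum_smallest} directly, whereas the paper rewrites the sum as $(p_{\bar{k}_j}-1)-\sum_{t\ge 2}\tfrac{\beta}{\beta+\lambda_{\bar{k}_j,t}}$ and bounds the latter from below via Cauchy--Schwarz before invoking the corollary. Both routes yield the same leading term $\tfrac{2\sqrt{p-1}}{\beta}\sqrt{\delta'}$; yours is shorter and avoids the extra algebra. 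The caveat you flag about the ``balanced-noise reading of $\alpha$'' is also present (and unaddressed) in the paper's argument, so your treatment is at least as careful on that point.
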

\begin{proof}
Let $b_{\bar{k}, i}$ denote $\frac{\lambda_{\bar{k}, i}}{\beta+\lambda_{\bar{k}, i}}$. Recalling that one property of the block vector $\dot{\pmb{v}}_{\bar{k}, i}$ is that $\dot{\pmb{v}}_{\bar{k}, i}^{(j)} = 0$ when $j\notin S_{\bar{k}}$, we have 
\begin{align}
    \nonumber
    \pmb{z}_k^{(j)} = &  \sum_{i=1}^p b_i \pmb{v}_i^{(j)} \pmb{v}_i^{\top}(\pmb{y}_k+\Delta \pmb{y}_k) \\
    \nonumber
    = &  \sum_{\bar{k}=1}^{\bar{K}} \sum_{i=1}^{p_{\bar{k}}} b_{\bar{k}, i} \dot{\pmb{v}}_{\bar{k}, i}^{(j)} \dot{\pmb{v}}_{\bar{k}, i}^{\top}(\pmb{y}_k+\Delta \pmb{y}_k) \\
    \nonumber
    = & \sum_{i}^{p_{\bar{k}_j}} b_{\bar{k}_j, i} \dot{\pmb{v}}_{\bar{k}_j, i}^{(j)} \dot{\pmb{v}}_{\bar{k}_j, i}^{\top}(\pmb{y}_k+\Delta \pmb{y}_k) \\
    \nonumber
    = & b_{\bar{k}_j,1} \dot{\pmb{v}}_{\bar{k}_j,1}^{(j)} \dot{\pmb{v}}_{\bar{k}_j,1}^{\top}(\pmb{y}_k+\Delta \pmb{y}_k) + \sum_{i=2}^{p_{\bar{k}_j}} b_{\bar{k}_j, i} \dot{\pmb{v}}_{\bar{k}_j, i}^{(j)} \dot{\pmb{v}}_{\bar{k}_j, i}^{\top}(\pmb{y}_k+\Delta \pmb{y}_k) \\
    \label{eq: zkj}
    = & \frac{1}{\beta+1} \dot{\pmb{v}}_{\bar{k}_j,1}^{(j)} \dot{\pmb{v}}_{\bar{k}_j,1}^{\top}(\pmb{y}_k+\Delta \pmb{y}_k) + \sum_{i=2}^{p_{\bar{k}_j}} b_{\bar{k}_j, i} \dot{\pmb{v}}_{\bar{k}_j, i}^{(j)} \dot{\pmb{v}}_{\bar{k}_j, i}^{\top}(\pmb{y}_k+\Delta \pmb{y}_k)
\end{align}
For the nonzero elements in $\dot{\pmb{v}}_{\bar{k}_j,1}$, if $j\in C_k$, there are at least $n_{\min}(1-\alpha)$ elements being $1$ at corresponding positions in $\pmb{y}_k+\Delta \pmb{y}_k$; if $j\notin C_k$, there are at most $n_{\max}\alpha c_{\max}$ elements being $1$ at corresponding positions in $\pmb{y}_k+\Delta \pmb{y}_k$.Then the inner product in the first term in equation \ref{eq: zkj} can be bounded by
\begin{align}
    \nonumber
\dot{\pmb{v}}_{\bar{k}_j,1}^{\top}(\pmb{y}_k+\Delta \pmb{y}_k) = \sum_{s=1}^{n_{\bar{k}_j}} \dot{\pmb{v}}_{\bar{k}_j,1}^{(s)}(\pmb{y}_k+\Delta \pmb{y}_k)^{(s)} 
\begin{cases}
\geq e_{\min,j}n_{\min}(1-\alpha), \quad j\in C_k \\
\leq e_{\max,j}\dot{\pmb{v}}_{\bar{k}_j,1}^{(j)}n_{\max}\alpha c_{\max}, \quad j\notin C_k 
\end{cases}
\end{align}
For the inner product in the second term in equation \ref{eq: zkj}, if $j\in C_k$, then  $\dot{\pmb{v}}_{\bar{k}_j,i}^{\top}(\pmb{y}_k+\Delta \pmb{y}_k)$ is the sum of at most $n_{\max}$ elements in $\dot{\pmb{v}}_{\bar{k}_j, i}$. Since $\|\dot{\pmb{v}}_{\bar{k}_j, i}\|_2^2 = 1$, the sum is bounded by $[-\sqrt{n_{\max}}, \sqrt{n_{\max}}]$. Similarly we can get the bound $[-\sqrt{n_{\max}c_{\max}}, \sqrt{n_{\max}c_{\max}}]$ for $j\notin C_k$. We also know that $|\dot{\pmb{v}}_{\bar{k}_j, i}^{(j)}|\leq 1$. Now it remains to bound $\sum_{i=2}^{p_{\bar{k}_j}}b_{\bar{k}_j,i}$ by applying Cauchy–Schwarz inequality and Corollary \ref{corollary:sum_smallest}
\begin{align}
    \nonumber
    \sum_{i=2}^{p_{\bar{k}_j}}b_{\bar{k}_j,i} = & \sum_{i=2}^{p_{\bar{k}_j}}\frac{\lambda_{\bar{k}, i}}{\beta+\lambda_{\bar{k}, i}} \\
    \nonumber
    = & p_{\bar{k}_j}-1 - \sum_{i=2}^{p_{\bar{k}_j}}\frac{\beta}{\beta+\lambda_{\bar{k}, i}}  \\
    \nonumber
    \leq & p_{\bar{k}_j}-1 - \frac{\beta(p_{\bar{k}_j}-1)^2}{\sum_{i=2}^{p_{\bar{k}_j}}(\lambda_{\bar{k}_j,i}+\beta)}\\
    \nonumber
    \leq & p_{\bar{k}_j}-1 - \frac{\beta(p_{\bar{k}_j}-1)^2}{2\sqrt{(p_{\bar{k}}-1)}\sqrt{\delta'}+\mathcal{O}(\delta')+\beta(p_{\bar{k}_j}-1)} \\
    \nonumber
    \leq & 2\sqrt{p-1}\frac{\sqrt{\delta'}}{\beta}+\mathcal{O}(\delta')
\end{align}
\end{proof}

\begin{lemma}\label{lemma: bound_e}
$e_{\min, j}$, $e_{\max, j}$ and $e_{max,j}/e_{\min, j}$ can be bounded.
\begin{align}
    \nonumber
    e_{\min, j} \geq & ~ \sqrt{\frac{1}{1+(n_{\max}-1)(1+\delta')^2}} \\
    \nonumber
    e_{\max, j } \leq & ~ \sqrt{\frac{1}{1+(n_{\min}-1)/(1+\delta')^2}} \\
    \nonumber
    \frac{e_{\max,j}}{e_{\min, j}} \leq & ~ (1+\delta')\sqrt{\frac{(1+(1+\delta')^2(n_{\max}-1))}{n_{\max}-1+(1+\delta')^2}} 
\end{align}
\end{lemma}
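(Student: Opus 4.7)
\textbf{Proof proposal for Lemma \ref{lemma: bound_e}.}

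The plan is to reuse the pointwise eigenvector ratio bound that is already established in the proof of Lemma \ref{lemma: perron_vec} and combine it with the unit-norm constraint $\|\pmb{v}_{\bar{k},1}\|_2 = 1$. Recall that by Perron--Frobenius together with Lemma \ref{lemma: A_bar}, all entries of $\pmb{v}_{\bar{k}_j,1}$ are strictly positive and any two of them satisfy $e_i / e_\ell \leq 1+\delta'$. In particular, writing $e_{\min,j}$ and $e_{\max,j}$ for the smallest and largest entries of $\pmb{v}_{\bar{k}_j,1}$, we have $e_{\max,j} \leq (1+\delta')\,e_{\min,j}$ automatically; the task is to translate this into \emph{absolute} upper and lower bounds by using that the total $L^2$-mass is one.

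First I would derive the lower bound on $e_{\min,j}$. Each of the $n_{\bar{k}_j}$ entries is at most $(1+\delta')e_{\min,j}$, so $1 = \sum_{i} e_i^2 \leq e_{\min,j}^2 + (n_{\bar{k}_j}-1)(1+\delta')^2 e_{\min,j}^2$, giving $e_{\min,j}^2 \geq \left(1 + (n_{\bar{k}_j}-1)(1+\delta')^2\right)^{-1}$. Monotonicity in $n_{\bar{k}_j}$ and $n_{\bar{k}_j} \leq n_{\max}$ then yield the first inequality of the lemma. Symmetrically, for $e_{\max,j}$, every entry is at least $e_{\max,j}/(1+\delta')$, so $1 = \sum_i e_i^2 \geq e_{\max,j}^2 + (n_{\bar{k}_j}-1) e_{\max,j}^2/(1+\delta')^2$, hence $e_{\max,j}^2 \leq \left(1 + (n_{\bar{k}_j}-1)/(1+\delta')^2\right)^{-1}$. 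Using $n_{\bar{k}_j} \geq n_{\min}$ delivers the second inequality.

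For the ratio, the key is to divide the two bounds within the \emph{same} block $\bar{k}_j$ (not across blocks), which gives
\[
\frac{e_{\max,j}^2}{e_{\min,j}^2} \leq \frac{1 + (n_{\bar{k}_j}-1)(1+\delta')^2}{1 + (n_{\bar{k}_j}-1)/(1+\delta')^2} = (1+\delta')^2 \cdot \frac{1 + (1+\delta')^2(n_{\bar{k}_j}-1)}{(1+\delta')^2 + (n_{\bar{k}_j}-1)}.
\]
It then remains to argue that the rational function $n \mapsto \frac{1+(1+\delta')^2(n-1)}{(1+\delta')^2+(n-1)}$ is nondecreasing in $n$ (it interpolates between $1/(1+\delta')^2$ at $n=1$ and $(1+\delta')^2$ as $n\to\infty$, and a one-line derivative check confirms monotonicity), so its maximum over $\bar{k}_j$ is attained at $n_{\max}$. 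Taking square roots yields exactly the third inequality.

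The only mild obstacle is the bookkeeping for the ratio step: one must keep the numerator and denominator tied to the same block before taking the worst case, and then verify monotonicity in the block size so that replacing $n_{\bar{k}_j}$ by $n_{\max}$ is a valid upper bound. All the other steps are direct consequences of Lemmas \ref{lemma: A_bar} and \ref{lemma: perron_vec} together with the elementary $\|\pmb{v}_{\bar{k},1}\|_2 = 1$ constraint, so no new ingredients beyond what has already been established are needed.
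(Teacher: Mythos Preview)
Your proposal is correct and follows exactly the route the paper intends: the paper's own proof is literally the one-line ``The proof is analogous to that for Lemma \ref{lemma: perron_vec},'' and you have carried out precisely that analogy (entrywise ratio bound from Lemma \ref{lemma: A_bar} combined with the unit-norm constraint). Your extra care in keeping the numerator and denominator tied to the same block for the ratio bound, and then checking monotonicity in the block size before passing to $n_{\max}$, fills in a detail the paper leaves implicit.
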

\begin{proof}
The proof is analogous to that for lemma \ref{lemma: perron_vec}.
\end{proof}

Now we are ready to calculate the maximum noise level that allows correct prediction on all training examples, i.e., $\pmb{z}_{k: j\in C_k}^{j} >  \pmb{z}_{k': j\notin C_{k'}}^{j}$. Let the preceding hold, then by lemma \ref{lemma: bound_z} we get
\begin{align}
    \label{eq: tolerance_with_e}
    \alpha < \frac{ 1-\frac{2\sqrt{n_{\max}}(1+\sqrt{c_{\max}})\sqrt{p-1}}{\frac{\beta}{\beta+1}e_{\min,j}\dot{\pmb{v}}_{\bar{k}_j,1}^{(j)}n_{\min}} \sqrt{\delta'}}{1+\frac{n_{\max} e_{\max, j}}{n_{\min} e_{\min, j}}c_{\max} } -\mathcal{O}(\delta')
\end{align}
Plugging lemma \ref{lemma: bound_e} into equation \ref{eq: tolerance_with_e} with some algebraic manipulation yields
\begin{align}
    \label{eq:alpha_rho}
    \alpha <  \frac{1}{1+\frac{n_{\max}}{n_{\min}}c_{\max}}- \mathcal{O}\left(\frac{\sqrt{\delta'}}{\beta}\right) = \frac{1}{1+\frac{n_{\max}}{n_{\min}}c_{\max}}- \mathcal{O}\left(\frac{\sqrt{\delta}}{\beta}\right).
\end{align}

\section{Considering Off-Diagonal Entries in the Adjacency Matrix (Connected Subclasses)}\label{sec: non_zero_off_diag}
For here on we assume $n_{\bar{k}} = \frac{n}{\bar{K}}, \forall \bar{k}\in\bar{K}$ for simplicity, despite that our results can easily extend to unbalanced dataset. 
\begin{lemma}\label{lemma: A_bar_with_off_diag}
Under assumption \ref{assump: block_with_off_diag}, the off-diagonal entries in $A$ is no longer zero. Let $\widetilde{\pmb{A}}$ denote the new normalized matrix, which also contains non-zero off-diagonal entries. With a bit abuse of notation, in the following we use $\overline{\pmb{A}}$ to denote the matrix obtained by normalizing $A$ with off-diagonal elements ignored. Then all the properties of eigenvectors and eigenvalues of $\overline{\pmb{A}}$ stated before including those lemma \ref{lemma: A_bar}, \ref{lemma: perron_vec}, \ref{lemma: bounded_squared_eigenv} still hold. And $\widetilde{\pmb{A}}$ can be written as a perturbation of $\overline{\pmb{A}}$
\begin{align}
    \nonumber
    \widetilde{\pmb{A}} = \overline{\pmb{A}} + \pmb{E},
\end{align}
with $\|\pmb{E}\|_F = \mathcal{O}\left( \bar{K}^{5/2}\xi \right)$.
\end{lemma}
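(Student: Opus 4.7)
The plan is to establish the two claims separately: first that the structural lemmas for the $\xi=0$ case carry over verbatim to $\overline{\pmb{A}}$, and second that the Frobenius distance between $\widetilde{\pmb{A}}$ and $\overline{\pmb{A}}$ is of the claimed order.

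For the first claim, notice that $\overline{\pmb{A}}$ is now defined by taking the within-subclass blocks of $\pmb{A}$ and normalizing by the degree matrix $\pmb{D}$ computed from that truncated matrix. Neither the within-subclass raw weights $w_{\pmb{x}_s\pmb{x}_t}$ nor the within-subclass degrees are touched by the off-diagonal edges. Since Assumption \ref{assump: ratio} only constrains within-subclass triples, its hypotheses are unaffected, and the proofs of Lemma \ref{lemma: A_bar}, Lemma \ref{lemma:largest_egvalue}, Lemma \ref{lemma: perron_vec}, and Lemma \ref{lemma: bounded_squared_eigenv} apply verbatim.

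For the Frobenius bound, I would decompose
\begin{align*}
\pmb{E} \;=\; \widetilde{\pmb{D}}^{-1/2}\pmb{A}_{\text{off}}\widetilde{\pmb{D}}^{-1/2} \;+\; \bigl(\widetilde{\pmb{D}}^{-1/2}\pmb{A}_{\text{diag}}\widetilde{\pmb{D}}^{-1/2} - \pmb{D}^{-1/2}\pmb{A}_{\text{diag}}\pmb{D}^{-1/2}\bigr),
\end{align*}
where $\pmb{A}_{\text{off}}$ collects only the between-subclass entries. The first summand captures the new off-diagonal edges; the second captures the change in normalization. For the first, Assumption \ref{assump: block_with_off_diag} bounds every off-diagonal raw weight by $\xi w_{\max}$, while Assumption \ref{assump: ratio} and balancedness give $\widetilde{w}_{\pmb{x}_i}\geq (n/\bar{K})w_{\max}/(1+\delta)$. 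So each normalized off-diagonal entry is $\mathcal{O}(\bar{K}\xi/n)$, and summing over the $\mathcal{O}(n^2)$ off-diagonal positions controls the Frobenius norm.

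For the second summand, the key step is the ratio estimate $\widetilde{w}_{\pmb{x}_i}/w_{\pmb{x}_i}=1+\mathcal{O}(\bar{K}\xi)$, from which $\widetilde{\pmb{D}}^{-1/2}=\pmb{D}^{-1/2}(\pmb{I}+\pmb{\Delta})$ with $\|\pmb{\Delta}\|_2=\mathcal{O}(\bar{K}\xi)$. Expanding the second summand and using the submultiplicative bound $\|ABC\|_F\leq\|A\|_2\|B\|_F\|C\|_2$ reduces matters to controlling $\|\overline{\pmb{A}}\|_F$. Applying Lemma \ref{lemma: bounded_squared_eigenv} block-wise and summing over the $\bar{K}$ diagonal blocks gives $\|\overline{\pmb{A}}\|_F=\mathcal{O}(\sqrt{\bar{K}})$, so the normalization-change term contributes $\mathcal{O}(\bar{K}^{3/2}\xi)$ to the Frobenius norm. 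Combining the two pieces by the triangle inequality yields the stated $\mathcal{O}(\bar{K}^{5/2}\xi)$ bound (with room to spare).

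The main obstacle I expect is the bookkeeping for the normalization-change term: one must carefully relate $\widetilde{\pmb{D}}$ and $\pmb{D}$ entry-wise, propagate the perturbation through the square-root inverse, and then aggregate entry-level scales against the count of entries (including an adjustment if the balanced-subclass assumption is dropped). The off-diagonal-edge term itself is a straightforward entry count, but the degree-normalization term requires the spectral bounds from the previous section to keep the power of $\bar{K}$ from blowing up beyond what is stated.
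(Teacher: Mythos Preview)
Your decomposition is identical to the paper's: writing $\pmb{E}=\pmb{H}+(\mathring{\pmb{A}}-\overline{\pmb{A}})$, where $\pmb{H}$ collects the normalized off-block entries of $\widetilde{\pmb{A}}$ and $\mathring{\pmb{A}}-\overline{\pmb{A}}$ is the change in the diagonal blocks caused by the altered degree normalization. For $\pmb{H}$, both arguments reach $\|\pmb{H}\|_F=\mathcal{O}(\bar{K}\xi)$. One small caveat: your intermediate claim $\widetilde{w}_{\pmb{x}_i}\geq (n/\bar{K})\,w_{\max}/(1+\delta)$ with a \emph{global} $w_{\max}$ is not justified by Assumption~\ref{assump: ratio}, which only controls ratios within a single block. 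The fix is immediate: for cross-block $(i,j)$, Assumption~\ref{assump: block_with_off_diag} gives $w_{\pmb{x}_i\pmb{x}_j}\leq\xi\,w_{\pmb{x}_i\pmb{x}_s}$ for every $s$ in the block of $i$, so averaging over those $s$ yields $w_{\pmb{x}_i\pmb{x}_j}\leq\xi\bar{K}w_{\pmb{x}_i}/n$ and hence $h_{ij}\leq\bar{K}\xi/n$, which is exactly the paper's entry bound.

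Where you genuinely diverge from the paper is in bounding the normalization-change term. The paper works entrywise: it shows each diagonal-block entry of $\overline{\pmb{A}}-\mathring{\pmb{A}}$ is $\mathcal{O}(\bar{K}^3\xi/n)$ and then sums over the $n^2/\bar{K}$ such entries to obtain $\|\overline{\pmb{A}}-\mathring{\pmb{A}}\|_F=\mathcal{O}(\bar{K}^{5/2}\xi)$, which is the dominant contribution. Your operator-norm route---write $\widetilde{\pmb{D}}^{-1/2}=\pmb{D}^{-1/2}(\pmb{I}+\pmb{\Delta})$, expand $(\pmb{I}+\pmb{\Delta})\overline{\pmb{A}}(\pmb{I}+\pmb{\Delta})-\overline{\pmb{A}}$, and bound by $\|\pmb{\Delta}\|_2\|\overline{\pmb{A}}\|_F$---is both cleaner and sharper: with $\|\pmb{\Delta}\|_2=\mathcal{O}(\bar{K}\xi)$ and $\|\overline{\pmb{A}}\|_F=\mathcal{O}(\sqrt{\bar{K}})$ from Lemma~\ref{lemma: bounded_squared_eigenv}, you get $\mathcal{O}(\bar{K}^{3/2}\xi)$ for this piece. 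So your argument proves the lemma with a full factor of $\bar{K}$ to spare; the paper's entrywise computation is coarser but avoids invoking the spectral bound on $\|\overline{\pmb{A}}\|_F$.
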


\begin{proof}
Let $\mathring{\pmb{A}}$ be a matrix in the same shape of $\overline{\pmb{A}}$ containing all elements of $\widetilde{\pmb{A}}$ in the diagonal blocks. Let $\pmb{H}$ be a matrix that collects the remaining off-diagonal elements. Therefore $\widetilde{\pmb{A}} = \mathring{\pmb{A}} + \pmb{H}$, which can be rewritten as
\begin{align}
    \label{eq: decompose_A_tilde}
    \widetilde{\pmb{A}} = \overline{\pmb{A}} + \mathring{\pmb{A}} - \overline{\pmb{A}} + \pmb{H}.
\end{align}
For all off-diagonal elements $h_{i,j}$ in $\pmb{H}$
\begin{align}
    \nonumber
     h_{i,j} \leq \frac{\bar{K}\xi}{n}.
\end{align}
Since there are $n^2 - \sum_{\bar{k}=1}^{\bar{K}} n_{\bar{k}}^2$ entries outside of the diagonal blocks, the norm of $\pmb{H}$ can be bounded by
\begin{align}
    \label{eq: bound_H}
    \|\pmb{H}\|_F \leq \sqrt{1-\frac{1}{\bar{K}}}\bar{K}\xi
\end{align}
Each element in the diagonal blocks of $\overline{\pmb{A}}-\mathring{\pmb{A}}$ is non-negative. Also, supposing $\pmb{x}_i$ and $\pmb{x}_j$ are from subclass $\bar{k}$, we have
\begin{align}
    \nonumber
    (\overline{\pmb{A}}-\mathring{\pmb{A}})_{i,j} = & \frac{w_{\pmb{x}_i\pmb{x}_j}}{\sqrt{\sum_{s: \pmb{x}_s\in C_{\bar{k}} } w_{\pmb{x}_i\pmb{x}_s}}\sqrt{\sum_{t: \pmb{x}_t\in C_{\bar{k}}} w_{\pmb{x}_t\pmb{x}_j}}} - \frac{w_{\pmb{x}_i\pmb{x}_j}}{\sqrt{\sum_{s=1}^n w_{\pmb{x}_i\pmb{x}_s}}\sqrt{\sum_{t=1}^n w_{\pmb{x}_t\pmb{x}_j}}} \\
    \nonumber
    \leq & \frac{n(\bar{K}-1)\xi}{\frac{n}{\bar{K}(1+\delta)}(\frac{n}{\bar{K}(1+\delta)}+n(\bar{K}-1)\xi)}  \\
    \nonumber
    = & \mathcal{O}\left(\frac{\bar{K}^3(1+\delta)^2\xi}{n}\right),
\end{align}
by which the norm of $\overline{\pmb{A}}-\mathring{\pmb{A}}$ is bounded
\begin{align}
    \label{eq: bound_A_bar_A_ring}
    \|\overline{\pmb{A}}-\mathring{\pmb{A}}\|_F = \mathcal{O}\left( \bar{K}^{5/2}(1+\delta)^2\xi \right) = \mathcal{O}\left( \bar{K}^{5/2}\xi \right).
\end{align}
Combining equations \ref{eq: decompose_A_tilde}, \ref{eq: bound_H} and \ref{eq: bound_A_bar_A_ring} completes the proof.
\end{proof}

$\overline{\pmb{A}}$ has the following eigendecomposition 
\begin{align}
    \nonumber
    \overline{\pmb{A}} = 
    \begin{bmatrix}
    \pmb{V}_I \pmb{V}_N
    \end{bmatrix}
    \begin{bmatrix}
    \pmb{\Sigma}_I & 0 \\
    0 & \pmb{\Sigma}_N \\
    \end{bmatrix}
    \begin{bmatrix}
    \pmb{V}_I^{\top} \\
    \pmb{V}_N^{\top} \\
    \end{bmatrix},
\end{align}
where $\pmb{\Sigma}_I$ collects the $\bar{K}$ largest eigenvalues $\lambda_1, \dots, \lambda_{\bar{K}}$ on the diagonal and $\pmb{\Sigma}_N$ collects the remaining $\lambda_{\bar{K}+1}, \dots, \lambda_{\bar{n}}$. $\pmb{V}_I$ and $\pmb{V}_N$ collects the corresponding $\bar{K}$ and $n-\bar{K}$ eigenvectors, respectively. Let $\widetilde{\pmb{A}}$ has analogous decomposition
\begin{align}
    \nonumber
    \widetilde{\pmb{A}} = 
    \begin{bmatrix}
    \widetilde{\pmb{V}}_I \widetilde{\pmb{V}}_N
    \end{bmatrix}
    \begin{bmatrix}
    \widetilde{\pmb{\Sigma}}_I & 0 \\
    0 & \widetilde{\pmb{\Sigma}}_N \\
    \end{bmatrix}
    \begin{bmatrix}
    \widetilde{\pmb{V}}_I^{\top} \\
    \widetilde{\pmb{V}}_N^{\top} \\
    \end{bmatrix},
\end{align}
with eigenvalues $\widetilde{\lambda}_1, \dots, \widetilde{\lambda}_{n}$ and eigenvectors $\widetilde{\pmb{v}}_1, \dots, \widetilde{\pmb{v}}_n$. Eigenvalues of both matrices are indexed in descending order.

\subsection{Perturbation in Eigenvalues and Eigenvectors}\label{sec: perturb_eigen}
The following two lemmas bound the changes in eigenvalues, eigenvectos and the alignment between labels and eigenvectors caused by $\xi$.
\begin{lemma} \label{lemma: perturb_eigenvalue}
We have the following bound for eigenvalues of $\widetilde{\pmb{A}}$:
\begin{align}
\nonumber
    \begin{cases}
    ~~ 1-\mathcal{O}(K^{5/2}\xi) \leq \widetilde{\lambda}_i \leq 1, \quad & i = 1,2,\dots, \bar{K} \\
    ~~ \sum_{i=\bar{K}+1}^p \lambda_{i} \leq \mathcal{O}(\sqrt{\delta}+\bar{K}^{5/2}\xi)\\
    \end{cases}
\end{align}
\end{lemma}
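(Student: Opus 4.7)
The plan is to apply classical eigenvalue perturbation theory to the decomposition provided by Lemma \ref{lemma: A_bar_with_off_diag}, namely $\widetilde{\pmb{A}} = \overline{\pmb{A}} + \pmb{E}$ with $\|\pmb{E}\|_F = \mathcal{O}(\bar{K}^{5/2}\xi)$, combined with the spectral facts already established for $\overline{\pmb{A}}$ (Lemma \ref{lemma:largest_egvalue} and Corollary \ref{corollary:sum_smallest}).

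For the top $\bar{K}$ eigenvalues, the upper bound $\widetilde{\lambda}_i \leq 1$ comes essentially for free: since $\widetilde{\pmb{A}}$ is itself the normalized adjacency matrix of a nonnegative symmetric graph (the full augmentation graph with off-diagonal entries included), the Perron--Frobenius theorem gives $\|\widetilde{\pmb{A}}\|_2 \leq 1$. For the lower bound, I would invoke Weyl's inequality $|\widetilde{\lambda}_i - \lambda_i| \leq \|\pmb{E}\|_2 \leq \|\pmb{E}\|_F$. Combined with Lemma \ref{lemma:largest_egvalue}, which tells us $\lambda_i = 1$ for $i \leq \bar{K}$, this immediately yields $\widetilde{\lambda}_i \geq 1 - \mathcal{O}(\bar{K}^{5/2}\xi)$.

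For the tail sum, the natural tool is the Hoffman--Wielandt inequality, which gives $\sum_{i=1}^n (\widetilde{\lambda}_i - \lambda_i)^2 \leq \|\pmb{E}\|_F^2 = \mathcal{O}(\bar{K}^{5}\xi^2)$. Restricting to indices $\bar{K}+1,\dots,p$ and applying Cauchy--Schwarz converts this $\ell_2$ bound into an $\ell_1$ bound:
\begin{equation*}
\sum_{i=\bar{K}+1}^{p} |\widetilde{\lambda}_i - \lambda_i| \;\leq\; \sqrt{p-\bar{K}} \; \|\pmb{E}\|_F \;=\; \mathcal{O}(\bar{K}^{5/2}\xi),
\end{equation*}
absorbing the $\sqrt{p-\bar{K}}$ factor into the $\mathcal{O}(\cdot)$. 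Then by the triangle inequality,
\begin{equation*}
\sum_{i=\bar{K}+1}^{p} \widetilde{\lambda}_i \;\leq\; \sum_{i=\bar{K}+1}^{p} \lambda_i \;+\; \sum_{i=\bar{K}+1}^{p} |\widetilde{\lambda}_i - \lambda_i|,
\end{equation*}
and invoking Corollary \ref{corollary:sum_smallest} to bound the first sum by $\mathcal{O}(\sqrt{\delta})$ gives the claimed $\mathcal{O}(\sqrt{\delta} + \bar{K}^{5/2}\xi)$.

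The main obstacle, such as it is, is ensuring that the perturbation bound on the tail sum is not overly loose. A naive summation of Weyl's pointwise bound would produce a factor $p - \bar{K}$ rather than $\sqrt{p - \bar{K}}$; going through Hoffman--Wielandt plus Cauchy--Schwarz is essential for the right scaling. One also has to be careful that the indexing matches (both sets of eigenvalues are listed in descending order, so Hoffman--Wielandt can be applied with the identity permutation after a sorting step), and to confirm that the off-diagonal entries of $\widetilde{\pmb{A}}$ do not violate the normalization properties used to invoke Perron--Frobenius in the upper bound.
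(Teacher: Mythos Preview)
Your argument is correct, and the overall structure---Weyl's inequality combined with Lemma~\ref{lemma:largest_egvalue} and Corollary~\ref{corollary:sum_smallest}---is exactly what the paper does. The one genuine difference is in the tail sum: the paper simply applies Weyl's inequality pointwise, $|\widetilde{\lambda}_i - \lambda_i| \leq \|\pmb{E}\|_2 \leq \|\pmb{E}\|_F$, and sums over $i = \bar{K}+1,\dots,p$, picking up a factor of $p - \bar{K}$ that is then absorbed into the $\mathcal{O}(\cdot)$. You instead route through Hoffman--Wielandt plus Cauchy--Schwarz to get the sharper $\sqrt{p - \bar{K}}$ factor. That refinement is perfectly valid, but your claim that it is ``essential for the right scaling'' is overstated: since both $p-\bar{K}$ and $\sqrt{p-\bar{K}}$ are swallowed by the big-$\mathcal{O}$ in the stated bound, the paper's cruder pointwise argument already suffices. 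Your explicit use of the fact that $\widetilde{\pmb{A}}$ is itself a normalized adjacency matrix (so $\widetilde{\lambda}_i \leq 1$ exactly) is a nice touch that the paper leaves implicit.
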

\begin{proof}
From Lemma \ref{lemma:largest_egvalue} and Corollary \ref{corollary:sum_smallest} we know that 
\begin{align}
\label{eq: original_eigenvalue}
    \begin{cases}
    ~~ \lambda_i = 1, \quad & i = 1,2,\dots, \bar{K}, \\
    ~~ \sum_{i=\bar{K}+1}^p \lambda_{i} \leq \mathcal{O}(\sqrt{\delta}).\\
    \end{cases}
\end{align}
By Weyl’s inequality on perturbation, we have 
\begin{align}
\nonumber
    |\widetilde{\lambda}_i - \lambda_i| \leq \|\pmb{E}\|_2.
\end{align}
The right-hand-side is $\leq \|\pmb{E}\|_F$ and therefore $\mathcal{O}(\bar{K}^{5/2}\xi)$ by lemma \ref{lemma: A_bar_with_off_diag}. Combining the preceding with equation \ref{eq: original_eigenvalue} completes the proof.

\end{proof}

\begin{lemma}\label{lemma: perturb_proj}
The norm of the projection of $\pmb{Y}$ onto the range of $\widetilde{\pmb{V}}_I$ is bounded from below, i.e.,
\begin{align}
    \nonumber
    \|\widetilde{\pmb{V}}_I \widetilde{\pmb{V}}_I^T \pmb{Y}\|_F^2 \geq \|\pmb{V}_I \pmb{V}_I^T \pmb{Y}\|_F^2 - \mathcal{O}(n\bar{K}^2\xi).
\end{align}
\end{lemma}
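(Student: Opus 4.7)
The plan is to reduce the claim to a spectral perturbation inequality and then verify that the relevant operator norm is small enough. Using the projector identity $\pmb{V}_I^\top \pmb{V}_I = \pmb{I}$ and the analogue for $\widetilde{\pmb{V}}_I$, one has
\[
\|\widetilde{\pmb{V}}_I \widetilde{\pmb{V}}_I^\top \pmb{Y}\|_F^2 - \|\pmb{V}_I \pmb{V}_I^\top \pmb{Y}\|_F^2 = \mathrm{tr}\bigl(\pmb{Y}^\top (\widetilde{\pmb{V}}_I \widetilde{\pmb{V}}_I^\top - \pmb{V}_I \pmb{V}_I^\top) \pmb{Y}\bigr),
\]
so the absolute value of the difference is at most $\|\widetilde{\pmb{V}}_I \widetilde{\pmb{V}}_I^\top - \pmb{V}_I \pmb{V}_I^\top\|_2 \cdot \|\pmb{Y}\|_F^2 = n \cdot \|\widetilde{\pmb{V}}_I \widetilde{\pmb{V}}_I^\top - \pmb{V}_I \pmb{V}_I^\top\|_2$ (using that the rows of $\pmb{Y}$ are one-hot). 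It therefore suffices to prove the operator-norm bound $\|\widetilde{\pmb{V}}_I \widetilde{\pmb{V}}_I^\top - \pmb{V}_I \pmb{V}_I^\top\|_2 = \mathcal{O}(\bar{K}^2\xi)$.

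To get such a bound I would invoke the Davis–Kahan $\sin\Theta$ theorem on the symmetric pair $(\overline{\pmb{A}}, \widetilde{\pmb{A}} = \overline{\pmb{A}} + \pmb{E})$: if $\Delta$ is the gap between $\lambda_{\bar{K}}(\overline{\pmb{A}})$ and $\widetilde{\lambda}_{\bar{K}+1}(\widetilde{\pmb{A}})$, then $\|\widetilde{\pmb{V}}_I \widetilde{\pmb{V}}_I^\top - \pmb{V}_I \pmb{V}_I^\top\|_2 \le \|\pmb{E}\|_2/\Delta$. The gap is $\Theta(1)$: Lemma \ref{lemma:largest_egvalue} and Corollary \ref{corollary: bound_lambda_2} give $\lambda_{\bar{K}}=1$ and $\lambda_{\bar{K}+1}=\mathcal{O}(\sqrt{\delta})$, and Lemma \ref{lemma: perturb_eigenvalue} (via Weyl) pushes $\widetilde{\lambda}_{\bar{K}+1}$ by at most $\mathcal{O}(\bar{K}^{5/2}\xi)$, so $\Delta = 1 - o(1)$ under the working regime of small $\delta$ and small $\bar{K}^{5/2}\xi$. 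Consequently the problem reduces entirely to estimating $\|\pmb{E}\|_2$.

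The main technical step, and the principal obstacle, is sharpening the naive bound $\|\pmb{E}\|_2 \le \|\pmb{E}\|_F = \mathcal{O}(\bar{K}^{5/2}\xi)$ from Lemma \ref{lemma: A_bar_with_off_diag} down to $\|\pmb{E}\|_2 = \mathcal{O}(\bar{K}^2\xi)$. The idea is to keep the decomposition $\pmb{E} = \pmb{H} + (\mathring{\pmb{A}} - \overline{\pmb{A}})$ from that proof and to bound each piece via $\|M\|_2 \le \sqrt{\|M\|_1 \|M\|_\infty}$ instead of Frobenius norm. For $\pmb{H}$, each row has at most $n(1-1/\bar{K})$ nonzero off-diagonal-block entries of size $\le \bar{K}\xi/n$, so $\|\pmb{H}\|_\infty, \|\pmb{H}\|_1 = \mathcal{O}(\bar{K}\xi)$ and hence $\|\pmb{H}\|_2 = \mathcal{O}(\bar{K}\xi)$. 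For $\mathring{\pmb{A}} - \overline{\pmb{A}}$, the block-diagonal structure restricts each row to $n/\bar{K}$ entries of size $\mathcal{O}(\bar{K}^3\xi/n)$, giving row sum $\mathcal{O}(\bar{K}^2\xi)$ and thus $\|\mathring{\pmb{A}} - \overline{\pmb{A}}\|_2 = \mathcal{O}(\bar{K}^2\xi)$. Summing yields $\|\pmb{E}\|_2 = \mathcal{O}(\bar{K}^2\xi)$. Plugging into Davis–Kahan and then into the trace estimate above delivers the advertised $\mathcal{O}(n\bar{K}^2\xi)$ lower bound on $\|\widetilde{\pmb{V}}_I\widetilde{\pmb{V}}_I^\top \pmb{Y}\|_F^2$.

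Everything else — the trace manipulation, the application of Weyl's inequality to verify the gap, and the invocation of Davis–Kahan — is essentially bookkeeping; the real leverage comes from replacing the Frobenius-norm bound by the row/column-sum bound that exploits the block-sparsity of $\pmb{E}$.
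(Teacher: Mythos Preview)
Your argument is correct. The trace identity, the bound $|\mathrm{tr}(\pmb{Y}^\top M\pmb{Y})|\le \|M\|_2\|\pmb{Y}\|_F^2=n\|M\|_2$, the Davis--Kahan step with the $\Theta(1)$ gap, and the row--sum sharpening $\|\pmb{E}\|_2=\mathcal{O}(\bar K^2\xi)$ all go through as stated.

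The route, however, differs from the paper's. The paper does \emph{not} sharpen the perturbation norm: it uses Wedin's theorem in Frobenius norm with the already--established bound $\|\pmb{E}\|_F=\mathcal{O}(\bar K^{5/2}\xi)$ from Lemma~\ref{lemma: A_bar_with_off_diag} to get $\|\pmb{V}_I\pmb{V}_I^\top(\widetilde{\pmb{V}}_I\widetilde{\pmb{V}}_I^\top-\pmb{I})\|_F=\mathcal{O}(\bar K^{5/2}\xi)$, expands $\|\widetilde{\pmb{V}}_I\widetilde{\pmb{V}}_I^\top\pmb{Y}\|_F^2$ by adding and subtracting $\pmb{V}_I\pmb{V}_I^\top\pmb{Y}$, and then pairs this Frobenius bound with $\|\pmb{Y}\pmb{Y}^\top\|_F=n/\sqrt{K}$ (rather than $\|\pmb{Y}\|_F^2=n$) to land at $\mathcal{O}(n\bar K^{5/2}\xi/\sqrt{K})=\mathcal{O}(n\bar K^2\xi)$. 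So the paper recovers the ``missing'' factor $\bar K^{1/2}$ from the block structure of $\pmb{Y}\pmb{Y}^\top$, whereas you recover it by exploiting the block structure of $\pmb{E}$ via the $\ell_1/\ell_\infty$ bound. Your approach has the advantage that the final constant does not depend on the ratio $\bar K/K$ (the paper's last simplification tacitly uses $\bar K=\Theta(K)$); the paper's approach has the advantage of being shorter, since it needs no new estimate on $\pmb{E}$ beyond Lemma~\ref{lemma: A_bar_with_off_diag}. In particular, what you flag as ``the principal obstacle'' is in fact avoidable.
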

\begin{proof}
By Lemma \ref{lemma:largest_egvalue} and Corollary \ref{corollary: bound_lambda_2} we have
\begin{align}
\nonumber
    \lambda_{\bar{K}} - \lambda_{\bar{K}+1} \geq 1-\mathcal{O}(\sqrt{\delta}).
\end{align}
By Wedin's Theorem \cite{wedin1972perturbation, stewart1990matrix}, we have the following bound on the principle angle between the range of $\widetilde{\pmb{V}}_I$ and the range of $\pmb{V}_I$ as long as $1\geq \mathcal{O}(\sqrt{\delta}+ \bar{K}^{5/2}\xi)$
\begin{align}
    \nonumber
    \| \pmb{V}_I \pmb{V}_I^T(\widetilde{\pmb{V}}_I \widetilde{\pmb{V}}_I^T- I)\|_F \leq \mathcal{O}\left(\frac{\|\pmb{E}\|_F}{\lambda_{\bar{K}}-\lambda_{\bar{K}+1}}\right) \leq \mathcal{O}\left(\frac{\mathcal{O}(\bar{K}^{5/2}\xi)}{1-\mathcal{O}(\sqrt{\delta})}\right) = \mathcal{O}(\bar{K}^{5/2}\xi).
\end{align}
Thus
\begin{align}
    \nonumber
    \|\widetilde{\pmb{V}}_I \widetilde{\pmb{V}}_I^T \pmb{Y}\|_F^2 = & \|\widetilde{\pmb{V}}_I \widetilde{\pmb{V}}_I^T \pmb{Y} - \pmb{V}_I \pmb{V}_I^T \pmb{Y} + \pmb{V}_I \pmb{V}_I^T \pmb{Y}\|_F^2 \\
    \nonumber
    \geq  & \|\pmb{V}_I \pmb{V}_I^T \pmb{Y}\|_F^2 + 2\langle \widetilde{\pmb{V}}_I \widetilde{\pmb{V}}_I^T \pmb{Y} - \pmb{V}_I \pmb{V}_I^T \pmb{Y}, ~~ \pmb{V}_I \pmb{V}_I^T \pmb{Y}  \rangle_F \\
    \nonumber
    \geq & \|\pmb{V}_I \pmb{V}_I^T \pmb{Y}\|_F^2 + 2\Tr((\widetilde{\pmb{V}}_I \widetilde{\pmb{V}}_I^T - \pmb{V}_I \pmb{V}_I^T) \pmb{Y} \pmb{Y}^T \pmb{V}_I \pmb{V}_I^T)\\
    \nonumber
    = & \|\pmb{V}_I \pmb{V}_I^T \pmb{Y}\|_F^2 + 2\Tr(\pmb{V}_I \pmb{V}_I^T(\widetilde{\pmb{V}}_I \widetilde{\pmb{V}}_I^T - I)\pmb{Y}\pmb{Y}^T) \\
    \nonumber
    = & \|\pmb{V}_I \pmb{V}_I^T \pmb{Y}\|_F^2 + 2\langle \pmb{V}_I \pmb{V}_I^T(\widetilde{\pmb{V}}_I \widetilde{\pmb{V}}_I^T - \pmb{I}), ~~\pmb{Y}\pmb{Y}^T\rangle_F \\
    \nonumber
    \geq & \|\pmb{V}_I \pmb{V}_I^T \pmb{Y}\|_F^2 - 2\| \pmb{V}_I \pmb{V}_I^T(\widetilde{\pmb{V}}_I \widetilde{\pmb{V}}_I^T - \pmb{I})\|_F \|\pmb{Y}\pmb{Y}^T\|_F \\
    \nonumber
    \geq & \|\pmb{V}_I \pmb{V}_I^T \pmb{Y}\|_F^2 - \mathcal{O}(\bar{K}^{5/2}\xi) \frac{n}{\sqrt{K}}\\
    \nonumber
    = & \|\pmb{V}_I \pmb{V}_I^T \pmb{Y}\|_F^2 - \mathcal{O}(n\bar{K}^2\xi).
\end{align}

\end{proof}
\subsection{Error under Gaussian Noise (Proof for Theorem \ref{theorem: gaussian_loss})}
Considering $\xi$, rewrite $\textbf{bias}^2$ as
\begin{align}
    \nonumber
    \textbf{bias}^2 = & 1+ \frac{1}{n}\sum_{i=1}^p\sum_{j=1}^{K}(\widetilde{b}_i^2-2\widetilde{b}_i)(\widetilde{\pmb{v}_i}^{\top}\pmb{y}_j)^2 \\
    \nonumber
    \leq &  1 -  \frac{1}{n}\sum_{i=1}^{\bar{K}}\sum_{j=1}^{K}(2\widetilde{b}_i-\widetilde{b}_i^2)(\widetilde{\pmb{v}_i}^{\top}\pmb{y}_j)^2\\
    \nonumber
    \leq &  1 -  \frac{1}{n}(2\widetilde{b}_{\bar{K}}-\widetilde{b}_{\bar{K}}^2)\sum_{i=1}^{\bar{K}}\sum_{j=1}^{K}(\widetilde{\pmb{v}_i}^{\top}\pmb{y}_j)^2 \\
    \label{eq:bias_rewrite}
    \leq &  1 -  \frac{1}{n}(2\widetilde{b}_{\bar{K}}-\widetilde{b}_{\bar{K}}^2)\|\widetilde{\pmb{V}}_I \widetilde{\pmb{V}}_I^T \pmb{Y}\|_F^2,
\end{align}
where $\widetilde{b}_i = \frac{\widetilde{\lambda_i}}{\widetilde{\lambda_i}+\beta}$. Also, Lemma \ref{lemma: perron_vec} gives us the lower bound for $\|\pmb{V}_I \pmb{V}_I^T \pmb{Y}\|_F^2$
\begin{align}
    \nonumber
    \|\pmb{V}_I \pmb{V}_I^T \pmb{Y}\|_F^2 = & \sum_{\bar{k}}^{\bar{K}}\|\pmb{v}_{\bar{k},1}\|_1^2 \\
    \label{eq: VVY}
    \geq & ~n-\mathcal{O}(\delta).
\end{align}
Combining lemma \ref{lemma: perturb_eigenvalue}, lemma \ref{lemma: perturb_proj}, equation \ref{eq:bias_rewrite} and equation \ref{eq: VVY} yields the bound for the bias
\begin{align}
    \nonumber
    \nonumber
    \textbf{bias}^2 = & (\frac{\beta}{1+\beta})^2 + \mathcal{O}(\delta+\xi).
\end{align}
We bound the variance in the same manner as in Section \ref{sec: error_block_adj_matrix} by applying Cauchy–Schwarz inequality, Corollary \ref{corollary:sum_smallest} and Lemma \ref{lemma: perturb_eigenvalue}
\begin{align}
\nonumber
    \textbf{variance} =& \frac{\sigma^2}{n}\sum_{i=1}^p \widetilde{b}_i^2 \\
    \nonumber
    = & \frac{\sigma^2}{n}\sum_{i=1}^{\bar{K}} \widetilde{b}_i^2 + \frac{\sigma^2}{n}\sum_{i=\bar{K}+1}^{p} \widetilde{b}_i^2  \\
    \nonumber
    \leq & \sigma^2\frac{\bar{K}}{n}(\frac{1}{\beta+1})^2+\frac{\sigma^2}{n}\sum_{i=\bar{K}+1}^{p} (1-\frac{\beta}{\widetilde{\lambda_i}+\beta}) \\
    \nonumber
    = & \sigma^2\frac{\bar{K}}{n}(\frac{1}{\beta+1})^2 + \frac{\sigma^2}{n}(p-\bar{K}) -\frac{\sigma^2}{n}\sum_{i=\bar{K}+1}^{p} \frac{\beta}{\widetilde{\lambda_i}+\beta} \\
    \nonumber
    \leq & \sigma^2\frac{\bar{K}}{n}(\frac{1}{\beta+1})^2 + \frac{\sigma^2}{n}(p-\bar{K})- \frac{\sigma^2}{n}\frac{\beta(p-\bar{K})^2}{\sum_{i=\bar{K}+1}^p(\widetilde{\lambda_i}+\beta)} \\
    \nonumber
    \leq &  \sigma^2\frac{\bar{K}}{n}(\frac{1}{\beta+1})^2 + \sigma^2\mathcal{O}(\frac{\sqrt{\delta}+\xi}{\beta}).
\end{align}

\section{Contrastive Learning Slightly Improves the Alignment Between Jacobian Matrix and Ground-truth Labels}\label{apx:alignment}
We compare the alignments between the clean label vector and the initial Jacobian matrix of (1) network pretrained using SimCLR for 1000 epochs, (2) network pretrained using SimCLR for 100 epochs and (3) randomly initialized network. $\pmb{y}\in\mathbb{R}^{nK}$ is the vector obtained by flattening the label matrix $\pmb{Y}$, i.e., concatenating the $n$ rows of $\pmb{Y}$. Let $\pmb{z}(\pmb{x}_i, \pmb{W})\in \pmb{R}^K$ be the output of the network given example $\pmb{x}_i$ and parameters $\pmb{W}\in \mathbb{R}^d$ (we see the parameters of the network as a vector). Then the Jacobian $\pmb{J}$ is defined as
\begin{align}
    \nonumber
    \pmb{J}(\pmb{W}) = \left[ \frac{\partial \pmb{z}(\pmb{x}_1, \pmb{W})}{\pmb{W}} \dots \frac{\partial \pmb{z}(\pmb{x}_n, \pmb{W})}{\pmb{W}} \right]^{\top}.
\end{align}
Note that $\frac{\partial \pmb{z}(\pmb{x}_i, \pmb{W})}{\pmb{W}} \in\mathbb{R}^{d \times K}$, therefore $\pmb{J}(\pmb{W})\in\mathbb{R}^{nK\times d}$. In table  \ref{tab: projection} $\Pi_{I}(\pmb{y})$ is the projection of $\pmb{y}$ onto the span of the $10$ singular vectors of $\pmb{J}(\pmb{W}_0)$ with larges singular values and $\Pi_{N}(\pmb{y})$ is the projection of $\pmb{y}$ onto the span of the remaining singular vectors. Interestingly, pretraining for more epochs leads to larger $\Pi_{I}(\pmb{y})$ and smaller $\Pi_{N}(\pmb{y})$ and therefore larger $\|\pmb{J}\pmb{J}^T\pmb{y}\|_F/\|\pmb{J}\pmb{J}^T\|_F$. How much this slight improvement in the alignment contributes to the robustness deserves further investigation.

\section{Training Only the Last Layer v.s. Training All Layers} \label{apx:frz_unfrz}
Figure \ref{fig: frz_vs_unfrz} compares the performance of training only the linear layer (i.e., with the encoder frozen) and fine-tuning all layers (i.e., with the encoder unfrozen). For both CIFAR-10 and CIFAR-100 we first pretrain a Res-Net 32 using SimCLR for 1000 epochs and using the Adam optimizer with a learning rate of $3 \times 10^{-4}$, a weight decay of $1 \times 10 ^{-6}$ and a batch size of 128. For downstream tasks, we use the SGD optimizer with a learning rate of $5 \times 10^{-3}$, a weight decay of $1\times 10^{-3}$, a batch size of 64. We see that in most cases fine-tuning achieves a higher test accuracy. However, finetuning will eventually overfit if trained for longer. Also, we note that training all layers is more likely to overfit, especially under large noise level (column 3 in figure \ref{fig: frz_vs_unfrz}).

\begin{figure*}[t]
    \centering
	\includegraphics[width=.24\textwidth]{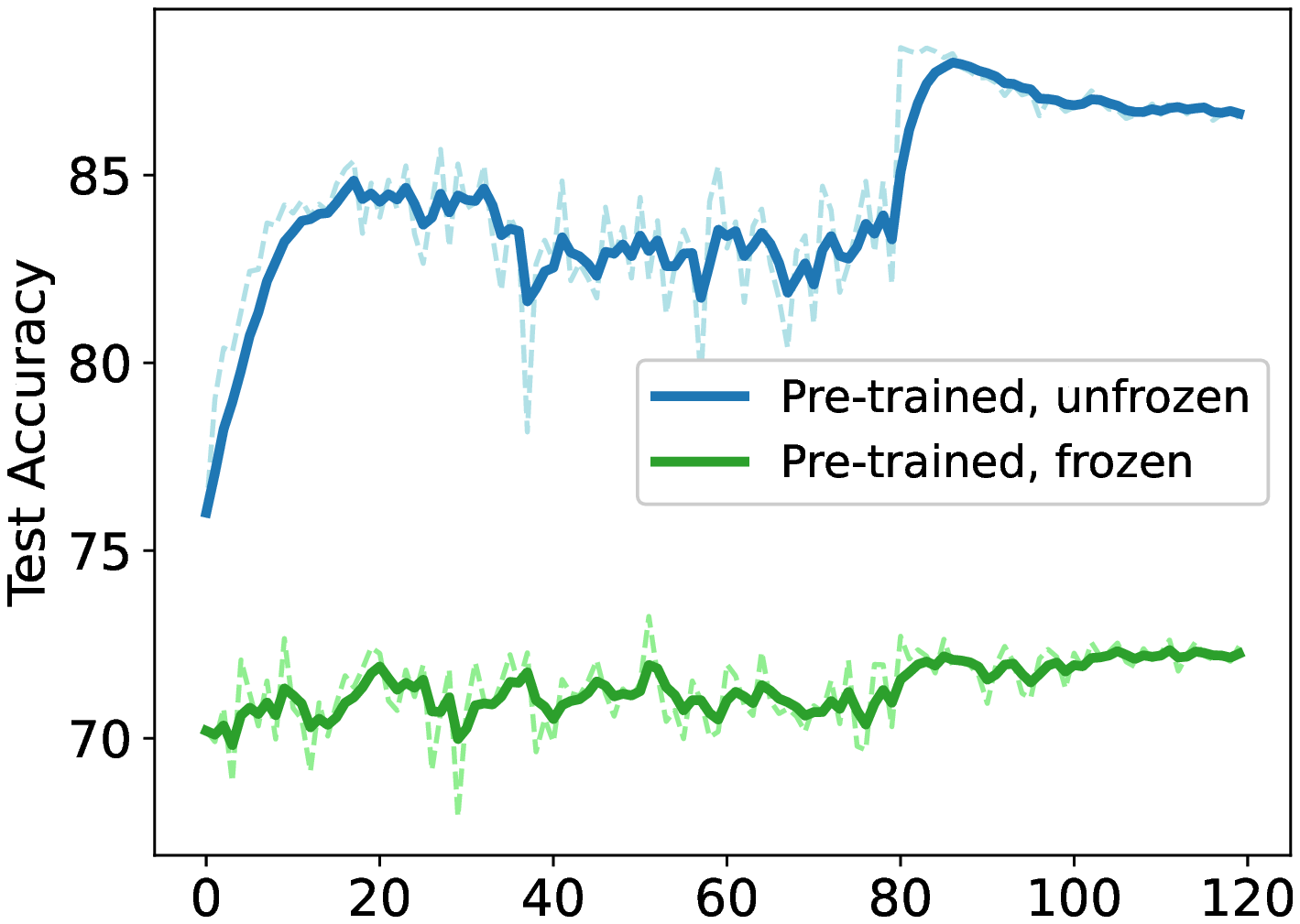}
	\includegraphics[width=.238\textwidth]{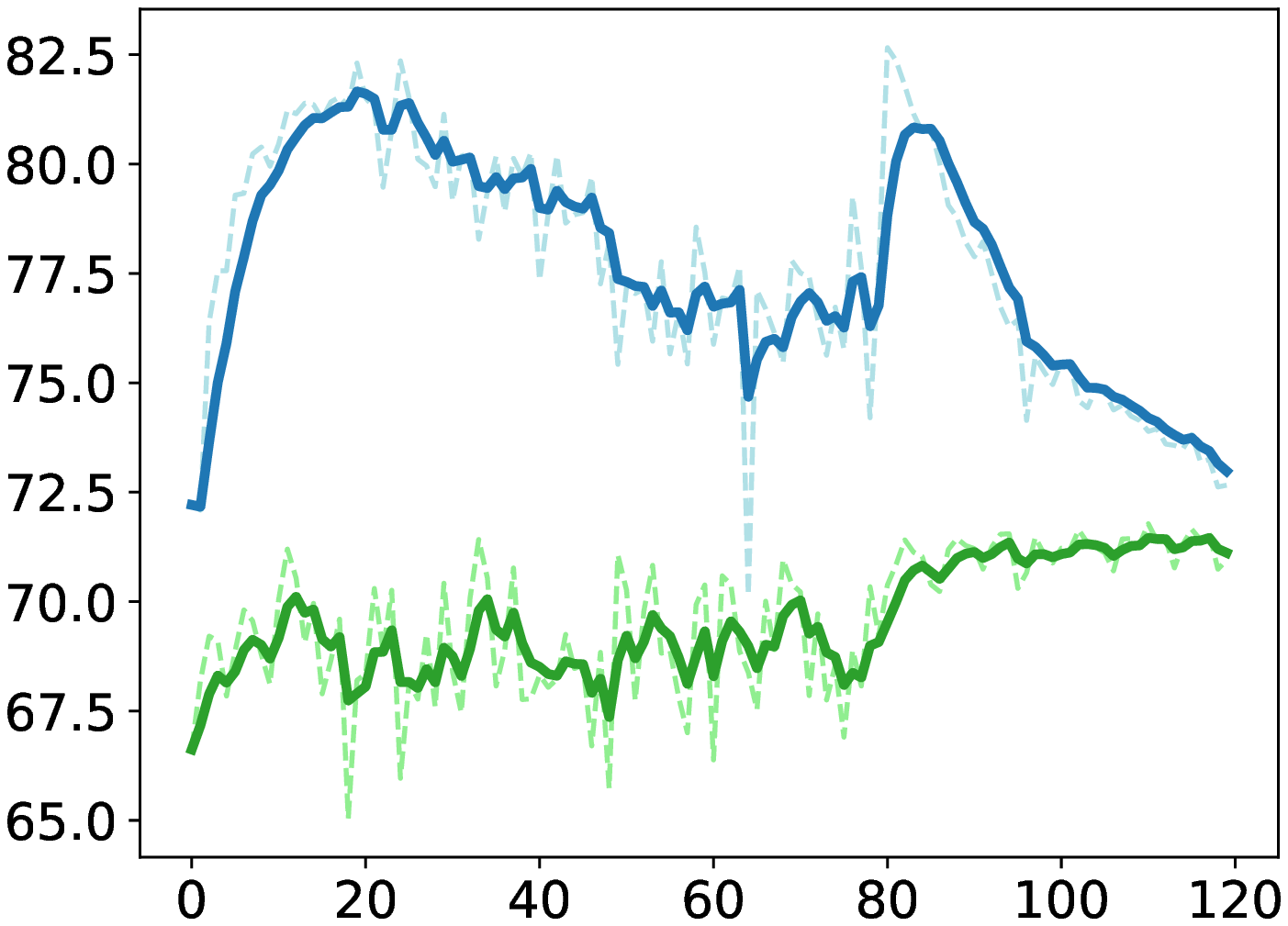}
	\includegraphics[width=.238\textwidth]{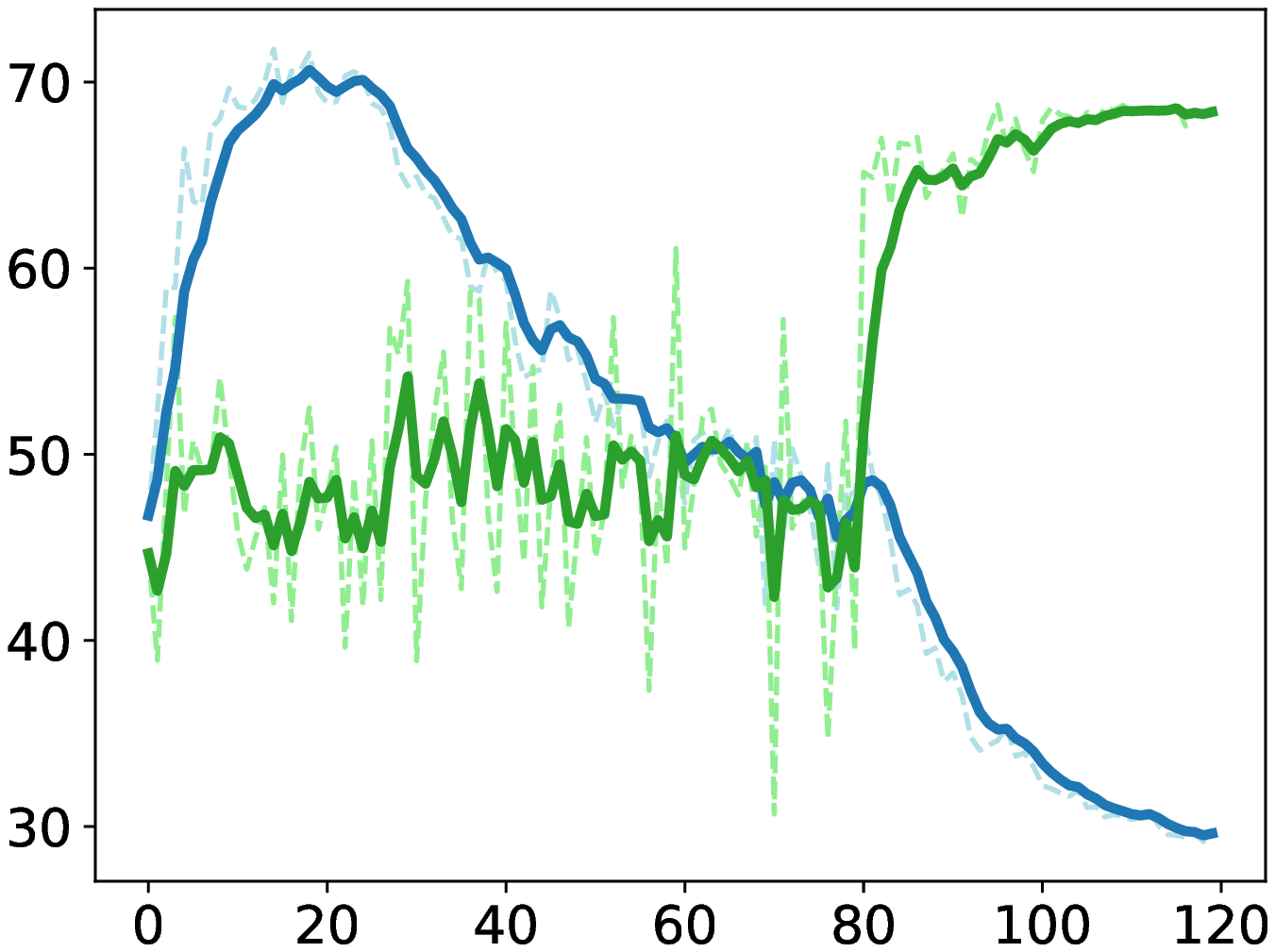}
	\includegraphics[width=.238\textwidth]{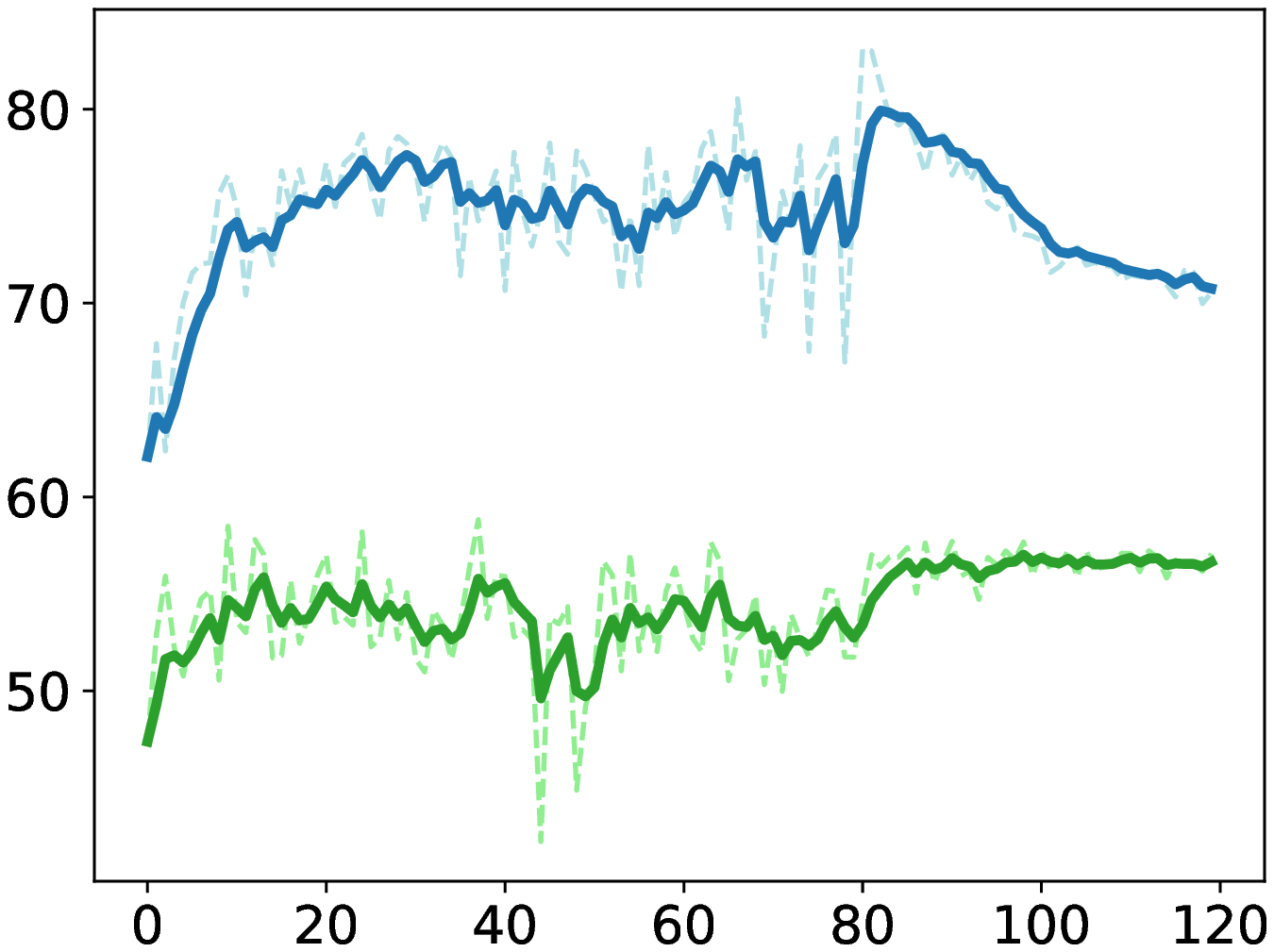}
	
	\includegraphics[width=.24\textwidth]{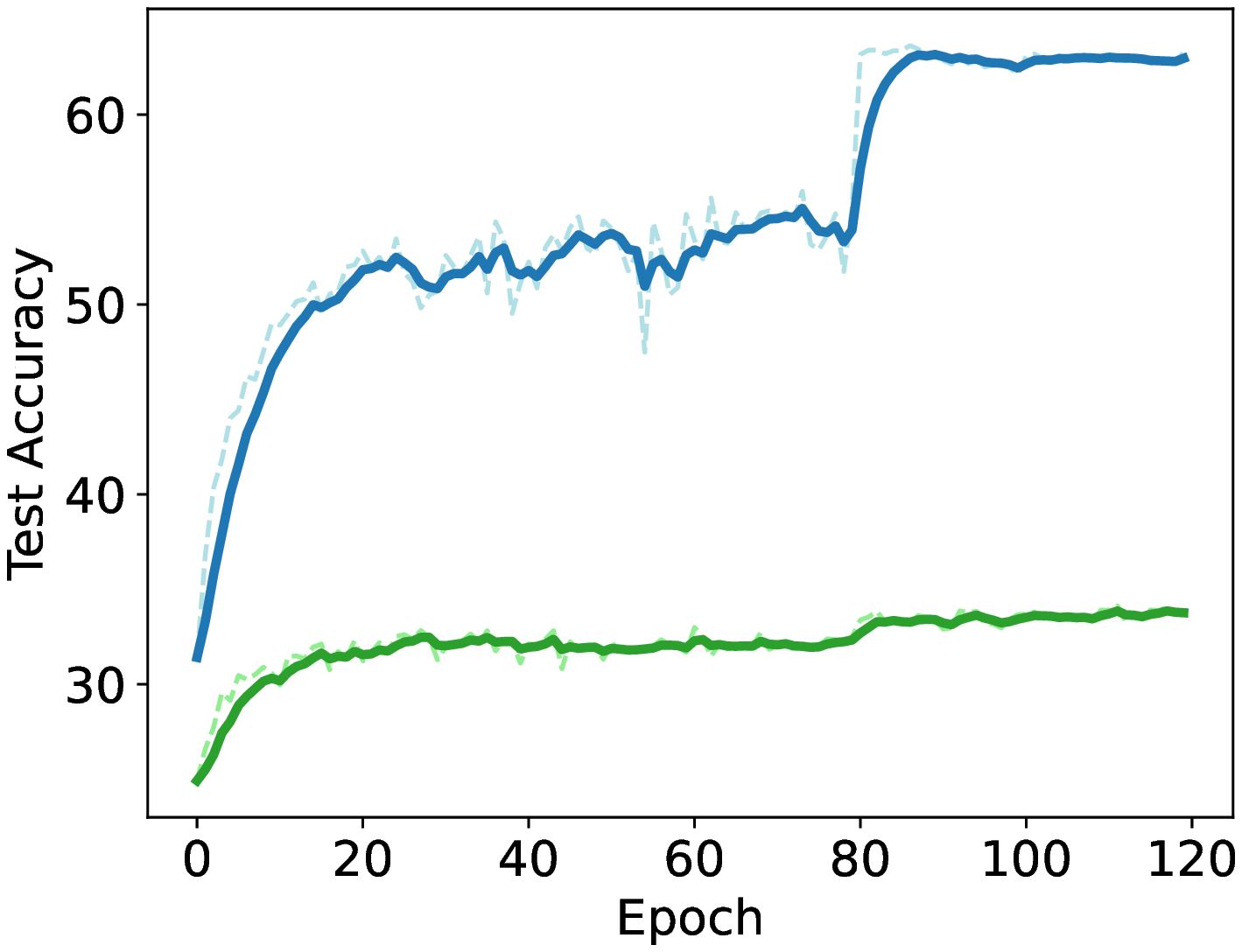}
	\includegraphics[width=.238\textwidth]{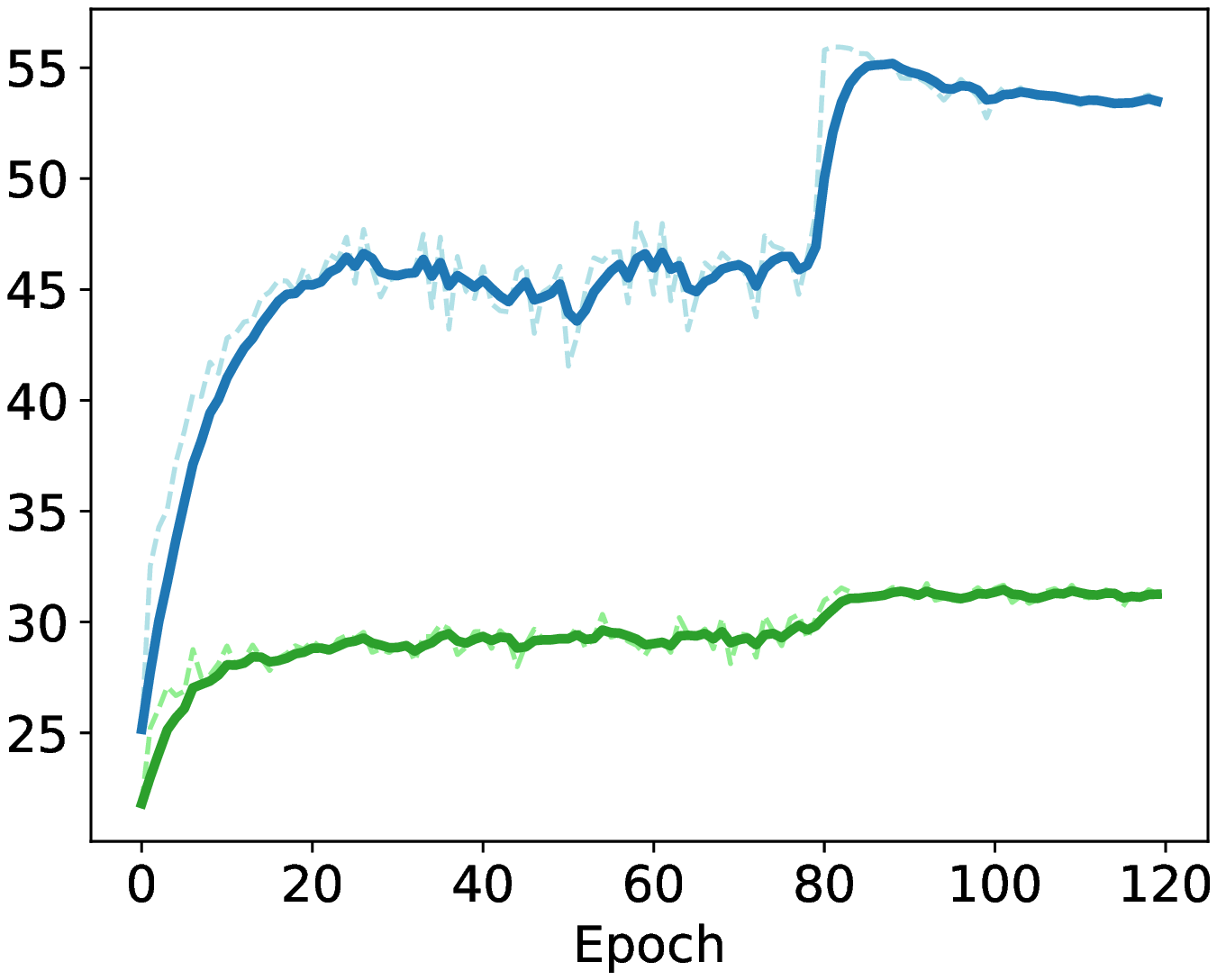}
	\includegraphics[width=.238\textwidth]{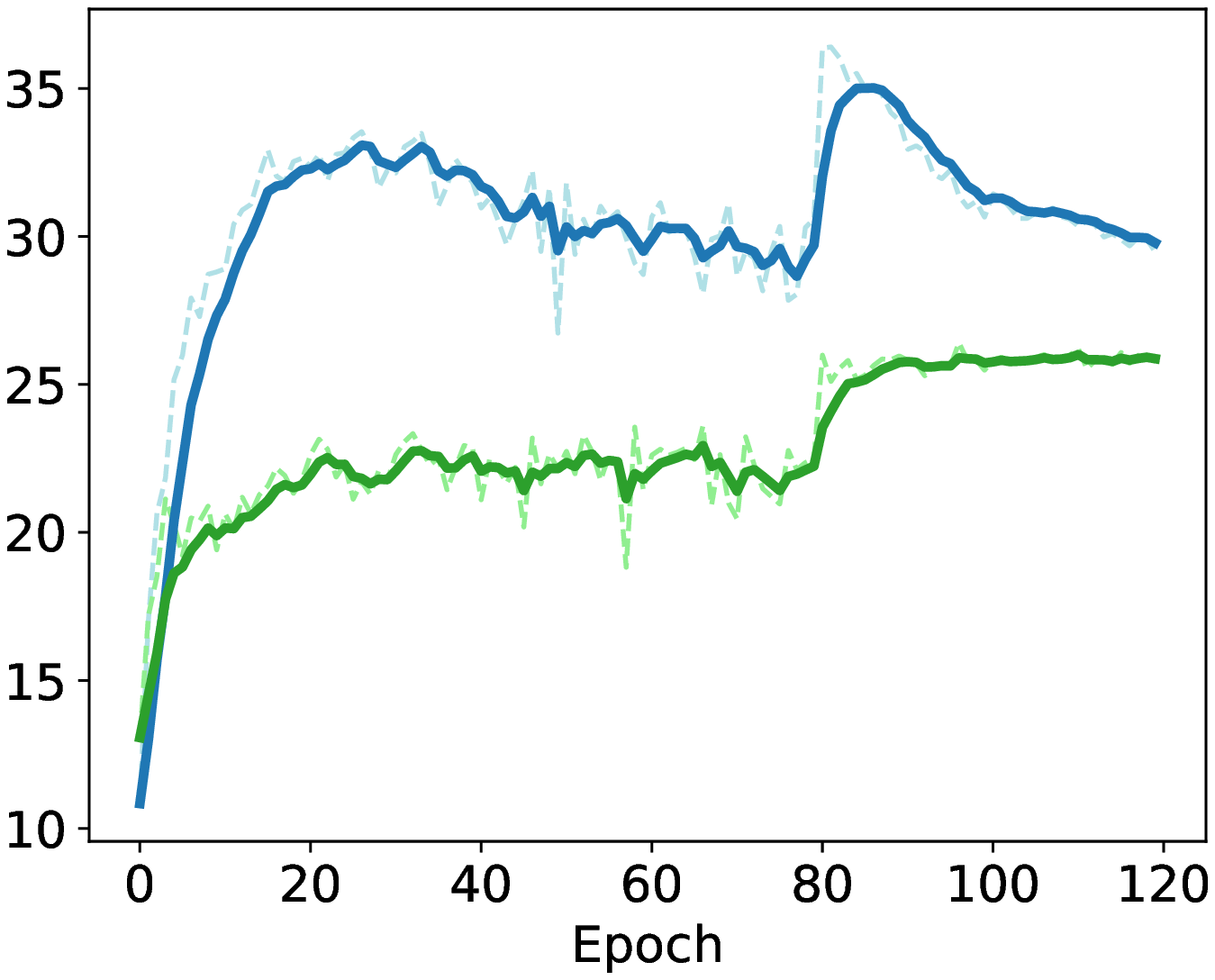}
	\includegraphics[width=.238\textwidth]{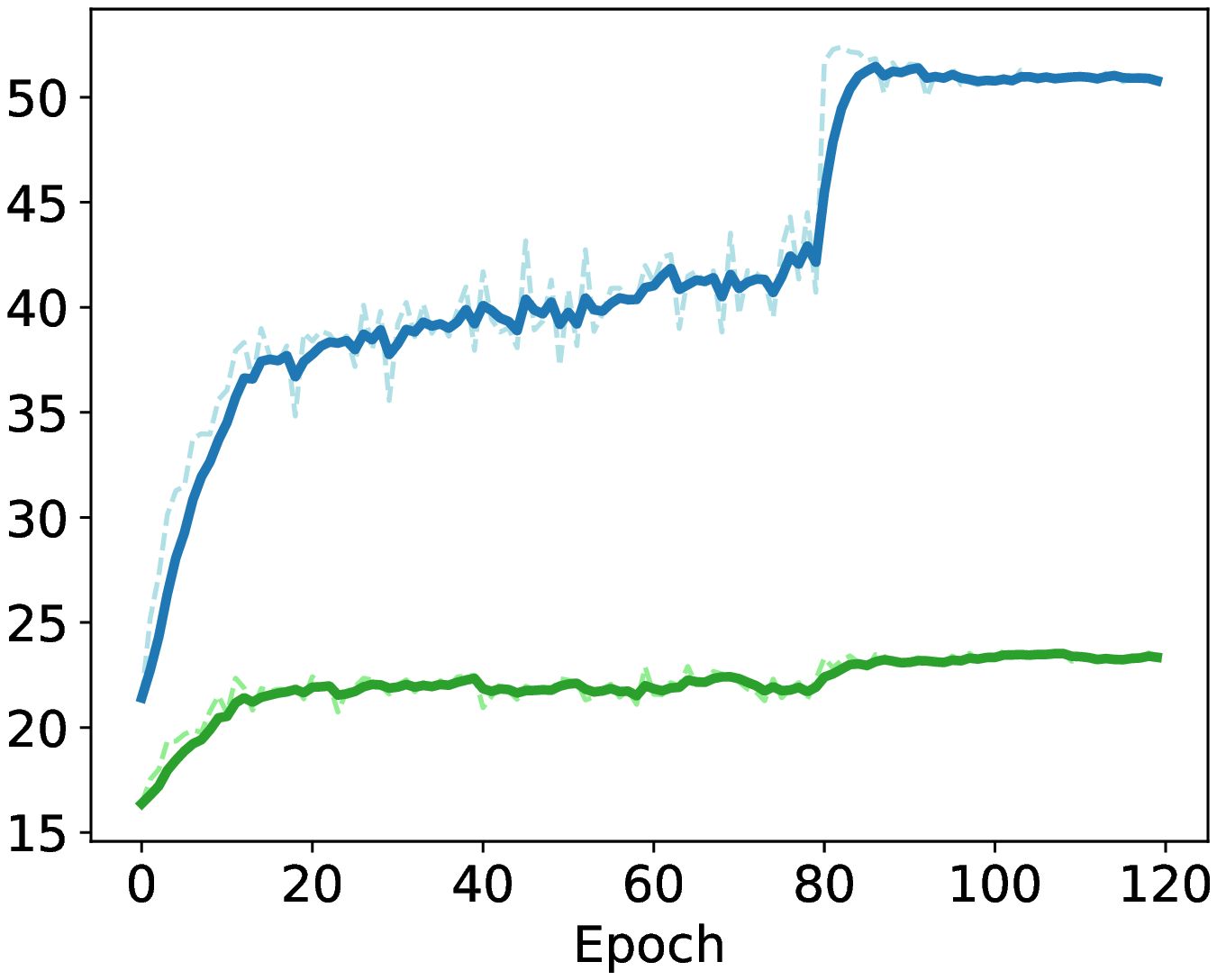}
    \caption{Test accuracy of fine-tuning a pre-trained network with frozen encoder v.s. unfrozen encoder on CIFAR-10 (top) and CIFAR-100 (bottom) under 20\%, 50\%, 80\% symmetric noise and 40\% asymmetric noise (left to right). }
    \label{fig: frz_vs_unfrz}
\end{figure*}

\begin{table}[t]
    \centering
    \begin{tabular}{|c|c|c|c|}
    \hline
         & $\|\Pi_{\mathcal{I}}(\pmb{y})\|_F$ & $\|\Pi_{\mathcal{N}}(\pmb{y})\|_F$ & $\|\pmb{J}\pmb{J}^T\pmb{y}\|_F/\|\pmb{J}\pmb{J}^T\|_F$ \\
         \hline
        Pretrained for 1000 epochs& 10.063 & 29.979 & 3.184 \\
        \hline
        Pretrained for 100 epochs & 10.036 & 29.988 & 3.175 \\
        \hline
        Randomly initialized & 10.014 & 29.995 & 3.055 \\
        \hline
    \end{tabular}
    \caption{Alignment between the Jacobian matrix and the clean labels.}
    \label{tab: projection}
\end{table}


\end{document}